\newcommand{\all}{\calK}
\newcommand{\out}{\mW_O}
\newcommand{\val}{\mW_V}
\newcommand{\key}{\mW_K}
\newcommand{\query}{\mW_Q}
\newcommand{\weights}{\mW} %
\newcommand{\bias}{\vb}
\newcommand{\ReLU}{{\mathrm{ReLU}}}
\newcommand{\SiLU}{{\mathrm{SiLU}}}
\newcommand{\relu}{{\sigma_R}}
\newcommand{\attn}{{\mathrm{Attn}}}
\newcommand{\ff}{{\mathrm{FF}}}
\newcommand{\tf}{{\mathrm{TF}}}
\newcommand{\LN}{{\mathrm{RMSN}}}%{{\mathrm{RMSLN}}}
\newcommand{\ie}{\textit{i}.\textit{e}.}
\newcommand{\st}{\textit{s}.\textit{t}.}
\newcommand{\eg}{\textit{e}.\textit{g}.}
\newcommand{\wrt}{\textit{w}.\textit{r}.\textit{t}.}
\newcommand{\calL}{{\mathcal{L}}}
\newcommand{\N}{\mathbb{N}}
\let\tn\textnormal
\newcommand{\bin}{\tn{bin}\hspace{1.5pt}}
\newcommand{\norm}[1]{\left \lVert #1 \right \rVert}
\newcommand{\calT}{{\mathrm{step}}}
\newcommand{\bB}{\bm{\mathcal{B}}}
\newcommand{\calK}{{\mathcal{K}}}
\newcommand{\calS}{{\mathcal{S}}}
\newcommand*{\zero}{{\bm 0}}
\def\eqref#1{equation~\ref{#1}}
\def\twopartref#1\val\val#2{parts \ref{#1} and \ref{#2}}
\def\1{\boldsymbol{1}}
\def\eps{{\epsilon}}
\def\rmX{{\mathbf{X}}}
\def\va{{\bm{a}}}
\def\vb{{\bm{b}}}
\def\vs{{\bm{s}}}
\def\vu{{\bm{u}}}
\def\vv{{\bm{v}}}
\def\vx{{\bm{x}}}
\def\vy{{\bm{y}}}
\def\vz{{\bm{z}}}
\def\mI{{\bm{I}}}
\def\mL{{\bm{L}}}
\def\mP{{\bm{P}}}
\def\mW{{\bm{W}}}
\def\mX{{\bm{X}}}
\def\mZ{{\bm{Z}}}
\DeclareMathAlphabet{\mathsfit}{\encodingdefault}{\sfdefault}{m}{sl}
\SetMathAlphabet{\mathsfit}{bold}{\encodingdefault}{\sfdefault}{bx}{n}
\def\sA{{\mathbb{A}}}
\def\sL{{\mathbb{L}}}
\newcommand{\R}{\mathbb{R}}
\newcommand{\Z}{\mathbb{Z}}
\newcommand{\softmax}{\sigma_s}
\DeclareMathOperator*{\argmax}{arg\,max}
\DeclareMathOperator*{\argmin}{arg\,min}
\theoremstyle{plain}
\newtheorem{theorem}{Theorem}[section]
\newtheorem{proposition}[theorem]{Proposition}
\newtheorem{lemma}[theorem]{Lemma}
\newtheorem{corollary}[theorem]{Corollary}
\theoremstyle{definition}
\newtheorem{definition}[theorem]{Definition}
\theoremstyle{remark}
\icmltitlerunning{On Expressive Power of Looped Transformers}
\begin{document}

\twocolumn[
\icmltitle{On Expressive Power of Looped Transformers:\\ Theoretical Analysis and Enhancement via Timestep Encoding}

% It is OKAY to include author information, even for blind
% submissions: the style file will automatically remove it for you
% unless you've provided the [accepted] option to the icml2025
% package.

% List of affiliations: The first argument should be a (short)
% identifier you will use later to specify author affiliations
% Academic affiliations should list Department, University, City, Region, Country
% Industry affiliations should list Company, City, Region, Country

% You can specify symbols, otherwise they are numbered in order.
% Ideally, you should not use this facility. Affiliations will be numbered
% in order of appearance and this is the preferred way.
\icmlsetsymbol{equal}{*}

\begin{icmlauthorlist}
\icmlauthor{Kevin Xu}{utokyo}
\icmlauthor{Issei Sato}{utokyo}
\end{icmlauthorlist}

\icmlaffiliation{utokyo}{The University of Tokyo}

\icmlcorrespondingauthor{Kevin Xu}{kevinxu@g.ecc.u-tokyo.ac.jp}
\icmlcorrespondingauthor{Issei Sato}{sato@g.ecc.u-tokyo.ac.jp}

% You may provide any keywords that you
% find helpful for describing your paper; these are used to populate
% the "keywords" metadata in the PDF but will not be shown in the document
\icmlkeywords{Machine Learning, ICML}

\vskip 0.3in
]

% this must go after the closing bracket ] following \twocolumn[ ...

% This command actually creates the footnote in the first column
% listing the affiliations and the copyright notice.
% The command takes one argument, which is text to display at the start of the footnote.
% The \icmlEqualContribution command is standard text for equal contribution.
% Remove it (just {}) if you do not need this facility.

\printAffiliationsAndNotice{}  % leave blank if no need to mention equal contribution
%\printAffiliationsAndNotice{\icmlEqualContribution} % otherwise use the standard text.

\begin{abstract}
Looped Transformers provide advantages in parameter efficiency, computational capabilities, and generalization for reasoning tasks. However, their expressive power regarding function approximation remains underexplored. In this paper, we establish the approximation rate of Looped Transformers by defining the modulus of continuity for sequence-to-sequence functions. This reveals a limitation specific to the looped architecture. That is, the analysis prompts the incorporation of scaling parameters for each loop, conditioned on timestep encodings. Experiments validate the theoretical results, showing that increasing the number of loops enhances performance, with further gains achieved through the timestep encoding. Code is available at~\url{https://github.com/kevin671/tmlt}.
\end{abstract}

%%%%%%%%%%%%%%%%%%%%%%%%%%%%%%%%%%%%%%%%%%%
\section{Introduction}
Transformers~\cite{Vaswani2017AttentionIA} have become the standard architecture for a wide range of machine learning tasks, including natural language processing and computer vision. However, they exhibit certain limitations, particularly when applied to complex tasks. The expressive power of Transformers is theoretically constrained~\cite{merrill2023parallelism,feng2023towards}, and they empirically struggle with reasoning and planning problems~\cite{kambhampati2024position}. Although chain-of-thought reasoning~\cite{wei2022chain} can mitigate these challenges in some cases, it typically relies on manually crafted prompts or costly intermediate supervision. Moreover, Transformers encounter difficulties with length generalization~\cite{deletang2023neural} and require substantial computational resources as the number of model parameters increases~\cite{pope2022efficiently}. 

To address these limitations, Looped Transformers presents a promising approach. The architecture consists of fixed-size Transformer layers, in which the output is recursively fed back into the input.
Looped Transformers exhibit advantages in parameter efficiency thanks to their weight-tying structure~\cite{Lan2020ALBERT,takase2021lessons,csordas2024moeut,bae2024relaxed}, achieving performance comparable to standard Transformers while using fewer parameters.
Additionally, they are well suited for size generalization by adjusting the loop count based on task complexity~\cite{dehghani2018universal,fan2024looped}. Their recursive structure endows them with the expressive power to emulate iterative algorithms and universal computational capabilities, akin to programmable computers~\cite{Giannou2023LoopedTA}. Furthermore, their inductive bias enhances performance on reasoning tasks~\cite{saunshi2025reasoning}.

In contrast, the expressive power of Looped Transformers and the properties unique to the looped architecture in function approximation remain unexplored.
The expressive power of standard Transformers, on the other hand, has been examined extensively in prior studies. These studies show that Transformers can be universal approximators for continuous permutation-equivariant functions on compact domains~\cite{Yun2020Are, kajitsuka2024are}. Furthermore, their approximation rate has been analyzed by identifying specific properties of the target functions~\cite{takakura23a, jiang2024, wang2024understanding}, providing insights into the underlying characteristics of Transformer architectures.
However, these findings cannot be directly extended due to the weight-tying constraints.
Although the approximation rate of looped ReLU networks has been established only recently~\cite{zhang23}, that of Looped Transformers remains unknown.

Our contributions are summarized as follows:
\begin{itemize}[leftmargin=*]%, nosep]
%\vspace{-\topsep}
    \item We establish the approximation rate of Looped Transformers for fixed-length continuous sequence-to-sequence functions by introducing three newly defined types of modulus of continuity.
    \item We identify an intrinsic limitation of the looped architecture and address it by introducing timestep encoding and the Timestep-Modulated Looped Transformer (TMLT).
\end{itemize}

%%%%%%%%%%%%%%%%%%%%%%%%%%%%%%%%%%%%%%%%%%%
\section{Background}
This section defines the Transformer and Looped Transformer architectures, reviews related work, and examines prior theoretical studies on the function approximation capabilities of Transformers and weight-tied networks, thereby clarifying the research question addressed in this paper.

\textbf{Notations:} Vectors are represented by lowercase boldface letters \eg, $\vv$, and matrices are denoted by uppercase boldface letters \eg, $\mX$. The $i$-th element of a vector $\vv$ is denoted by $\vv_i$, and the $(i,j)$-th element of a matrix $\mX$ is denoted by $\mX_{i,j}$. The $n$-th column of a matrix $\mX$ is denoted by $\mX_{:,n}$. 

Given an input sequence \( \mX = [\vx_1, \vx_2, \dots, \vx_N] \in \mathbb{R}^{m \times N} \), where \( \vx_i \in \mathbb{R}^m \), and a function \( f: \mathbb{R}^m \to \mathbb{R}^m \), the token-wise application of \( f \) is denoted by the bold symbol $\bm{f}$ \ie
\begin{equation*}
\bm{f}(\mX) = [f(\vx_1), f(\vx_2), \dots, f(\vx_N)] \in \mathbb{R}^{m \times N}.
\end{equation*}
For \( p \in [1, \infty) \), the \( p \)-norm, denoted by \( \norm{\cdot}_p \), represents the entry-wise \( p \)-norm. This norm applies to both vectors and matrices \eg, \( \norm{\mX}_p \). The $L^p$-norm of a function is defined for $p \in [1, \infty)$ as:
\begin{equation*}
    \| f \|_{L^p} \coloneq \Big (\int_{\Omega} \norm{f(\mX)}_p^p d\mX \Big )^{1/p},
\end{equation*}
where \(\Omega\) represents the domain of the function \(f\).

\subsection{Transformer Architecture}
% TODO: ここでdを避ける（入力の次元と同じ意味を持っている）
Given an input sequence $\mX \in \R^{m \times N}$, composed of $N$ token embedding of dimension size $m$, the self-attention layers with $h$ heads and head size $s$, and the feed-forward layer with width size $q$, are defined as follows:
\begin{align*}
    &\attn({\mX}) = \sum_{i=1}^h \out^i \val^i \mX\softmax\left[(\key^{i}\mX)^\top\query^i\mX\right], \\
    &\ff(\mX_{:,n}) = \weights_2\relu(\weights_1\mX_{:,n} + \bias_1) + \bias_2,
\end{align*}
where $\out^i \in \R^{m \times s}, \val^i,\, \key^{i},\, \query^i \in \R^{s \times m}, \weights_1 \in \R^{q \times m}, \weights_2 \in \R^{m \times q}, \bias_1 \in \R^q, \bias_2 \in \R^m$ are parameters, $\relu$ denotes $\ReLU$ function, and $\softmax$ denotes a softmax operator applied to the columns of the matrix. % In our theoretical analysis, the softmax function is constructed to approximate the hardmax function as \cite{Yun2020Are}.

Transformer block $\tf: \R^{m \times N} \to \R^{m \times N}$ is defined by
\begin{align*}
    \mX' &= \mX + \attn(\mX),\\
    \tf(\mX) &= \mX' + \bm{\ff}(\mX'),
\end{align*}
where $\bm{\ff}$ represents token-wise $\ff$. In other words,
\begin{equation*}
    \tf = (\mathrm{id} + \bm{\ff}) \circ (\mathrm{id} + \attn),
\end{equation*}
where $\mathrm{id}$ denotes the identity mapping, where we omit the domain of definition for simplicity.
For the analysis of expressive power in \Cref{sec:unversalapproximators}, we exclude layer normalization and our constructive proof relies on the softmax function to approximate the hardmax function as in previous studies~\citep{Yun2020Are,kim2023provable} 

\subsection{Looped Transformer}
Looped Transformer with a single layer is represented as:
\begin{equation*}
    \bm{\mathcal{L}_}{2} \circ \tf^{\circ r} \circ \bm{\mathcal{L}}_{1},
\end{equation*}
where $\bm{\mathcal{L}}_2$ and $\bm{\mathcal{L}}_1$ represent token-wise affine linear layers, and $\tf^{\circ r}$ denotes the composition of $\tf$ applied $r$ times. While we focus on single-layer as \cite{dehghani2018universal,Lan2020ALBERT,yang2024looped,fan2024looped}, they can also be implemented with multiple layers as \cite{csordas2024moeut,bae2024relaxed,saunshi2025reasoning}.

\paragraph{Overview of Previous Work}
The recursive structure was introduced into Transformers~\cite{dehghani2018universal}, where the number of loops can be adaptively adjusted, allowing for size generalization~\cite{fan2024length}.
Looped Transformers are closely related to weight-tying Transformers~\cite{Lan2020ALBERT,takase2021lessons}, achieving performance comparable to standard Transformers using fewer parameters.
Deep equilibrium models~\cite{bai2019deep}, which compute fixed points of iterative layers, are also related.
In addition, the recursive structure enables the model to emulate iterative algorithms, including basic computational primitives \cite{Giannou2023LoopedTA} and learning algorithms \cite{giannou2024newtons,yang2024looped}. 
Furthermore, recent studies have demonstrated that Looped Transformers exhibit an inductive bias towards reasoning tasks~\cite{saunshi2025reasoning}. To improve performance, more sophisticated architectures, such as mixture-of-experts~\cite{csordas2024moeut} and relaxed weight-tying~\cite{bae2024relaxed}, have been introduced.

%
%Despite these advancements, the function approximation capabilities of Looped Transformers remain unexplored, which serves as the motivation for this study.

\subsection{Theoretical Analysis on Expressive Power}
We review related work and summarize the comparisons between our problem setting and previous studies in Table~\ref{tbl:comparison}.

\begin{table*}[t]
\caption{Comparisons of our problem setting with related work on the theoretical analysis of function approximation.}
\label{tbl:comparison}
\vskip 0.15in
\begin{center}
\begin{small}
\begin{tabular}{c|cccc}
\toprule
Paper & Model Type & Function Class & Approximation Rate & Looped (Weight-Tying) \\ 
\hhline{=|====}
\citet{yarotsky2018optimal} & FFN & Continuous functions & $\checkmark$ & $\times$ \\
\hline
\citet{Yun2020Are} & Transformer & Continuous seq-to-seq functions & $\times$ & $\times$ \\
\hline
\citet{takakura23a} & Transformer & $\gamma$-smooth infinite-length & $\checkmark$ & $\times$ \\
\hline
\citet{kajitsuka2024are} & Transformer & Continuous seq-to-seq functions & $\times$ & $\times$ \\
\hline
\citet{jiang2024} & Transformer & Temporal coupled functions & $\checkmark$ & $\times$ \\
\hline
\citet{wang2024understanding} & Transformer & Long but sparse memories & $\checkmark$ & $\times$ \\
\hline
\citet{zhang23} & FFN & Continuous functions & $\checkmark$ & $\checkmark$ \\
\hline
\textbf{Ours} & Transformer & Continuous seq-to-seq functions & $\checkmark$  & $\checkmark$ \\
\bottomrule
\end{tabular}
\end{small}
\end{center}
\vskip -0.1in
\end{table*}
\paragraph{Universality of Transformers}
The universal approximation theorem for fully connected neural networks~\cite{cybenko1989, hornik1989multilayer} shows that networks of sufficient size can approximate certain classes of functions with arbitrarily low error. 
For Transformers, the target function class extends to sequence-to-sequence functions. Transformers compute a {\em contextual mapping} of the input, which requires capturing the entire sequence and computing the token embedding within context~\cite{Yun2020Are}, formulated as:
\begin{definition}[\citealp{Yun2020Are}] \label{def:context-mapping}
Consider a finite set $\sL \subset \R^{d \times N}$.
A map $\mathrm{CM}: \sL \to \R^{1 \times N}$ defines a {\em contextual mapping} if the map satisfies the following:
\begin{enumerate}[leftmargin=*, nosep]
    \item For any $\mL \in \sL$, the $N$ entries in $\mathrm{CM}(\mL)$ are all distinct.
    \item For any $\mL, \mL' \in \sL$, with $\mL \neq \mL'$, all entries of $\mathrm{CM}(\mL)$ and $\mathrm{CM}(\mL')$ are distinct.
\end{enumerate}
\end{definition}
Prior studies have shown that Transformers can compute contextual mappings, enabling memorization~\cite{kim2023provable} and universal approximation~\cite{Yun2020Are, kajitsuka2024are}.

For Looped Transformers, as the fixed parameters of a single Transformer layer are used, the results of previous studies cannot be directly applied. This leads to the question: \textit{Can Looped Transformers compute contextual mappings?} and \textit{Are they universal approximators?}

\paragraph{Approximation Rate of Transformers}
Beyond the universality, the approximation rate provides deeper insights into the characteristics of models \cite{barron1993universal, yarotsky2018optimal}. This rate is derived as an upper bound of error in terms of the properties of the target functions and the complexity of the networks. For Transformers, recent studies have investigated these rates and the nature of the target functions~\cite{takakura23a,jiang2024,wang2024understanding}. Specifically, they have shown conditions under which Transformers can overcome the curse of dimensionality~\cite{takakura23a} and revealed structures in target functions that Transformers can effectively approximate~\cite{jiang2024,wang2024understanding}.

Our study focuses on understanding the architectural properties of Looped Transformers, particularly in comparison to standard Transformers. To this end, we explore the approximation rate and investigate the properties of target functions that determine their approximation errors.
%
%In addition, by deriving the approximation rate, we investigate the following questions: What are the intrinsic limitations of Looped Transformers? Furthermore, can these challenges be mitigated?

\paragraph{Expressive Power of Weight-Tied Neural Networks} 
Recently, it has been shown that single \emph{fixed-size} networks can serve as universal approximators in a parameter-efficient manner; that is, the parameter count depends solely on the input dimension, not the approximation error~\cite{zhang23}. 
Furthermore, the approximation rate of weight-tied ReLU networks has been established with respect to the number of loops and the modulus of continuity of continuous functions~\cite{zhang23}. The modulus of continuity for \( g: \mathbb{R}^d \to \mathbb{R} \) and \( \delta \geq 0 \) is defined as:
\begin{equation*}\label{eq:modul_g}
    \omega_{g}(\delta) \coloneq \sup \big\{ \lvert g(\vx) - g(\vx') \rvert : \|\vx - \vx'\|_2 \leq \delta\big\}.
\end{equation*}
Our question is whether the results can be extended to sequence-to-sequence functions and Transformers, which require contextual mappings. For a sequence-to-sequence function $f:\R^{d \times N} \to \R^{d \times N}$, the modulus of continuity can be generalized as:
\begin{equation*}\label{eq:modul_f}
\omega_{f}(\delta) \coloneqq \sup\big\{ \|f(\mX) - f(\mX')\|_p : \|\mX - \mX'\|_2 \le \delta \big\}
\end{equation*}
We investigate whether this modulus of continuity alone can determine the approximation rate. %, as in looped ReLU networks, despite architectural and functional differences.

For Looped Transformers, it has been shown that they can represent standard Transformers, although their parameter count grows with both the desired approximation accuracy and the sequence length~\cite{saunshi2025reasoning}. Moreover, no existing work has established their approximation rate. %, %of fixed-size Looped Transformers regarding their loop count.
%This leads to our central question: \textit{Can fixed-size Looped Transformers approximate functions?}

%%%%%%%%%%%%%%%%%%%%%%%%%%%%%%%%%%%%%%%%%%%
\section{Approximation Rate Analysis}\label{sec:unversalapproximators}
In this section, we establish the approximation rate of Looped Transformers. We define three types for the modulus of continuity in~\Cref{subsec:continuity} that determine the approximation rate. The main results are presented in~\Cref{subsec:mainresult}, followed by a proof sketch in \Cref{sec:proof_sketch}.

\subsection{Preliminaries}
The target function class of our analysis is continuous functions that Transformers can represent. Specifically, these are {\em permutation-equivariant} functions, defined as follows:
\begin{definition}[\citealp{Yun2020Are}] 
A function $f : \R^{d \times N} \to \R^{d \times N}$ is said to be {\em permutation equivariant} if $f(\mX \mP) = f(\mX) \mP$ holds for any permutation matrix $\mP$. Let $\mathcal{F}_{\mathrm{PE}}(\Omega)$ denote the set of continuous functions, defined on $\Omega$, that are permutation equivariant.
\end{definition}

%In addition to the parameter complexity, 
We evaluate both the number of parameters and the \textit{bit complexity}, the maximum number of bits required to represent the network's weights~\cite{vardi2022on,kim2023provable}. %Our results show both the number of parameters and the bit complexity.

In our proofs, we introduce \textit{IDs} for tokens, sequences, and tokens within sequences as theoretical constructs to formalize contextual mappings.
\begin{definition} 
A \textit{token ID} is a unique integer assigned to each token. A \textit{sequence ID} uniquely identifies each sentence. A \textit{contextual token ID} uniquely identifies a specific token within a specific sentence. We denote the set of contextual token IDs as $\mathcal{K} = {0, 1, \dots, K-1}$, with corresponding embeddings $\vy_k \in \mathbb{R}^d$ for each $k \in \mathcal{K}$. 
\end{definition}
This notion is defined in \citet{kim2023provable}, to which we refer for further details, for constructive proofs of contextual mappings. The actual construction of contextual token IDs may vary depending on the specific proof. In our case, we adopt a different construction from that of \citet{kim2023provable}.

\subsection{Definition of modulus of Continuity}
\label{subsec:continuity}
As briefly mentioned in the preliminary discussion, we define the modulus of continuity in Eq.~\ref{eq:modul_f} as:
\begin{definition}[Modulus of Sequence Continuity]
\label{def:sentence_continuity}
Given a sequence-to-sequence continuous function $f:\R^{d \times N} \to \R^{d \times N}$, the modulus of \emph{sequence continuity} is defined by:
\begin{equation*}
    \omega_{f}(\delta) \coloneqq \sup\big\{ \|f(\mX) - f(\mX')\|_p : \|\mX - \mX'\|_2 \le \delta \big\}.
\end{equation*}
\end{definition}
We omit the subscript $p$ for simplicity. This quantifies how the output sequence shifts relative to differences in input, hence referred to as \textit{sequence continuity}.

We found that this alone is insufficient to determine the approximation rate of Looped Transformers, in contrast to the case of ReLU networks~\cite{zhang23}. Informally, this issue arises because Transformers compute contextual mappings. % in a token-wise manner, rather than on the entire sequence. 
We notably identified two additional types of modulus of continuity, defined as follows.
\begin{definition}[Modulus of Contextual Continuity]
Given a sequence-to-sequence continuous function $f:\R^{d \times N} \to \R^{d \times N}$, the modulus of \emph{contextual continuity} is defined by:
\begin{equation*}
\begin{split}
    \omega^{\text{cont}}_f(\delta)\coloneqq &\sup_{n, \mX, \mX'}\big\{\|f(\mX)_{:,n}-f(\mX')_{:,n}\|_p \\ 
    &\quad :\|\mX-\mX'\|_2\le \delta,\ \mX_{:,n} = \mX'_{:,n} \big\}.
\end{split}
\end{equation*}
\end{definition}
\begin{definition}[Modulus of Token Continuity]
Given a sequence-to-sequence continuous function $f:\R^{d \times N} \to \R^{d \times N}$, the modulus of \emph{token continuity} is defined by: % TODO: [0,1]に限定しない
\begin{equation*}
\begin{split}
    &\omega^{\text{tok}}_f(\delta)\coloneqq \sup_{n, \mX, \mX'}\big\{\|f(\mX)_{:,n}-f(\mX')_{:,n}\|_p: \\
    &\|\mX_{:,n}-\mX'_{:,n}\|_2\le \delta, \ \mX_{:,q} = \mX'_{:,q} \ (\forall q \neq n)\big\}.
\end{split}
\end{equation*}
\end{definition}
The \emph{modulus of contextual continuity} quantifies the variation in the contextual token embeddings induced by perturbations of context. For example, consider the sentences: (1) ``I \underline{write} papers'' and (2) ``You \underline{write} books''. It measures the difference in the contextual token embedding of the same word `\underline{write}' within different contexts.

On the other hand, the \emph{modulus of token continuity} quantifies the variation in the output embedding caused by perturbations to the token itself within the same context such as (1) ``I \underline{write} papers'' and (2) ``I \underline{draft} papers''.

\subsection{Main Result}\label{subsec:mainresult}
The result establishes the approximation rate of Looped Transformers in terms of the number of loops and the three types of moduli of continuity of the target function.
\begin{restatable}{theorem}{universal}
\label{thm:universal}
Given a function $f \in \mathcal{F}_{\mathrm{PE}}([0,1]^{d \times N})$, $r > N$, there exists a Looped Transformer, composed of $\mathrm{TF}:\R^{(17d+9)\times N}\to\R^{(17d+9)\times N}$ with two heads, head size $1$, and width size of $q=49d+25$, and two affine linear maps $\mathcal{L}_1:\R^d\to\R^{17d+9}$ and $\mathcal{L}_2:\R^{17d+9}\to\R^d$ \st
\begin{equation*}
\begin{split}
    &\big\|\bm{\mathcal{L}}_2\circ \mathrm{TF}^{\circ r}\circ \bm{\mathcal{L}}_1-f\big\|_{L^p} \\
    &\leq (Nd)^{\frac{1}{p}} \Big(\omega^{\textnormal{tok}}_f(\delta\sqrt{d}) + \omega^{\textnormal{cont}}_f(\delta\sqrt{Nd}) \Big) + \omega_f(\delta \sqrt{Nd}) \\ & \qquad \qquad+ \mathcal{O}(N^{2/p} \delta^{d/p}) + \mathcal{O}\left(\left( (M\delta)^{-1} dN \right)^{1/p}\right), \\ & \qquad  \text{where } \delta = \big((r-N)/2\big)^{-1/((N+1)d+1)},
\end{split}
\end{equation*}
where $M$ is the maximum absolute value of the model parameters, and the bit complexity is $\mathcal{O}(\delta^{-(N+1)d})$.
\end{restatable}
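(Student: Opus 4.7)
The plan is to adapt the contextual-mapping framework of \citet{Yun2020Are} and \citet{kim2023provable} to the weight-tied setting. In those works, \emph{different} Transformer blocks play different roles (quantization, context hashing, value memorization); here the \emph{same} block $\tf$ must play every role, with the role at iteration $t$ distinguished only by state that earlier iterations have written into the embedding. The pipeline has four conceptual stages: (i) quantize $\mX\in[0,1]^{d\times N}$ onto a $\delta$-grid, (ii) use a prefix of the $r$ loops to assign each position a unique contextual token ID $k\in\calK$, (iii) use the remaining loops to memorize the target values, and (iv) collect the resulting errors.

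For the error decomposition I would run a hybrid argument. Writing $\bar{\mX}$ for the grid-rounded version of $\mX$, each column satisfies $\|\mX_{:,n}-\bar{\mX}_{:,n}\|_2\le\delta\sqrt{d}$ and the whole matrix $\|\mX-\bar{\mX}\|_2\le\delta\sqrt{Nd}$. Swapping columns one at a time costs $\omega^{\textnormal{tok}}_f(\delta\sqrt{d})$ per swap by token continuity, swapping the context around a fixed column costs $\omega^{\textnormal{cont}}_f(\delta\sqrt{Nd})$ by contextual continuity, and the residual discrepancy between the piecewise-constant network output on $\bar{\mX}$ and the true $f(\mX)$ contributes $\omega_f(\delta\sqrt{Nd})$ by sequence continuity. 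The $(Nd)^{1/p}$ prefactor on the first two moduli arises from aggregating per-coordinate errors into the $L^p$-norm over the unit cube.

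The core construction is the looped contextual mapping. I would reserve $16d+9$ of the $17d+9$ coordinates as scratchpad and design $\tf$ so that in each iteration every token incorporates one additional piece of the surrounding context into its embedding, building up a partial hash of the quantized sequence. After $N$ such iterations every position holds a unique element of $\calK$ with $|\calK|=\mathcal{O}(\delta^{-(N+1)d})$, in a way that remains compatible with the permutation-equivariant structure of $\tf$. The remaining $r-N$ loops then implement looped piecewise-constant memorization in the spirit of \citet{zhang23}: each iteration installs one step of the approximation indexed by the current contextual ID, and splitting the iterations into two halves lets $(r-N)/2$ iterations cover all $|\calK|$ entries, which yields the stated relation $\delta=((r-N)/2)^{-1/((N+1)d+1)}$ and bit complexity $\mathcal{O}(\delta^{-(N+1)d})$.

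Finally, the two residual $\mathcal{O}$ terms come from standard softening. The $\mathcal{O}(N^{2/p}\delta^{d/p})$ term is a union bound over the $\mathcal{O}(N^2)$ grid-face strips of Lebesgue measure $\mathcal{O}(\delta^d)$ on which the piecewise-constant map can round to a wrong cell; the $\mathcal{O}(((M\delta)^{-1}dN)^{1/p})$ term captures the softmax--hardmax gap when attention logits are bounded by $M$, leaving mass $\mathcal{O}((M\delta)^{-1})$ per attention call and accumulating over the $\mathcal{O}(dN)$ scalar outputs. The main obstacle will be the contextual-mapping construction: forcing a single block with only two heads of head size $1$ to emulate a heterogeneous pipeline means each ``phase'' must be selected by gates that read state written into the working data rather than by a change of parameters, and this self-indexing is what drives the precise auxiliary-dimension count $17d+9$. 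It is also what the timestep-encoding augmentation introduced later in the paper is designed to replace with a cleaner external conditioner.
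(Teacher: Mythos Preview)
Your three-stage pipeline (quantize $\to$ contextual mapping via $N$ loops $\to$ looped memorization in the style of \citet{zhang23}) matches the paper, and the loop accounting behind $\delta=((r-N)/2)^{-1/((N+1)d+1)}$ is essentially right. But your error decomposition misidentifies where $\omega^{\text{tok}}_f$ and $\omega^{\text{cont}}_f$ enter, and this is the crux of the proof. They do \emph{not} come from a hybrid column-swapping argument on the quantization step: if memorization were exact the network would output $f(\hat\mX_{\bB})$ and the only error would be $\omega_f(\delta\sqrt{Nd})$---your swaps would simply telescope to that. The extra moduli appear because looped memorization is \emph{not} exact. Lemma~\ref{lemma:memorize} shows that a weight-tied feed-forward block iterated $K{-}1$ times can realize $k\mapsto\vy_k$ only up to $\max_{k'}|(\vy_{k'}-\vy_{k'-1})_i|$, the worst jump between \emph{adjacent} contextual IDs; this is precisely the looped-architecture limitation that the timestep encoding later removes. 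Controlling that jump requires substantial combinatorial work you have not planned: the paper designs the ordering of contextual IDs so that consecutive IDs differ by a single-token or single-context perturbation, and it inserts $\delta^{-1}$ \emph{dummy} IDs with linearly interpolated targets wherever a carry in the base-$\delta^{-1}$ representation would otherwise force a large jump. Only after this ordering/interpolation step does $\max_{k'}\|\vy_{k'}-\vy_{k'-1}\|_p\le\omega^{\text{tok}}_f(\delta\sqrt{d})+\omega^{\text{cont}}_f(\delta\sqrt{Nd})$ hold, and the $(Nd)^{1/p}$ factor then comes from the per-entry to matrix $p$-norm inequality, not from accumulating swaps.

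Two smaller misattributions: the $\mathcal{O}(N^{2/p}\delta^{d/p})$ term is not a union bound over grid-face strips but the measure of sequences in which two tokens fall in the same quantization cell, which must be discarded because the contextual-mapping construction requires all $N$ token IDs to be distinct. And $\mathcal{O}(((M\delta)^{-1}dN)^{1/p})$ is not the softmax--hardmax gap; it is the measure of the ``trifling region'' where the bounded-weight ReLU realization of the quantization step function in Step~1 rounds incorrectly (the $dN$ comes from a union bound over coordinates, not over attention outputs). The softmax gap is handled separately and shown to contribute only to bit complexity, not to the $L^p$ error.
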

\Cref{thm:universal} shows that increasing the number of loops $r$ reduces the approximation error. Under infinite-precision weights, this leads to a universal approximation theorem.
\begin{corollary}[Universality]
The hypothesis space of Looped Transformers, defined by
\begin{equation*}
\begin{split}
    \mathcal{H} \coloneqq &\big\{\bm{\mathcal{L}_2} \circ \tf^{\circ r} \circ \bm{\mathcal{L}_1}:[0,1]^{d \times N} \to [0,1]^{d \times N} \mid \\ & \ m, q\le Cd,\ h=2, \ s=1, \ r \in \mathbb{N},\, \mW \in \mathbb{R}^{n_w} \big\},
\end{split}
\end{equation*}
where $C$ is a positive constant, \(\mW\) denotes the flattened set of all weights in the network, and \(n_w\) represents the total number of these weights,
is dense in $\mathcal{F}_{\mathrm{PE}}([0,1]^{d \times N})$ \wrt the $L^p$ norm.
\end{corollary}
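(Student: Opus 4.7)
The plan is to deduce the corollary directly from Theorem~\ref{thm:universal} by driving each term of its error bound to zero. Fix $\epsilon > 0$ and $f \in \mathcal{F}_{\mathrm{PE}}([0,1]^{d \times N})$. Since $[0,1]^{d \times N}$ is compact and $f$ is continuous, $f$ is uniformly continuous on it, so the three moduli of continuity $\omega_f$, $\omega^{\mathrm{tok}}_f$, and $\omega^{\mathrm{cont}}_f$ all tend to zero as their arguments tend to $0^+$.

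First, I would verify that every construction supplied by Theorem~\ref{thm:universal} actually lies inside $\mathcal{H}$. The embedding dimension $m = 17d + 9$ and feed-forward width $q = 49d + 25$ are both bounded by $Cd$ for a fixed constant $C$ sufficiently large (for instance, $C = 74$ suffices whenever $d \geq 1$), while the head count $h=2$, head size $s=1$, and loop count $r \in \mathbb{N}$ match the defining constraints of $\mathcal{H}$ exactly. Thus each Looped Transformer produced by the theorem for $r > N$ is a member of $\mathcal{H}$.

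Next, since $\delta = ((r-N)/2)^{-1/((N+1)d+1)} \to 0$ as $r \to \infty$, I would choose $r$ large enough that the sum of the continuity-based terms
\[
(Nd)^{1/p}\bigl(\omega^{\mathrm{tok}}_f(\delta\sqrt{d}) + \omega^{\mathrm{cont}}_f(\delta\sqrt{Nd})\bigr) + \omega_f(\delta\sqrt{Nd}),
\]
together with the algebraic term $\mathcal{O}(N^{2/p}\delta^{d/p})$, falls below $\epsilon/2$. Uniform continuity handles the three moduli, and $\delta^{d/p} \to 0$ handles the algebraic term directly.

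The final term $\mathcal{O}(((M\delta)^{-1}dN)^{1/p})$ involves the maximum parameter magnitude $M$ and is a finite-precision artifact of Theorem~\ref{thm:universal}. Crucially, the hypothesis class $\mathcal{H}$ places no upper bound on $\mW \in \mathbb{R}^{n_w}$, so with $\delta$ (hence $r$) now fixed I would choose $M$ large enough to push this term below $\epsilon/2$, yielding total $L^p$ error below $\epsilon$. The main subtlety I expect is precisely this $M$-dependent term: it is what forces the corollary to rely on the infinite-precision freedom of $\mathcal{H}$ rather than on the quantized weights implicit in the stated bit complexity. Once that point is acknowledged, the rest is a routine limit computation using uniform continuity on the compact cube $[0,1]^{d \times N}$.
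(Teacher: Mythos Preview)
Your proposal is correct and matches the paper's own reasoning: the paper does not supply a separate proof for this corollary but simply remarks that ``under infinite-precision weights, this leads to a universal approximation theorem,'' and your argument is exactly the unpacking of that remark---drive $\delta\to 0$ by sending $r\to\infty$ to kill the continuity and $\delta^{d/p}$ terms, then use the unbounded-weight freedom $\mW\in\mathbb{R}^{n_w}$ to send $M\to\infty$ and kill the remaining $\mathcal{O}\bigl(((M\delta)^{-1}dN)^{1/p}\bigr)$ term. Your check that $m=17d+9$ and $q=49d+25$ fit under $Cd$ is the only bookkeeping the paper leaves entirely implicit, and you handle it correctly.
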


This approximation analysis highlights the characteristics of Looped Transformers, including both their capabilities and limitations, as summarized below:
\begin{itemize}[leftmargin=*]
    \item While the number of parameters remains fixed at $O(d)$, independent of the desired approximation accuracy and the sequence length, the error can be reduced by increasing the number of loops.
    \item Looped Transformers, even with weight-tied self-attention using a hard-max function, can compute contextual mappings and become universal approximators.
    \item The approximation rate depends on three types of continuity, with contextual and token dependencies unique to Looped Transformers; these dependencies are not present in standard Transformers or looped ReLU networks.
\end{itemize}
Our contribution lies in establishing the approximation rate with respect to the number of loops, based on novel moduli of continuity that are unique to Looped Transformers.

Furthermore, the additional dependency can amplify the approximation error, revealing a limitation inherent to Looped Transformers. A detailed discussion of this issue, along with improvement methods, is provided in \Cref{sec:lim}.

\subsection{Proof Sketch}\label{sec:proof_sketch}
This section presents a proof sketch, emphasizing distinctions from prior studies and challenges unique to the looped architecture. The formal proof is provided in~\Cref{appendix:function}.

The basic strategy involves approximating the continuous target function $f$ with a piecewise constant function $\bar{f}$, which is approximated by the network, denoted by $\tilde{f}$. For $\delta^{-1} \in \mathbb{N},\ \delta^{-1}\ge2$, the input space $[0,1]^{d \times N}$ is divided into discretized cubes with width $\delta$, denoted by $\{Q_{\bB}\}_{\bB \in \{0,1,\dotsc,\delta^{-1} - 1\}^{d\times N}}$. Each cube is assigned a representative point $\hat{\mX}_{\bB} \in Q_{\bB}$, and the piecewise constant function $\bar{f}$ is then defined as:
\begin{equation}\label{eq:bB}
    \bar{f}(\mX) = f(\hat{\mX}_{\bB}) \quad \text{where } \bB \text{ satisfies } \mX \in Q_{\bB}.
\end{equation}
The approximation with networks consists of three steps. First, the network assigns a \textit{token ID} to each token. Second, it assigns a \textit{sequence ID}. The combination of the \textit{token ID} and \textit{sequence ID} constitutes the \textit{contextual token IDs} as in Fig.~\ref{fig:tok-seq}. Finally, these are mapped to embeddings that represent the output of the target function at each token.

\begin{figure}[t]
\begin{center}
\centerline{\includegraphics[width=\columnwidth]{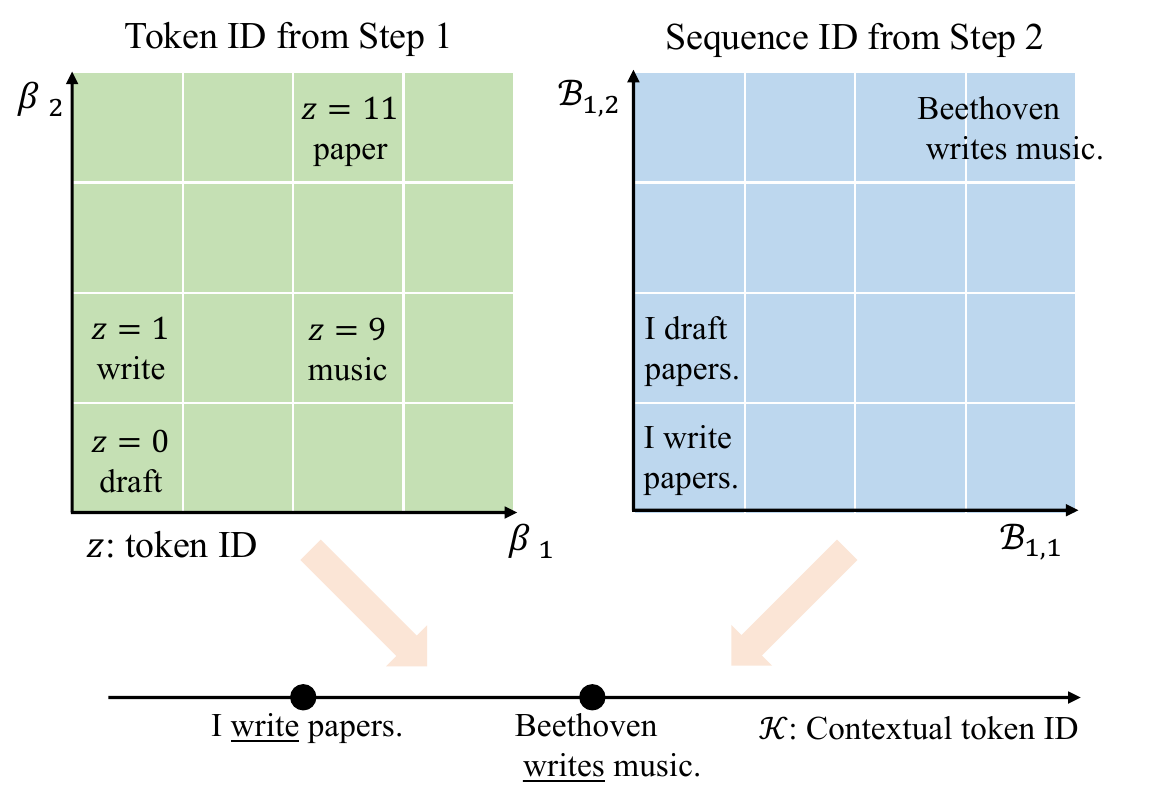}}
\vskip -0.1in
\caption{The networks construct contextual token IDs by combining token IDs with sequence IDs.}
\label{fig:tok-seq}
\end{center}
\vskip -0.2in
\end{figure}

\begin{figure*}[t]
\begin{center}
    \includegraphics[width=\linewidth]{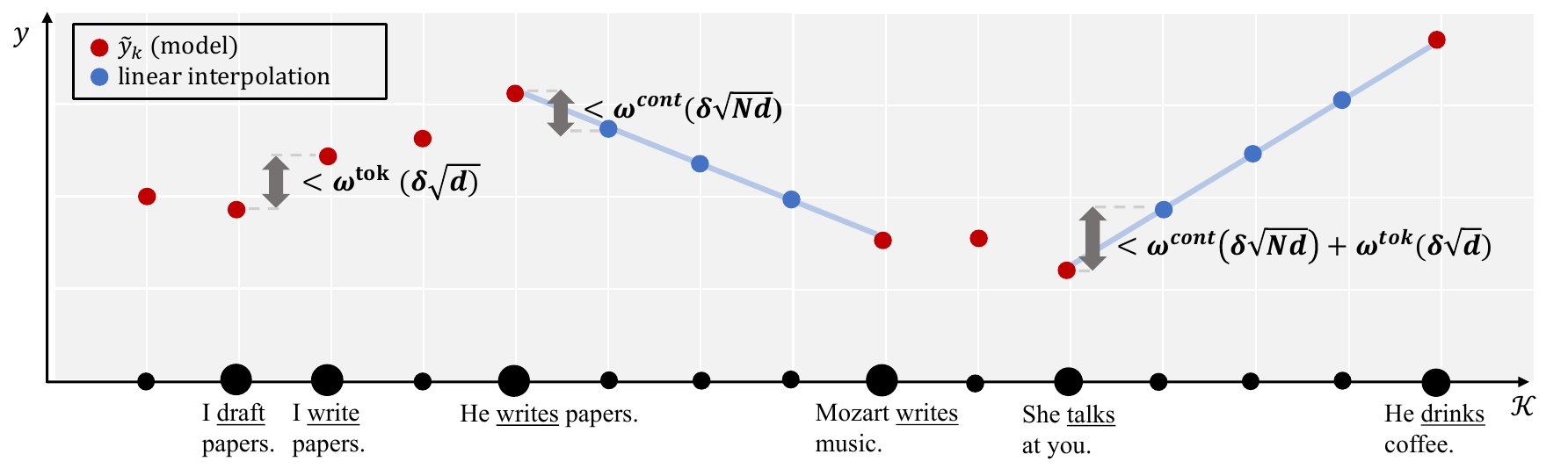}
\end{center}
\vskip -0.2in
\caption{Approximation error and modulus of continuity. The linear interpolation technique reduces the error by a factor of $1/\delta^{-1}$.}\label{fig:rate}
\end{figure*}

\paragraph{Step 1.~Token-wise Quantization.}
The network uses the feed-forward network to assign each input token, denoted by $\mX_{:, n}$, to a token ID, denoted by $z$, in a token-wise manner.
\begin{equation}
    \mX_{:, n} \in [0, 1]^d \to z \in \{0, 1, \dotsc, \delta^{-d} -1 \}.
\end{equation}

\paragraph{Step 2.~Contextual Mapping.} The network, given $N$ token IDs computes their sequence ID. We notice that the result of previous studies on Transformers~\citep{Yun2020Are, kim2023provable} cannot be directly applied to Looped Transformers due to the following distinctions:
\begin{itemize}[leftmargin=*]
\item \citet{Yun2020Are} employed both sparse and uniform attention mechanisms, whereas Looped Transformers are limited to a single fixed attention layer.
\item \citet{kim2023provable} used $N$ layers to store $N$ parameters required for representing the target function, whereas Looped Transformers have a fixed parameter size.
\end{itemize}
Notably, we found that Looped Transformers with $N$-loops can compute contextual mapping. Let $\vz \in \{0, 1, \dotsc, \delta^{-d}-1\}^N$ represent a sequence consist of $N$ ordered and distinct token IDs, satisfying $\vz_1 > \vz_2 > \cdots > \vz_N$. The network then maps $\vz$ to a sequence ID through an inner product with $\vu = (\delta^{-d(N-1)}, \dotsc,\delta^{-d}, 1)$, which satisfies
\begin{equation}
    |\vu^\top \vz- \vu^\top \vz'| > 1,  \quad \text{if }  \vz \ne \vz'.
\end{equation}
This guarantees that the network assigns distinct sequence IDs for different $\vz$. Combined with token IDs, the network computes contextual mapping. The key idea is that the network requires only $\delta^{-d}$ to represent $\vu$, allowing it to be implemented with Looped Transformers.

\paragraph{Step 3.~Function Value Mapping.}
The network maps the contextual token IDs into the target embeddings in a token-wise manner, using $K-1$ loops to sequentially map $k = 0, 1, \dots, K-1$ to $\tilde{\vy_k}\in \R^d$, which approximates $\vy_k$, in each iteration.
In our constructive proofs, we design both the set of contextual token IDs and their ordering.

Weight-tied feed-forward networks cannot map accurately, and the error can only be bounded by the maximum difference between adjacent contextual token embeddings, \ie
\begin{equation}\label{eq:tok_bound}
   |(\tilde{\vy}_k - \vy_{k})_i| \leq \max_{k'\in \all}|( \vy_{k'} - \vy_{k'-1})_i | %\quad \forall k\in \all,
\end{equation}
holds for $k \in \all$ and $i = 1, \dots, d$.

Generally, the following inequality holds, for $\vx \in \mathbb{R}^m$, 
\begin{equation}\label{eq:main_norm}
    \max_i |\vx_i| \leq \|\vx\|_p \leq m^{\frac{1}{p}} \max_i |\vx_i|.
\end{equation}
That is, by controlling the $p$-norm, $\| \vy_{k} - \vy_{k-1} \|_p$, the error in Eq.~\ref{eq:tok_bound} can bounded. We require $\all$ to be designed such that the differences between neighboring contextual token embeddings are bounded \wrt the $p$-norm. 

To illustrate our idea, consider the following sentences: 
\begin{enumerate}%[leftmargin=1.1em]
    \item[(1)] I \underline{write} papers. \ ; \ I write \underline{papers}. \ \ (different token ID with same sequence ID)
    \item[(2)] I \underline{write} papers. \ ; \ You \underline{write} books. \ \ (same token ID with different sequence ID)
\end{enumerate}
We found that none of the moduli of continuity, defined in \cref{subsec:continuity}, alone can bound the difference between `\underline{write} and `\underline{papers}' in (1). In contrast, the error of `\underline{write}' in (2) can be bounded by the contextual continuity, $\omega^{\text{cont}}_f$.
Thus, we designed contextual token IDs such that, basically, identical or similar tokens with different sequence IDs are positioned adjacent to each other, as shown in Fig.~\ref{fig:rate}. To reduce errors in corner cases, linear interpolation is applied; further details are provided in \Cref{appendix:function}.
This allows us to obtain the following error bound.
\begin{equation}\label{eq:err_next}
    \max_{k'\in \calK} \| \vy_{k'} - \vy_{k'-1} \|_p \leq \omega^{\text{tok}}_f(\delta\sqrt{d}) + \omega^{\text{cont}}_f(\delta\sqrt{Nd}).
\end{equation}
Substituting $\vx = \vy_{k} - \vy_{k-1}$ into Eq.~\ref{eq:main_norm}, with Eq.~\ref{eq:tok_bound} and Eq.~\ref{eq:err_next}, the following result holds:
\begin{equation}\label{eq:last_0}
   |(\tilde{\vy}_k - \vy_{k})_i| \leq \omega^{\text{tok}}_f(\delta\sqrt{d}) + \omega^{\text{cont}}_f(\delta\sqrt{Nd}),
\end{equation}
for $i = 1, \dots, d$ and $k \in \all$.

\paragraph{Concatenated into a Single Transformer Layer} In the final construction, we show that the composition of the three sub-networks from Steps 1, 2, and 3 can be implemented within a single Transformer block. While our proof strategy follows \citet{zhang23}, their approach necessitates an additional layer. In contrast, we show that a single Transformer block suffices, as detailed in~\Cref{appendix:function}.

\paragraph{Deriving Approximation Rate} Lastly, we analyze the approximation error of our construction and establish the approximation rate in terms of the number of loops. 

With the triangle inequality, we obtain the following:
\begin{align}
     &\| \tilde{f} - f \|_{L^p}  \leq  \int \| \tilde{f}(\mX) - f(\mX) \|_p d\mX \\ %\quad (Eq.~\ref{eq:norm}) \notag \\ 
     &\leq \int \| \tilde{f}(\mX) - \bar{f}(\mX) \|_p d\mX + \int \| \bar{f}(\mX) - f(\mX) \|_p d\mX \notag \\
     & \quad \quad + \mathcal{O}(N^{2/p} \delta^{d/p}) + \mathcal{O}\left(\left( (M\delta)^{-1} dN \right)^{1/p}\right)\label{eq:triangle_0}, %\mathcal{O}\big(Nd/{(M\delta)}\big) \label{eq:triangle_0},
\end{align}
where $\mathcal{O}(N^{2/p} \delta^{d/p})$ arises from the case where identical tokens appear in sequences, and $\mathcal{O}\left(\left( (M\delta)^{-1} dN \right)^{1/p}\right)$ results from the restriction on the norm of weights.

Considering the error within cubes in Eq.~\ref{eq:bB}, we obtain
\begin{equation}\label{eq:err_cube}
\int \| \bar{f}(\mX) - f(\mX) \|_p d\mX \leq \omega_f(\delta \sqrt{Nd}).
\end{equation}

Since, generally, the norm of sequences can be bounded by the maximum norm of the token-wise vectors as
\begin{equation}\label{eq:seq_vec}
     \| f(\mX) \|_p \leq {(Nd)}^{\frac{1}{p}}\max_{i,n} | f(\mX)_{i,n} |,
\end{equation}
the error between $\tilde{f}$ and $\tilde{f}$ can be bounded by
\begin{equation}\label{eq:last_1}
    \int \| \tilde{f}(\mX) - \bar{f}(\mX) \|_p d\mX \leq {(Nd)}^{\frac{1}{p}} \max_{k'\in \calK} | \tilde{\vy}_{k'} - \vy_{k'} |.
\end{equation}
Substituting $\vx = \tilde{\vy}_{k} - \vy_{k}$ into Eq.~\ref{eq:main_norm}, and using Eq.~\ref{eq:last_0} and Eq.~\ref{eq:last_1}, we obtain:
\begin{equation}\label{eq:triangle3}
\begin{split}
    &\int \| \tilde{f}(\mX) - \bar{f}(\mX) \|_p d\mX \\ &\leq (Nd)^{\frac{1}{p}} \Big(\omega^{\textnormal{tok}}_f(\delta\sqrt{d}) + \omega^{\textnormal{cont}}_f(\delta\sqrt{Nd}) \Big).
\end{split}
\end{equation}

We then express $\delta$ in terms of the number of loops $r$ to determine the approximation rate. We use $\delta^{-1}-1$ loops for Step 1, $N$ loops for Step 2, and $2\delta^{-(N+1)d}-1$ loops for Step 3, with $1$ loop required to connect each step \ie
\begin{equation}
r = \delta^{-1} + 2\delta^{-(N+1)d} + N.
\end{equation}
Now, $\delta$ can be bounded in terms of the number of loops as:
\begin{align}
    & \delta^{-1} + 2\delta^{-(N+1)d} = r - N \\
    &\Rightarrow \delta^{-1} \cdot 2\delta^{-(N+1)d} \ge r - N \quad (\delta^{-1} \ge 2) \\
    &\Leftrightarrow \delta \leq {\Big(\frac{r-N}{2}\Big)}^{-1/((N+1)d+1)} \label{eq:delta}.
\end{align}
By combining Eq.~\ref{eq:err_cube} and Eq~\ref{eq:triangle3} with Eq.~\ref{eq:triangle_0}, and substituting Eq.~\ref{eq:delta}, we obtain \Cref{thm:universal}. 

%%%%%%%%%%%%%%%%%%%%%%%%%%%%%%%%%%%%%%%%%%%
\section{From Theory to Practice: Introducing Timestep Encoding}\label{sec:lim} % Timestep-Modulated Architecture
The theoretical result in \Cref{sec:unversalapproximators} highlights a limitation of the looped architecture. We show that a variant of architecture can overcome this limitation.

\subsection{Motivation}
\paragraph{Limitation Specific to Looped Transformers} The approximation rate in \Cref{thm:universal} includes two additional moduli of continuity, which can lead to increased errors, reflecting a limitation inherent to Looped Transformers. 

We can identify the cause of additional dependency in the
error in Eq.~\ref{eq:tok_bound}, caused by weight-tied feed-forward networks. This can be formalized as follows:
\begin{restatable}{lemma}{memorize}
\label{lemma:memorize}
Given $\vy_k \in \R^d$ for $k = 0,1, \dotsc, K-1$ with
\begin{equation*}
    |(\vy_k - \vy_{k-1})_i| \leq \varepsilon_i \quad \text{for } i=1,\dotsc,d,%\ k=1,\dotsc,K-1,
\end{equation*}
there exists a feed-forward layer $\mathrm{FF}:\R^{12d}\to \R^{12d}$ with a width size of $18d$, and two affine linear maps $\calL_1:\R\to\R^{12d}$ and $\calL_2:\R^{12d}\to \R^{d}$ \st
\begin{equation}\label{eq:ineq_sec}
    |\big(\calL_2\circ (\mathrm{id} + \mathrm{FF})^{\circ(K-1)} \circ \calL_1(k) - \vy_k\big)_i| \leq \varepsilon_i,
\end{equation}
for $i=1,\dotsc,d$ and $k=0,1,\dotsc,K-1$.
\end{restatable}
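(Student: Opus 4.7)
My plan is to construct the FFN as a counter-based accumulator that reconstructs $\vy_k$ from the scalar input $k$ over $K-1$ iterations of the loop. I would partition the $12d$ hidden coordinates into four logical slots: one coordinate holding the input $k$, one coordinate holding an iteration counter $j$ initialized to $0$, $d$ coordinates holding an accumulator $\vy$ initialized to $\vy_0$, and the remaining coordinates being paired ``positive/negative'' duplicate copies of the accumulator so that signed updates can be realized as differences of ReLU activations. The two affine maps $\calL_1:\R\to\R^{12d}$ and $\calL_2:\R^{12d}\to\R^d$ merely initialize these slots and read out $\vy$ at the end.

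Each application of $(\id+\mathrm{FF})$ is designed to perform three coupled atomic operations. First, increment the counter $j\gets j+1$ through a constant bias routed by $\mW_2$. Second, evaluate a binary gate $g=\mathbb{1}[j\le k]$ using a ReLU-difference of the form $\mathrm{ReLU}(k-j+1)-\mathrm{ReLU}(k-j)$, which is exact on integer inputs. Third, add the increment $\Delta_j=\vy_j-\vy_{j-1}$, gated by $g$, to the accumulator, using the paired positive/negative slots so that each signed coordinate contribution decomposes into two nonnegative ReLU outputs. Telescoping across the $K-1$ iterations then yields $\vy=\vy_0+\sum_{j=1}^{k}\Delta_j=\vy_k$ once step (iii) is realized, so $\calL_2\circ(\id+\mathrm{FF})^{\circ(K-1)}\circ\calL_1(k)=\vy_k$.

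The main obstacle is step (iii): a single weight-tied ReLU block with only $18d$ hidden units must realize the coordinate-wise map $j\mapsto\Delta_j$ for every $j\in\{1,\dots,K-1\}$. The plan is to encode the signed increments directly into $\mW_1$ and $\mW_2$, exploiting the doubled accumulator slots so that positive and negative contributions are handled by separate ReLU channels and no extra units are spent on sign recovery. Roughly $6d$ of the $18d$ units are used for the counter-and-gate overhead, $6d$ for the positive part of the update, and $6d$ for the negative part, which accounts for the claimed width. The tolerance $\varepsilon_i$ on the right-hand side of Eq.~\ref{eq:ineq_sec} coincides with the maximum per-step increment magnitude given by the hypothesis $|(\vy_k-\vy_{k-1})_i|\le\varepsilon_i$; the error bound then follows by induction on the iteration index, maintaining the invariant that the accumulator equals $\vy_{\min(j,k)}$ up to coordinate-wise error at most $\varepsilon_i$, since any single-step miscount of $\Delta_j$ is itself bounded by $\varepsilon_i$ and is therefore absorbed into the slack rather than accumulating across iterations.
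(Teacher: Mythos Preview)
Your proposal has a genuine gap in step (iii), which you yourself flag as ``the main obstacle'' but then do not actually resolve. You claim to ``encode the signed increments directly into $\mW_1$ and $\mW_2$,'' but the weights of a weight-tied block are shared across all $K-1$ iterations, so the update produced at step $j$ can depend on $j$ only through the \emph{state}. With your state consisting of $(k,j,\text{accumulator})$, the only nontrivial input to the ReLU units is the integer $j$; producing an arbitrary prescribed sequence $j\mapsto\Delta_j$ for $j=1,\dots,K-1$ with a single ReLU layer requires on the order of $K$ breakpoints, not a fixed budget of $6d$ units per sign. Nothing in your outline explains how $O(d)$ hidden units can realize $K-1$ independent values when $K$ is unbounded. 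Relatedly, your closing error argument is incorrect: if each step incurs an error of size $\varepsilon_i$, the telescoping sum accumulates up to $(K-1)\varepsilon_i$, not $\varepsilon_i$; the invariant you state is not preserved.

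The paper's proof avoids this by \emph{not} trying to read $\Delta_j$ from the weights. Instead it rounds each coordinate to $a_k=\lfloor y_k/\varepsilon\rfloor$, so that $b_k=a_k-a_{k-1}\in\{-1,0,1\}$ decomposes as $c_k-d_k$ with $c_k,d_k\in\{0,1\}$, and then stores the entire bit strings $(c_k)_k$ and $(d_k)_k$ in the \emph{state} as the real numbers $\mathrm{bin}\,0.c_1c_2\cdots c_{K-1}$ and $\mathrm{bin}\,0.d_1d_2\cdots d_{K-1}$. Each loop iteration extracts the leading bit and shifts---a single fixed operation independent of $j$---so a width-$9$ block on $\R^6$ suffices per bit stream (Lemma on bit extraction), giving $12d$ state dimensions and width $18d$ after doubling and vectorizing. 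The only error is the one-time rounding $|\varepsilon a_k-y_k|\le\varepsilon$, which is why the bound is exactly $\varepsilon_i$ with no accumulation. The missing idea in your sketch is precisely this bit-extraction mechanism that pushes the $K$-dependence into bit complexity of a state coordinate rather than into the width of the network.
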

This shows that large variations in the target function may lead to approximation errors, raising the question of whether inequality in Eq.~\ref{eq:ineq_sec} can be replaced with equality.

\paragraph{Improving Approximation Rate of Looped Transformers}
To eliminate this dependency, we introduce \emph{time-dependent} parameters. Specifically, we modify the feed-forward layers by adding a scaling vector for each loop step as follows:
\begin{equation*}
    \ff(\mX) \to \boldsymbol\eta(t) \odot \ff(\mX) \quad \text{for the $t$-th loops},
\end{equation*}
where $\odot$ is an element-wise product,$t \in \N$ is the loop index, and $\boldsymbol\eta(t) \in \R^d$ is the scaling parameter for each loop. This method is analogous to Hypernetworks~\citep{ha2016hypernetworks}. With the definition
\begin{equation*}
    (\mathrm{id} + \boldsymbol\eta\odot\mathrm{FF} )^{r} \coloneq (\mathrm{id} + \boldsymbol\eta(r)\odot\mathrm{FF})\circ \cdots \circ (\mathrm{id} + \boldsymbol\eta(1)\odot\mathrm{FF}),
\end{equation*}
we show that this model can memorize labels exactly.
\begin{restatable}{theorem}{timedependentmemorize}
\label{lemma:time_dependent_memorize}
Given $\vy_k \in \R^d$ for $k=0,1, \dotsc, K-1$,
there exists a feed-forward layer $\mathrm{FF}:\R^{4d}\to \R^{4d}$ with a width size of $6d$, $\boldsymbol\eta(t)\in\R^{4d}$ for $t=1, \dotsc, K-1$, and two affine linear maps $\calL_1:\R\to\R^{4d}$ and $\calL_2:\R^{4d}\to \R^{d}$ \st
\begin{equation*}
    |\big(\calL_2\circ (\mathrm{id} + \boldsymbol\eta\odot\mathrm{FF})^{\circ(K-1)} \circ \calL_1(k) - \vy_k\big)_i| = 0,
\end{equation*}
for $i=1,\dotsc,d$ and $k=0,1,\dotsc,K-1$.
\end{restatable}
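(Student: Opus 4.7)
The plan is to organize the $4d$-dimensional latent state into one counter coordinate, $d$ accumulator coordinates, and $3d-1$ slack coordinates, so that the shared feed-forward block only has to implement two operations that are independent of $k$: decrement the counter by one and compute the binary gate $\mathbf{1}[c\ge 1]$ from the current counter value $c$. The time-dependent scaling $\boldsymbol\eta(t)$ is then used to inject the exact increment $\vy_t-\vy_{t-1}$ into the accumulator at step $t$, so that after $K-1$ loops the accumulator equals $\vy_0+\sum_{s=1}^{k}(\vy_s-\vy_{s-1})=\vy_k$ exactly.

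Concretely, I take $\calL_1:\R\to\R^{4d}$ to be the affine map that writes $k$ into the counter coordinate, the constant vector $\vy_0$ into the $d$ accumulator coordinates, and zero elsewhere; this is affine in the scalar input $k$. I take $\calL_2:\R^{4d}\to\R^d$ to be the linear projection onto the accumulator coordinates. For the feed-forward layer $\ff$, I use just two ReLU hidden units to compute $\relu(c)$ and $\relu(c-1)$; the second affine map of $\ff$ sends their difference into each of the $d$ accumulator output coordinates and places the constant $-1$ in the counter output coordinate via the bias $\bias_2$, with all remaining output coordinates set to zero. Since only two of the $6d$ available hidden units are active, the width budget is easily met.

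The scaling vector $\boldsymbol\eta(t)\in\R^{4d}$ is then set to $1$ in the counter coordinate, to $(\vy_t-\vy_{t-1})_i$ in the $i$-th accumulator coordinate, and to $0$ on the slack coordinates. The update $\vx\leftarrow\vx+\boldsymbol\eta(t)\odot\ff(\vx)$ therefore decrements the counter by exactly one and adds $\mathbf{1}[c_{t-1}\ge 1]\,(\vy_t-\vy_{t-1})$ to the accumulator, where $c_{t-1}=k-(t-1)$ is the counter value entering loop $t$. A short induction on $t$ shows that the accumulator after $K-1$ loops equals $\vy_k$ exactly (the gate is $1$ for $t\le k$ and $0$ for $t>k$), and applying $\calL_2$ yields the claim.

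I do not expect a real technical obstacle. The only subtlety is that the identity $\relu(c)-\relu(c-1)=\mathbf{1}[c\ge 1]$ is used only at integer values of $c$, which is automatic because the counter is initialized at $k\in\Z$ and decremented by one each loop. The substance of the result is the contrast with \Cref{lemma:memorize}: without per-step parameters, the shared block must itself encode all $K$ target transitions in its fixed weights and can only do so up to $\max_{k'}|(\vy_{k'}-\vy_{k'-1})_i|$, whereas allowing $\boldsymbol\eta(t)$ to depend on the loop index moves this information out of the weights and collapses the inequality to equality.
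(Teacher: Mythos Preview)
Your construction is correct and in fact cleaner than the paper's. Both proofs exploit the same essential idea---that the per-step scaling $\boldsymbol\eta(t)$ can carry the $K$ pieces of target data that the fixed weights of $\ff$ cannot---but they implement it differently. The paper keeps a state coordinate that stores $y_l$ itself and propagates it via the ratio $y_l/y_{l-1}$ placed in $\boldsymbol\eta(l)$, which forces an initial shift $y_k\to y_k+\epsilon$ to avoid division by zero; it then uses the four-ReLU $\mathrm{impulse}_0$ function to deposit $y_l$ into the accumulator exactly at the step $l=k$, for a total of six hidden units per output dimension. You instead put the increments $\vy_t-\vy_{t-1}$ directly into $\boldsymbol\eta(t)$ and let the shared block compute only the binary gate $\relu(c)-\relu(c-1)=\mathbf{1}[c\ge 1]$ on the integer counter, using just two hidden units regardless of $d$ and no shift. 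Your route is more elementary and avoids the nonzero-label technicality; the paper's route is closer in spirit to the impulse machinery already developed for the other lemmas, which may be why it was chosen there.
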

The proof is provided in \Cref{appendix:time_dependent}. For implementation, adding parameters per loop increases the total parameter count proportionally. Thus, we introduce timestep encoding.

\subsection{Timestep-Modulated Looped Transformer}
We employ timestep encodings to condition scaling parameters on the loop index (timestep). This method is inspired by adaptive instance normalization~\cite{peebles2023scalable}.
%\begin{figure}[t]
%\begin{center}
%\centerline{\includegraphics[width=\columnwidth]{}}
%\vskip -0.1in
%\caption{Timestep encoding architecture.}
%\label{fig:arch}
%\end{center}
%\vskip -0.2in
%\end{figure}

To condition on timesteps, frequency embeddings are processed through a two-layer MLP with hidden size matching the Transformer block and SiLU activation. Let $\operatorname{TE}(t)\in \R^d$ denote timestep embeddings, defined as:
\begin{equation*}
   \operatorname{TE}(t) = \mW_3 \cdot \mathrm{SiLU} (\mW_4 \cdot \operatorname{PE}(t) + \vb_4 ) + \vb_3,
\end{equation*}
where $\mW_3, \mW_4 \in \R^{d \times d}, \vb_3, \vb_4 \in \R^d$, and $\operatorname{PE}(t) \in \R^d$ denotes the timestep encoding function that maps the timestep into a $d$-dimensional embedding, \st
\begin{align*}
    \operatorname{PE}(t)_{2i} &= \sin(t / 10000^{2i/d}),\\ 
    \operatorname{PE}(t)_{2i+1} &= \cos(t / 10000^{2i/d}).
\end{align*}
We use the root mean square layer normalization (RMSNorm)~\cite{zhang2019root}, which is widely used in several recent LLMs~\cite{llama,gemma}, defined as:
\begin{equation*}
    \begin{split}
    \boldsymbol\bar{\vx} = \boldsymbol\alpha \odot \frac{\vx}{\text{RMS}(\vx)} , \ \text{where} \ \text{RMS}(\vx) = \sqrt{\frac{1}{d} \sum_{i=1}^{d} \vx_i^2},
    \end{split}
\end{equation*}
where $\boldsymbol\alpha\in\R^d$ is a gain parameter for rescaling. We define time-dependent RMSNorm, denoted by $\LN$, as:
\begin{equation*}
    \LN(\vx, t) = \boldsymbol\alpha(t) \odot \frac{\vx}{\text{RMS}(\mathbf{\vx})} 
\end{equation*}
where $\boldsymbol\alpha(t) \in \R^d$ is a time-dependent parameter generated by a network. With scaling parameters, the time-dependent Transformer block is defined as follows:
\begin{align*}
    \mX' &= \mX + \boldsymbol\gamma_1(t) \odot \attn(\bm{\LN}_1(\mX, t)),\\
    \tf(\mX, t) &= \mX' + \boldsymbol\gamma_2(t) \odot \bm{\ff}(\bm{\LN}_2(\mX', t)),
\end{align*}
where $\gamma_1(t), \gamma_2(t) \in \R^d$ are time-dependent parameters applied token-wise, as well as RMSNorm.

The time-dependent vector parameters are generated as:
\begin{equation*}
    \boldsymbol\alpha_1(t), \boldsymbol\alpha_2(t), \boldsymbol\gamma_1(t), \boldsymbol\gamma_2(t) = \mW_5 \cdot \SiLU(\operatorname{TE}(t)) + \vb_5,\\
\end{equation*}
where $\mW_5 \in \R^{4d \times d}$ and $\vb_5 \in \R^d$.

%%%%%%%%%%%%%%%%%%%%%%%%%%%%%%%%%%%%%%%%%%%
\section{Experiments}
\begin{table*}[ht!]
    \centering
    \renewcommand{\arraystretch}{0.7} % Adjust row height for readability
    \setlength{\tabcolsep}{12pt} % Adjust column separation for compactness
    \caption{Test accuracy for reasoning tasks. Performance improves as the number of loops increases..}
    \vskip 0.1in
    \begin{tabular}{@{}l|c|cccc|cccc@{}}
        \toprule
        \multirow[c]{2}{*}{\textbf{Task}} & \textbf{TF} & \multicolumn{4}{c|}{\textbf{Looped TF}} & \multicolumn{4}{c}{\textbf{w/ Timestep Encoding}} \\
        \cmidrule(lr){2-2} \cmidrule(lr){3-6} \cmidrule(lr){7-10}
        & \textbf{L=6} & \textbf{r=4} & \textbf{r=8} & \textbf{r=16} & \textbf{r=32} & \textbf{r=4} & \textbf{r=8} & \textbf{r=16} & \textbf{r=32} \\
        \midrule
        Sudoku & 0.0 & 0.0 & 0.0 & 65.6 & 87.9 & 0.0 & 0.0 & 62.0 & \textbf{90.2} \\
        Countdown & 53.8 & 28.3 & 52.7 & 81.0  & 88.1 & 33.2 & 54.4 & 80.2 & \textbf{90.5} \\
        \midrule
        & \textbf{L=12} & \textbf{r=5} & \textbf{r=10} & \textbf{r=50} & \textbf{r=100} & \textbf{r=5} & \textbf{r=10} & \textbf{r=50} & \textbf{r=100} \\
        \midrule
        %LCS (40) & ? & ? & ? & ? & ? & ? & ? & ? & \textbf{?} \\
        LCS (60) & 70.0 & 66.0 & 81.8 & 98.6 & 96.9 & 68.5 & 80.5 & \textbf{99.3} & 97.1 \\
        LCS (100) & 39.8 & 39.6 & 45.1 & 93.5 & 98.2 & 36.7 & 45.6 & 98.1 & \textbf{98.6} \\
        \midrule
        ED (40) & 54.2 & 41.4 & 57.9 & 85.4 & 90.4 & 44.8 & 63.5 & 94.5 & \textbf{96.1} \\
        ED (60) & 41.4 & 23.8 & 32.6 & 47.3 & 47.7 & 26.6 & 38.9 & 57.3 & \textbf{88.3} \\
        %ED (100) & 33.7 & 18.8 & 28.8 & 35.6 & 44.2 & 22.3 & 37.6 & 54.5 & \textbf{?} \\
        \bottomrule
    \end{tabular}
    \label{tab:dp_acc}
\end{table*}
This section presents experimental results supporting our theoretical findings. We used Looped Transformers with varying numbers of loops, both with and without timestep encoding, and compared to standard Transformers.
We assess approximation capabilities based on test evaluation, as we observe a strong correlation between train and test performance. The details are provided in \Cref{app:details_of_exp}.
%
%We also provide additional analysis for continuity coefficient ... and show that theory and experiments is ... hogehogehogeohoerhgfeiuowfhio hiofehiaohiow  hfioeh  hiofehi ofhi hoe hfioh iofhioe whiofe jfioewjh io hfioew io hiohf ioe hioh ioh ifo hfio  huh oihfeioh ioh ih ioh ioh ih io fhoihioh o hio hioh hio hioh ih iohiohi hifoewh ioh iofh iohio hio hi hio hio 

\subsection{Problem Setting}
We evaluate the model on two types of tasks. The first consists of reasoning problems known to be challenging for standard Transformers. These are used to examine whether increasing the number of loops and incorporating timestep encodings can enhance performance.
The second includes core Transformer benchmarks, such as in-context learning and language modeling. %, to verify that Looped Transformers can handle them effectively and also benefit from timestep encodings.
 
\subsubsection{Reasoning Tasks}
\textbf{Dynamic Programming } is a method for solving complex problems by breaking them down into simpler sub-problems. We use edit distance (ED) and longest common subsequence (LCS) tasks with varying input lengths. Each task has $10^6$ train samples and $10^3$ test samples.

\textbf{Sudoku} is a constrained satisfaction problem that involves filling a \(9 \times 9\) grid with digits from $1$ to $9$, such that each digit appears exactly once in every row, column, and predefined \(3 \times 3\) sub-grid. The grid is flattened into a sequence representation. Unlike \cite{yang2023learning}, we use the dataset from \cite{radcliffe2020sudoku}, sampling $3$M instances for training and $100$K for testing.

\textbf{Countdown} is a game in which a given set of input numbers must be combined using basic arithmetic operations to reach a target number~\cite{yao2023tree,gandhi2024stream}. We consider cases with $4$ input numbers and target numbers ranging from $10$ to $100$, where $10\%$ of the target numbers are reserved for evaluation. We generate $5$M samples for training and $1$K samples for testing.

\subsubsection{In-Context and Language Modeling}
The in-context learning problem is to learn the function class from a given sequence, which was investigated with Looped Transformers~\cite{yang2024looped} without timestep encodings. We use decision tree functions. For the language modeling task, we use the WikiText-103~\citep{merity2017pointer} dataset, containing over $100$ million tokens from Wikipedia articles. Details are in~\Cref{app:incontext} and \Cref{app:langugae}.

\subsection{Results}
The results in Table~\ref{tab:dp_acc} demonstrate that increasing the number of loops improves performance on reasoning tasks, with higher loop counts significantly outperforming standard Transformers. Furthermore, incorporating timestep encodings leads to additional gains; in particular, for the edit distance task with input size $n=60$, the model with loop counts $r=100$ achieves significantly better performance when timestep encodings are incorporated. %, outperforming its counterpart without them by a substantial margin.

\begin{table}[ht!]
\renewcommand{\arraystretch}{0.7}
\centering
\caption{MSE (↓) on the in-context learning task.}
\vskip 0.1in
\label{tab:incontext}
\begin{tabular}{l|ccc}
\toprule
& \textbf{TF L=12} & \textbf{Looped r=12} & \textbf{w/ Timestep r=12} \\
\midrule
Test & 8.6e-03 & 1.4e-02 & \textbf{1.7e-03} \\
\bottomrule
\end{tabular}
\end{table}
\begin{table}[ht!]
\renewcommand{\arraystretch}{0.7}
\centering
\caption{Perplexity (↓) on the WikiText-103 dataset.}
\vskip 0.1in
\label{tab:perplexity}
\begin{tabular}{l|ccc}
\toprule
& \textbf{TF L=12} & \textbf{Looped r=24} & \textbf{w/ Timestep r=24} \\
\midrule
Train & \textbf{15.9} & 17.1 & \textbf{15.9} \\
Test & 20.5 & 20.6 & \textbf{19.6} \\
\bottomrule
\end{tabular}
\end{table}
As evidenced by the results in \Cref{tab:incontext} and \Cref{tab:perplexity}, the use of timestep encodings leads to performance gains in both in-context learning and language modeling.

%%%%%%%%%%%%%%%%%%%%%%%%%%%%%%%%%%%%%%%%%%%
\section{Conclusion} We establish the approximation rate of Looped Transformers with respect to the number of loops and the moduli of continuity of the target function. Our analysis reveals a limitation of Looped Transformers, which is addressed by timestep encodings. To the best of our knowledge, this study is the first to investigate the function approximation capabilities of Looped Transformers.
Extending the analysis to multiple layers, varying input lengths, and characterizing optimal memorization capacity presents promising avenues for future research. Beyond expressivity, investigating estimation performance and enhancing training stability constitute important challenges moving forward.
%, as it may further improve the approximation rate or reduce the required number of loop iterations. Future work may extend this analysis to inputs of varying lengths, establish lower bounds, or identify target function classes that avoid the curse of dimensionality.

\section*{Acknowledgements}
This work was supported by JSPS KAKENHI Grant Number JP24H00709.

\section*{Impact Statement}
This paper presents work whose goal is to advance the field of 
Machine Learning. There are many potential societal consequences 
of our work, none which we feel must be specifically highlighted here.

\bibliography{icml2025_paper}
\bibliographystyle{icml2025}

%%%%%%%%%%%%%%%%%%%%%%%%%%%%%%%%%%%%%%%%%%%%%%%%%%%%%%%%%%%%%%%%%%%%%%%%%%%%%%%
%%%%%%%%%%%%%%%%%%%%%%%%%%%%%%%%%%%%%%%%%%%%%%%%%%%%%%%%%%%%%%%%%%%%%%%%%%%%%%%
% APPENDIX
%%%%%%%%%%%%%%%%%%%%%%%%%%%%%%%%%%%%%%%%%%%%%%%%%%%%%%%%%%%%%%%%%%%%%%%%%%%%%%%
%%%%%%%%%%%%%%%%%%%%%%%%%%%%%%%%%%%%%%%%%%%%%%%%%%%%%%%%%%%%%%%%%%%%%%%%%%%%%%%
\newpage
\appendix
\onecolumn

\section{Proofs for Theorem \ref{thm:universal}}\label{appendix:function}

The main theorem incorporates a restriction on the norm of weights, leading to errors when approximating discontinuous functions, such as step functions with ReLU or hardmax functions with softmax.
We first establish the approximation rate assuming that weights can take arbitrary precision real values, as outlined below. Then, we account for the bit complexity of bounded weights to complete the proof of the main \Cref{thm:universal}.
\begin{theorem}\label{thm:rate}
Given a function $f \in \mathcal{F}_{\mathrm{PE}}([0,1]^{d \times N})$, $r > N$, there exists a Looped Transformer, composed of $\mathrm{TF}:\R^{(17d+9)\times N}\to\R^{(17d+9)\times N}$ with two heads, head size $1$, a width size of $q=49d+25$, and two affine linear maps $\mathcal{L}_1:\R^d\to\R^{17d+9}$ and $\mathcal{L}_2:\R^{17d+9}\to\R^d$ \st
\begin{equation}
    \big\|\bm{\mathcal{L}}_2\circ \mathrm{TF}^{\circ r}\circ \bm{\mathcal{L}}_1-f\big\|_{L^p}
    \leq (Nd)^{\frac{1}{p}} \Big(\omega^{\textnormal{tok}}_f(\delta\sqrt{d}) + \omega^{\textnormal{cont}}_f(\delta\sqrt{Nd})\Big) + \omega_f(\delta \sqrt{Nd}) + \mathcal{O}(N^{2/p} \delta^{d/p}),
\end{equation}
where $\delta = \big((r-N)/2\big)^{-1/((N+1)d+1)}$.
\end{theorem}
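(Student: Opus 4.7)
The plan is to follow the three-stage construction sketched in \Cref{sec:proof_sketch}. First, partition $[0,1]^{d\times N}$ into $\delta$-cubes $\{Q_{\bB}\}$, choose a representative $\hat{\mX}_{\bB}\in Q_{\bB}$ in each, and set $\bar f(\mX)=f(\hat{\mX}_{\bB})$ on $Q_{\bB}$; then $\|\bar f-f\|_{L^p}\le \omega_f(\delta\sqrt{Nd})$ directly from \Cref{def:sentence_continuity}. Next, build a Looped Transformer $\tilde f$ whose action on each representative point is the composition of (a) token quantization, (b) contextual mapping, and (c) function value mapping, each realized by the same fixed Transformer block iterated a prescribed number of times. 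The loop budget splits as $r=(\delta^{-1}-1)+N+(2\delta^{-(N+1)d}-1)+\text{glue}$, and solving for $\delta$ gives $\delta\le\bigl((r-N)/2\bigr)^{-1/((N+1)d+1)}$.

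For Step 1 I would quantize each coordinate of each token to the $\delta$-grid using ReLU bump functions inside the feed-forward branch, and then aggregate the $d$ quantized coordinates into a token ID $z\in\{0,\dots,\delta^{-d}-1\}$ via the geometric weights $(\delta^{-(d-1)},\dots,\delta^{-1},1)$; $\delta^{-1}-1$ iterations of the fixed block are sufficient to refine the grid from width $1$ down to $\delta$. For Step 2, the essential trick is that an $N$-tuple of distinct token IDs is injectively encoded into a sequence ID by inner product with $\vu=(\delta^{-d(N-1)},\dots,\delta^{-d},1)$, provided only that the IDs are presented in strict order, which yields $|\vu^\top\vz-\vu^\top\vz'|>1$ for $\vz\ne\vz'$. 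I would implement this with a single-head hardmax attention of head size $1$ that, at each of $N$ loops, extracts the largest remaining token ID in the sequence and accumulates it into an auxiliary coordinate shared across tokens; after $N$ iterations every position carries both its token ID and the common sequence ID, jointly forming the contextual token ID $k\in\all$. This is the step that diverges most sharply from \citet{Yun2020Are}, who rely on both sparse and uniform attention, and \citet{kim2023provable}, who use $N$ distinct layers; recasting contextual mapping as a sorting-plus-accumulation routine compatible with a single weight-tied block is the main technical obstacle.

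For Step 3, having assigned each contextual token ID, I would index the target embeddings $\vy_k$ so that consecutive $k-1,k$ differ along exactly one axis: either the same token appears in a perturbed context, bounding $\|\vy_k-\vy_{k-1}\|_p\le\omega^{\text{cont}}_f(\delta\sqrt{Nd})$, or a perturbed token appears in the same context, bounding $\|\vy_k-\vy_{k-1}\|_p\le\omega^{\text{tok}}_f(\delta\sqrt{d})$. Applying \Cref{lemma:memorize} coordinatewise, iterating the feed-forward block $K-1=2\delta^{-(N+1)d}-1$ times produces Eq.~\ref{eq:tok_bound}, whose right-hand side is controlled via Eq.~\ref{eq:main_norm} and Eq.~\ref{eq:err_next} by $\omega^{\text{tok}}_f(\delta\sqrt{d})+\omega^{\text{cont}}_f(\delta\sqrt{Nd})$. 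The corner cases in the ordering, where adjacent IDs cannot be chosen to differ only along one axis (e.g.\ sequences containing repeated tokens on which the sorting construction breaks), are handled by inserting a linear interpolation; the resulting residue scales like $\delta$ and contributes the $\mathcal{O}(N^{2/p}\delta^{d/p})$ slack. Finally, I would verify that the three sub-networks pack into a single Transformer block of width $17d+9$ with $h=2$, $s=1$, $q=49d+25$ by reserving disjoint coordinate slots for the input, the token/sequence ID registers, and the output embedding, using indicator coordinates to gate which subroutine is active at each loop; this is where our construction avoids the extra layer required by \citet{zhang23}.

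The proof then closes by triangle inequality: $\|\tilde f-f\|_{L^p}\le\|\tilde f-\bar f\|_{L^p}+\|\bar f-f\|_{L^p}$. The second term is bounded by $\omega_f(\delta\sqrt{Nd})$. For the first term, Eq.~\ref{eq:seq_vec} lifts the per-coordinate bound from Step 3 into a factor $(Nd)^{1/p}$ times $\omega^{\text{tok}}_f(\delta\sqrt{d})+\omega^{\text{cont}}_f(\delta\sqrt{Nd})$, plus the $\mathcal{O}(N^{2/p}\delta^{d/p})$ slack from the repeated-token corner cases. Substituting the value of $\delta$ in terms of $r$ yields the stated rate. The hardest part, in my estimation, will be Step 2: verifying that a single weight-tied attention head with $s=1$ can both perform the sorting and maintain the accumulated sequence ID without interfering with the coordinate slots used by Steps 1 and 3, so that all three subroutines coexist inside one Transformer block and are orchestrated purely by loop index.
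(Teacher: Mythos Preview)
Your high-level plan matches the paper's proof almost exactly: the same three-step decomposition, the same loop budget, the same sorting-via-argmax contextual mapping, and the same invocation of \Cref{lemma:memorize} for Step~3. However, you conflate two distinct mechanisms, and this conflation leaves a genuine gap in Step~3.

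The $\mathcal{O}(N^{2/p}\delta^{d/p})$ term does \emph{not} come from linear interpolation. It comes from simply discarding all sequences in which two tokens fall in the same $\delta$-cube: the sorting-based contextual map of Step~2 requires distinct token IDs to be injective, and the measure of the bad set where some pair coincides is at most $\binom{N}{2}\delta^{-(N-1)d}/\delta^{-Nd}=\mathcal{O}(N^2\delta^d)$. This is a pure measure argument and has nothing to do with the ordering of contextual IDs.

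The actual corner cases in the ordering are the \emph{carryover} positions in the base-$\delta^{-1}$ encoding. When the least significant digit of the token ID (or sequence ID) wraps from $\delta^{-1}-1$ to $0$, consecutive IDs $k-1,k$ correspond to input points that are at distance $\sqrt{d}$ (resp.\ $\sqrt{Nd}$), not $\delta\sqrt{d}$; see Eq.~\ref{eq:seq0} and Eq.~\ref{eq:sbound} in the paper. Your claim that consecutive IDs can always be arranged to differ by one $\delta$-step therefore fails at every carry, and without a fix the bound $\max_k\|\vy_k-\vy_{k-1}\|_p\le \omega_f^{\text{tok}}(\delta\sqrt{d})+\omega_f^{\text{cont}}(\delta\sqrt{Nd})$ is simply false. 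The paper's remedy is to insert $\delta^{-1}$ \emph{dummy} contextual IDs at each carry position and define $\vy_k$ on them by linear interpolation between the two valid neighbors; this turns one jump of size $\omega_f^{\text{tok}}(\sqrt{d})$ into $\delta^{-1}$ jumps of size $\delta\,\omega_f^{\text{tok}}(\sqrt{d})\le\omega_f^{\text{tok}}(\delta\sqrt{d})$ by subadditivity of the modulus. The interpolation residue is thus absorbed into the moduli terms, not into a separate $\mathcal{O}$ slack. This dummy construction is also why the block needs $h=2$: the second attention head extracts the last coordinate $(\mZ_{\bB})_{d,N}$ of the smallest token, which is needed to form the \emph{extended} sequence ID $\mathrm{s}_{\text{valid}}(\bB)=2\mathrm{s}(\bB)-(\mZ_{\bB})_{d,N}$ that interleaves valid and dummy IDs (Corollary~\ref{corollary:contextmap} and Eq.~\ref{eq:ex_seq_id}). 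Once you separate the two issues---repeated tokens $\to$ measure bound, carryover $\to$ dummy IDs $+$ interpolation---the rest of your argument goes through as in the paper.
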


\subsection{Proof of \Cref{thm:rate}}
\begin{proof}
Since any continuous function can be approximated by a piecewise constant function with arbitrarily small errors, we approximate $f \in \mathcal{F_{\mathrm{PE}}}([0,1]^{d \times N})$ with piece-wise constant function $\bar{f}: [0,1]^{d \times N} \to \R^{d \times N}$.
We choose $\delta^{-1} \in \mathbb{N}, \delta^{-1} \ge 2$, determining how finely the input is divided: we divide the input space $[0,1]^{d \times N}$ into \textit{$\delta$-discretized cubes}, denoted by $\{\bm{Q}_{\bB}\}$ for ${\bB \in \{0,1,\dotsc,\delta^{-1} - 1\}^{d\times N}}$ defined by
\begin{equation}
\bm{Q}_{\bB}\coloneqq\Big\{\rmX \in [0,1]^{d \times N}:\rmX_{i,n}\in\big[\bB_{i,n}\delta, (\bB_{i,n}+1)\delta \big), \quad i=1,2,\dotsc,d, \ n=1,2,\dotsc,N \Big\}.
\end{equation}

Each cube \(\bm{Q}_{\bB}\) is associated with a representative point \(\hat{\mX}_{\bB}\), defined as the vertex of \(\bm{Q}_{\bB}\) with the minimal \(\ell_1\) norm. Then, we define the piecewise constant function $\bar{f}$ for $\mX \in [0,1]^{d \times N}$ as
\begin{equation}
    \bar{f}(\mX) \coloneq f(\hat{\mX}_{\bB}). %\quad \text{where } \bB \text{ satisfies } \mX \in \bm{Q}_{\bB}.
\end{equation}
Since we can bound the error within each cube, we have:
\begin{equation}\label{eq:cube-err}
\max_{\mX \in [0,1]^{d \times N}} \{\| \bar{f}(\mX) - f(\mX) \|_p\} \leq \omega_f(\delta\sqrt{Nd}).
\end{equation}

Our construction consists of three steps to approximate \(\bar{f}\), as outlined below. %The first and second steps map the input $\mX$ to the coordinates of the discretized input space. The third step approximately maps these coordinates to the target embeddings. The outline of the three steps is:
\begin{enumerate}[leftmargin=1.1em]
    \item The network, with $\delta^{-1}-1$ loops, maps the input space $[0, 1]^d$ token-wise to the coordinates $\bm{\beta} \in \{0, 1, \dots, \delta^{-1}-1\}^d$ of discretized cubes, and then bijectively maps these coordinates to integers, representing \textit{token IDs} in the set $\{0, 1, \dots, \delta^{-d}-1\}$, using a $\delta^{-1}$-base system; for example, if \(d = 2\) and \(\delta^{-1} = 3\), then coordinates \((\beta_1, \beta_2) = (2, 1)\) are mapped to the integer \(2 \times 3^1 + 1 \times 3^0 = 7\).
    \item The network, with $N$ loops, computes a contextual mapping from the set of $N$ distinct token IDs into the set of \emph{contextual token ID}. \emph{Contextual token IDs} refer to token IDs assigned to each token within the context of a \textit{sequence ID}.
    \item The network, with $2\delta^{-(N+1)d}-1$ loops, approximately maps \emph{contextual token IDs} into the output embeddings of each token in a token-wise manner. To achieve a small approximation error, the network has to be designed so that neighboring IDs correspond to similar output token embeddings. Furthermore, \emph{dummy indices} are used to reduce the error.
\end{enumerate}
The details for each step are provided below.

\paragraph{Step 1.~Token-wise Quantization.}
The input space for each token $\vx \in [0,1]^{d}$ are divided into $\delta$-discretized cubes denoted by $\{Q_{\bm{\beta}}\}$ for ${\bm{\beta} \in \{0,1,\dotsc,\delta^{-1} - 1\}^d}$, defined as
\begin{equation}
Q_{\bm{\beta}}\coloneqq\Big\{\vx \in [0,1]^d:\vx_i\in\big[\bm{\beta}_i\delta,\ (\bm{\beta}_i + 1)\delta \big), \quad \text{for all } i=1,2,\dotsc,d\Big\}.
\end{equation}
By \Cref{lemma:quantize}, 
there exists a feed-forward layer $\ff^{(1)} : \mathbb{R}^{5d} \to \mathbb{R}^{5d}$ of width size $q = 7d$, and two affine linear maps $\mathcal{L}^{(1)}_1 : \mathbb{R}^d \to \mathbb{R}^{5d}$ and $\mathcal{L}'^{(1)}_2 : \mathbb{R}^{5d} \to \mathbb{R}^d$ such that 
\begin{equation}
    \mathcal{L}'^{(1)}_2\circ \big(\mathrm{id} + \ff^{(1)}\big)^{\circ (\delta^{-1} - 1)}\circ\mathcal{L}^{(1)}_1(\vx) = \bm{\beta} \quad \st \quad \vx\in Q_{\bm{\beta}}.
\end{equation}

In addition, we need to bijectively map the $d$-dimensional vector $\bm{\beta}$ to an integer \emph{token ID}, denoted by $z$. We use a \textit{$\delta^{-1}$-base system}: we define the vector $\vu_{(\delta^{-1})} \in \R^d$ as
\begin{equation}
\vu_{(\delta^{-1})} \coloneqq (\delta^{-(d-1)}, \delta^{-(d-2)}, \dotsc, \delta^{-1}, 1)^\top,
\end{equation}
and define \( z \) as
\begin{equation}\label{eq:tok_id}
z \coloneq \vu_{(\delta^{-1})}^\top \bm{\beta} \quad \in \{0, 1, \dotsc, \delta^{-d} - 1\}.
\end{equation}
To implement this, we define an affine linear map $\mathcal{L}^{(1)}_2 : \mathbb{R}^{5d} \to \mathbb{R}$ via 
\begin{equation}
    \mathcal{L}^{(1)}_2(\vx) = \vu_{(\delta^{-1})}^\top \mathcal{L}'^{(1)}_2(\vx).
\end{equation}
Thus, we have
\begin{equation}
\left(\mathcal{L}^{(1)}_2\circ(\ff^{(1)}_1)^{\circ {(\delta^{-1}-1)}} \circ \mathcal{L}^{(1)}_1(\vx)\right)_n = \vu_{(\delta^{-1})}^\top \bm{\beta} = z  \quad \st \quad \vx\in Q_{\bm{\beta}}.
\end{equation}

We establish an upper bound on the maximum distance in input space between adjacent token IDs to derive the approximation error for the following steps. Define the input cubes corresponding to each token ID $z$ as follows:
\begin{equation}
Q_{z}\coloneqq\Big\{\vx \in [0,1]^d:\vx_i\in\big[\bm{\beta}_i\delta,\ (\bm{\beta}_i + 1)\delta\big) \quad \text{for } i=1,2,\dotsc,d \quad \st \quad z = \vu_{(\delta^{-1})}^\top \bm{\beta} \Big\}.
\end{equation}
Then we have
\begin{equation}\label{eq:seq0}
    \max_{z, \vx \in Q_{z}, \vx' \in Q_{z-1}}\|\vx - \vx'\|_2 \leq 
    \begin{cases}
      \delta\sqrt{d}, &\quad \text{if } \bm{\beta}_d \in \{1, 2, \dotsc, \delta^{-1} - 1\}, \\
      \sqrt{d}, &\quad \text{if } \bm{\beta}_d = 0.
    \end{cases}
\end{equation}
Informally, in this \(\delta^{-1}\)-based representation, the least significant digit corresponds to the index of the \(d\)-th dimension, \(\bm{\beta}_d\). As the token ID increments sequentially, the index in the \(d\)-th dimension increases as \(0, 1, 2, \dots, \delta^{-1}-1\), while the higher-order digits remain unchanged. Consequently, consecutive token IDs correspond to tokens that are ``similar'' in the \(d\)-dimensional space, with a maximum distance of \(\delta\sqrt{d}\). However, when a carry occurs, the higher-order digits may change significantly, leading to cases where tokens that are not adjacent in input space become adjacent in their indices. In such cases, the distance is only bounded by \(\sqrt{d}\).

\paragraph{Step 2.~Contextual Mapping.}
The networks, with $N$-loops, map the list of $N$ token IDs, denoted by $\vz \in \{0, 1, \dotsc, \delta^{-d} - 1\}^N$, into sequence IDs bijectively. Combined with token IDs, the network computes contextual mapping.

We consider only the case where all \(N\) input tokens are distinct, disregarding other cases, as they can be treated as negligible when \(\delta\) is small. The number of subsets in which at least one of the \(N\) tokens is duplicated is given by
\begin{equation}
    (\delta^{-d})^N - \big(\delta^{-d} \cdot (\delta^{-d}-1) \dotsm (\delta^{-d}-N+1)\big) < \frac{N(N-1)}{2}\delta^{-(N-1)d},
\end{equation}
when \(\delta\) is sufficiently small.
The volume of these subsets is bounded by
\begin{equation}
    \frac{C\delta^{-(N-1)d}}{\delta^{-Nd}} = \mathcal{O}(N^2\delta^{d}).
\end{equation}
Thus, the error with respect to the \(L^p\) norm is bounded by \(\mathcal{O}(N^{2/p} \delta^{d/p})\).

Let $\sL_{\delta}$ denote the set of $N$ distinct token IDs, \ie
\begin{equation}
\sL_{\delta} \coloneq \{\vz \in \{0, 1, \dotsc, \delta^{-d} - 1\}^N \mid \vz_{i} \neq \vz_{j} \text{ for all } i \neq j\}.
\end{equation}
Due to permutation equivariance, we can assume without loss of generality that elements of $\vz \in \sL_{\delta}$ are ordered, \ie, $\vz_1 > \vz_2 > \cdots > \vz_N$.
Define $\vu_{(\delta^{-d})} \coloneqq (\delta^{-(N-1)d}, \dotsc, \delta^{-d}, 1)^\top$, which satisfy
\begin{equation}
    |\vu_{(\delta^{-d})}^\top \vz- \vu_{(\delta^{-d})}^\top \vz'| > 1, \quad \text{for any } \vz, \vz' \in \sL_{\delta} \text{ with } \vz \neq \vz'.
\end{equation}
This mapping, $\vu_{(\delta^{-d})}^\top \vz$, represents $\vz$ in a $\delta^{-d}$-base system.
Then, we define sequence ID for $\vz \in \sL_{\delta}$ as:
\begin{equation}\label{eq:seqenceID}
\mathrm{s}(\vz) \coloneqq \vu_{(\delta^{-d})}^\top \vz = \sum_{n=1}^{N} \vz_n \delta^{-(N-n)d}.
\end{equation}
By \Cref{lemma:contextmap}, there exists a Transformer block \(\mathrm{TF'^{(2)}}: \mathbb{R}^{5 \times N} \to \mathbb{R}^{5 \times N}\) with single head, head size $s=1$, and width size $q=3$, and two affine linear maps \(\mathcal{L}'^{(2)}_1: \mathbb{R} \to \mathbb{R}^5\) and \(\mathcal{L}'^{(2)}_2: \mathbb{R}^5 \to \mathbb{R}\) such that
\begin{equation}
\bm{\mathcal{L}}'^{(2)}_2 \circ {\big(\tf'^{(2)}\big)}^{\circ N} \circ \bm{\mathcal{L}}'^{(2)}_1(\vz^\top) = \mathrm{s}(\vz) \cdot \mathbf{1}_N^\top.
\end{equation}

Furthermore, we have to add \emph{dummy indices} to alleviate the approximation error caused by the looped architecture in Step 3.
Recall that ${\bB \in \{0,1,\dotsc,\delta^{-1} - 1}\}^{d\times N}$ represents the coordinates of the inputs. Let $\mZ_{\bB} \in \{0,1,\dotsc,\delta^{-1} - 1\}^{d \times N}$ denote the ordered coordinates of $\bB$ where the tokens are ordered by their token IDs, i.e., ${(\vu_{(\delta^{-1})}^\top \mZ_{\bB})}_1 > {(\vu_{(\delta^{-1})}^\top \mZ_{\bB})}_2 > \cdots > {(\vu_{(\delta^{-1})}^\top \mZ_{\bB})}_N$, in other words, $\vz = \vu_{(\delta^{-1})}^\top \mZ_{\bB}$. Recall that we consider only the case where all $N$ input tokens are distinct. By redefining the sequence ID of Eq.~\ref{eq:seqenceID} for $\bB$ instead of $\vz$, sequence IDs in $\delta^{-d}$-base can be rewritten in the $\delta^{-1}$-base system as follows:
\begin{align}
\mathrm{s}(\bB) &\coloneq \vu_{(\delta^{-d})}^\top (\vu_{(\delta^{-1})}^\top \mZ_{\bB}) \\
&= \sum_{i=1}^{d}\sum_{n=1}^{N} {(\mZ_{\bB})}_{i,n} \delta^{-\big((N-n)d+(d-i)\big)}.
\end{align}
%We also define the sequence IDs for each $\bB$ as:
%\begin{align}
%\mathrm{s}(\bB) = \sum_{i=1}^{d}\sum_{n=1}^{N} \bB_{i,n} %\delta^{-\big((N-n)d+(d-i)\big)}.
%\end{align}
Then, we define \textit{extended sequence IDs} as:
\begin{align}
\mathrm{s}_{\text{valid}}(\bB) &\coloneq 2\mathrm{s}(\bB) - {(\mZ_{\bB})}_{d,N} 
\end{align}
and the \emph{dummy sequence IDs} as:
\begin{equation}\label{eq:ex_seq_id}
\mathrm{s}^{b}_{\text{dummy}}(\bB) \coloneq \mathrm{s}_{\text{valid}}(\bB) + b.
\end{equation}
for $b \in \{ \delta^{-1}, \delta^{-1}+1, \dotsc, 2\delta^{-1}-1\}$. Then, define each set as follows:
\begin{align*}
    \calS_{\text{valid}} &\coloneqq \left\{ \mathrm{s}_{\text{valid}}(\bB) : \bB \in \{0,1,\dotsc,\delta^{-1} - 1\}^{d \times N} \right\}, \\
    \calS_{\text{dummy}} &\coloneqq \left\{ \mathrm{s}^{b}_{\text{dummy}}(\bB) : \bB \in \{0,1,\dotsc,\delta^{-1} - 1\}^{d \times N},\ b \in \{ \delta^{-1}, \delta^{-1}+1, \dotsc, 2\delta^{-1}-1 \} \right\}.
\end{align*}
Recalling that ${(\mZ_{\bB})}_{d,N} \in \{0, \ldots, \delta^{-1}-1 \}$, we observe that
\begin{align}
    \calS_{\text{valid}} \cap \calS_{\text{dummy}} &= \emptyset, \label{eq:seq_set_1} \\ \calS_{\text{valid}} \cup \calS_{\text{dummy}} &= \{0, 1, \dotsc, 2\delta^{-Nd}-1\}. \label{eq:seq_set_2}
\end{align}
We define the input cubes for each valid sequence ID $s \in \calS_{\text{valid}}$ as follows:
\begin{equation}
\bm{Q}_{s}\coloneqq\Big\{\rmX \in [0,1]^{d \times N}:\mX \in \bm{Q}_{\bB}, \quad \st \quad s = \mathrm{s}'(\bB) \Big\}.
\end{equation}
Analogous to Eq.~\ref{eq:seq0}, we have
\begin{equation}\label{eq:sbound}
    \max_{s \in \calS_{\text{valid}}, \mX \in \bm{Q}_{s}, \mX' \in \bm{Q}_{s-1}}\|\mX - \mX'\|_2 \leq 
    \begin{cases}
    \delta\sqrt{Nd} & \quad \text{if } \bB_{d,N} \in  \{1, 2, \dotsc, \delta^{-1} - 1\}, \\
    \sqrt{Nd} &\quad \text{if } \bB_{d,N} = 0.
    \end{cases}
    %\quad \forall \mX_s \in Q_{s}, \forall \mX_{s-1} \in Q_{s-1}. %\ \text{where } s = \mathrm{s}'(\bB).
\end{equation}

To implement this, we slightly modified $\mathrm{TF'^{(2)}}$. By \Cref{corollary:contextmap}, there exists a Transformer block \(\mathrm{TF}^{(2)}: \mathbb{R}^{8 \times N} \to \mathbb{R}^{8 \times N}\) with two heads, head size $s=1$, and width size $q=5$, and two affine linear maps \(\mathcal{L}^{(2)}_1: \mathbb{R}^2 \to \mathbb{R}^8\) and \(\mathcal{L}^{(2)}_2: \mathbb{R}^8 \to \mathbb{R}\) \st
\begin{equation}
\bm{\mathcal{L}}^{(2)}_2 \circ {\big(\tf^{(2)}\big)}^{\circ N} \circ \bm{\mathcal{L}}^{(2)}_1\left( \left[
    \begin{array}{ccc}
        {\vz}^\top  \\
        {\mZ_{d,:}}  \\
    \end{array}\right] \right) = (2\mathrm{s}(\vz) - \mZ_{d,N}) \cdot \mathbf{1}_N^\top.
\end{equation}

\paragraph{Step 3.~Token-wise Function Value Mapping.}
In Steps 1 and 2, the network receives a token ID and extended sequence ID as input for each token, collectively forming a \emph{contextual token ID}. With 2$\delta^{-(N+1)d}-1$ loops, the network approximately maps \emph{contextual token IDs} to the output token embeddings of the target function.

To construct contextual token IDs, we define a bijective affine linear mapping $\mathcal{L}^{(3)}_0: \N^{2} \to \N$ as follows:
\begin{equation}\label{eq:context_id}
    \mathcal{L}^{(3)}_0(z, s) \coloneq 2z\delta^{-Nd} + s,
\end{equation}
where $z$ represents a token ID, defined in Eq.~\ref{eq:tok_id}, and $s$ represents an sequence ID. Recall that $z \in \{0, 1, \dotsc, \delta^{-d} - 1\}$ and sequence IDs are less than $2\delta^{-Nd}$, so informally, it's as if we are adding another digit, $z$, as the most significant digit in a $\delta^{-d}$-based system.
Define the set of contextual token IDs as:
\begin{equation}
	\calK_{\text{valid}} \coloneqq \bigg\{\mathcal{L}^{(3)}_0\big(z, s) : z \in \{0, 1, 2, \dotsc, \delta^{-d}-1\}, s \in \calS_{\text{valid}}\ \bigg\}.
\end{equation}
and dummy contextual token ID as
\begin{equation}
	\calK_\text{dummy} \coloneqq \bigg\{ \mathcal{L}^{(3)}_0\big(z, s) : z \in \{0, 1, 2, \dotsc, \delta^{-d}-1\}, s \in \calS_{\text{dummy}}\ \bigg\}.
\end{equation}

From Eq.~\ref{eq:seq_set_1} and Eq.~\ref{eq:seq_set_2}, the following holds:
\begin{align}
    \calK_{\text{valid}} \cap \calK_{\text{dummy}} &= \emptyset, \\ 
    \calK \coloneq \calK_{\text{valid}} \cup \calK_{\text{dummy}} &= \{0, 1, \dotsc, 2\delta^{-(N+1)d}-1\}.
\end{align}

We now define the target output embedding for each ID. Let $\vy_k\in \R^d$ denote the contextual token embedding corresponding to each contextual token ID, defined as follows:
\begin{equation}
    \vy_k \coloneq 
    \begin{cases} 
    \bar{f}(\hat{\mX}_{\bB})_{:, n} \quad \st \quad \mathcal{L}^{(3)}_0\big(\vz_n, \mathrm{s}'(\bB)\big) = k & \text{for } k \in \calK_{\text{valid}}, \\
    \operatorname{lin\_interp}\left(\vy_{\operatorname{near^-}(k,\calK_{\text{valid}})}, \vy_{\operatorname{near^+}(k, \calK_{\text{valid}})}, k-\operatorname{near^-}(k, \calK_{\text{valid}}), \delta^{-1}\right) & \text{for } k \in \calK_{\text{dummy}},
    \end{cases}
\end{equation}
where the nearest functions are defined as
\begin{equation}
    \operatorname{near^+}(a, \mathcal{S})  \coloneqq \argmin_{b \in \mathcal{S}, b > a} \left| a - b \right|, \quad
    \operatorname{near^-}(a, \mathcal{S})  \coloneqq \argmin_{b \in \mathcal{S}, b < a} \left| a - b \right|,
\end{equation}
and a function $\operatorname{lin\_interp}$ is defined by
\begin{equation}
\operatorname{lin\_interp}(a, b, t, n) \coloneqq a + \frac{t}{n} (b - a).
\end{equation}
The illustration of $\vy$ is shown in Fig.~\ref{fig:rate}.

Thanks to our design of $\calK$, the error between neighboring contextual token embeddings can be bounded as follows.
There are two types of error: the variation induced by contextual perturbation and the variation induced by token perturbation, or both.
The examples of each pattern are shown in Fig.~\ref{fig:rate}, such that
\begin{enumerate}%[leftmargin=1.1em]
    \item[(1)] I \underline{draft} papers. \ ; \ I \underline{write} papers. \ \ (perturbation of token)
    \item[(2)] He \underline{writes} papers. \ ; \ Mozart \underline{writes} music. \ \ (perturbation of context)
    \item[(3)] He \underline{drinks} coffee. ; He \underline{drinks} coffee. \ \ (perturbation of both token and context)
\end{enumerate}

Recall that there are two types of adjacency that generally have small errors, with a few instances causing large errors when `carryover' occurs, as stated in Eq.~\ref{eq:seq0} and Eq.~\ref{eq:sbound}. The key point of our design of $\calK$ is that when a large variation occurs, linear interpolation inevitably takes place to smooth out the steep changes between adjacent indices.

Thus for token ID in Eq.~\ref{eq:seq0}, if a small variation of token ID, with same context, in input space, $\delta\sqrt{d}$, occurs, the error $e_k = \| \vy_{k} - \vy_{k-1} \|_p$ in the output contextual token embedding can be bounded by the modulus of token continuity as
\begin{equation}
    e_k \le \omega^{\text{tok}}_f(\delta\sqrt{d}).
\end{equation}
In contrast, if a large variation in token input space, $\sqrt{d}$, occurs in the token input space, the error can be bounded using linear interpolation with $\delta^{-1}$ intermediate points as:
\begin{equation}
    e_k \le \delta \omega^{\text{tok}}_f(\sqrt{d}).
\end{equation}

The same holds for sequence IDs in Eq.~\ref{eq:sbound}. That is, since the variation in context is bounded by sequence variation, the difference in adjacent contextual token IDs caused by perturbations in context is bounded as
\begin{equation}
    e_k \le \omega^{\text{cont}}_f(\delta\sqrt{Nd}), \quad \text{or } \quad e_k \le \delta \omega^{\text{cont}}_f(\sqrt{Nd}).
\end{equation}
for $k=0, 1, \dots, K-1$.

Since that 
\begin{equation}
    \omega_f^{\text{cont, tok}}(n \cdot t) \leq n \cdot \omega_f^{\text{cont, tok}}(t)
\end{equation}
for any $n \in \N$ and $t \in [0, \infty)$, with $\delta<1$, it follows that (note that it holds with opposite inequality due to $\delta<1$)
\begin{equation}
    \delta\omega^{\text{cont}}_f(\sqrt{Nd}) \le \omega^{\text{cont}}_f(\delta\sqrt{Nd}) \quad \text{ and } \quad \delta\omega^{\text{cont}}_f(\sqrt{Nd}) \le \omega^{\text{cont}}_f(\delta\sqrt{Nd}).
\end{equation}

Considering the maximum difference when both token and context perturbations occur, we have, with the triangle inequality, 
\begin{equation}\label{eq:max_k}
    \max_{k'\in \calK} \| \vy_{k'} - \vy_{k'-1} \|_p \leq 
    \delta \left(\omega^{\text{tok}}_f(\sqrt{d}) + \omega^{\text{cont}}_f(\sqrt{Nd}) \right)
    \leq \omega^{\text{tok}}_f(\delta\sqrt{d}) + \omega^{\text{cont}}_f(\delta\sqrt{Nd}).
\end{equation}

Generally, the following inequality holds, for any vector $\vx \in \mathbb{R}^d$, 
\begin{equation}\label{ineq:norm}
    \max_i |\vx_i| \leq \|\vx\|_p \leq d^{\frac{1}{p}} \max_i |\vx_i|.
\end{equation}
Substituting $\vx = \vy_{k} - \vy_{k-1}$ into the above inequality results in
\begin{equation}\label{eq:max_i_k}
    \max_i |(\vy_{k} - \vy_{k-1})_i| \leq \| \vy_{k} - \vy_{k-1} \|_p.
\end{equation}

By \Cref{lemma:memorize}, there exists a feed-forward layer $\mathrm{FF}^{(3)}:\R^{12d}\to \R^{12d}$ of width size $18d$ and two affine linear maps $\calL^{(3)}_1:\R\to\R^{12d}$ and $\calL^{(3)}_2:\R^{12d}\to \R^{d}$ such that
\begin{equation}
    |\Big(\calL^{(3)}_2\circ{(\mathrm{id} + \ff^{(3)})}^{(K-1)} \circ \calL^{(3)}_1(k) - \vy_{k}\Big)_i|  \leq \max_{k'\in \calK} |(\vy_{k'} - \vy_{k'-1})_i|,
\end{equation}
for $i = 1, 2, \dotsc, d$ and $k=0,1,\dotsc,K-1$, where $K=2\delta^{-(N+1)d}$. 

Let $\tilde{\vy}_k \in \R^d$ be defined as $\tilde{\vy}_k \coloneq \calL^{(3)}_2\circ{(\mathrm{id} + \ff^{(3)})}^{(K-1)} \circ \calL^{(3)}(k)$. Then we have  
\begin{align}
    |{(\tilde{\vy}_k - \vy_{k})}_i| & \leq \max_{k'\in \calK} |(\vy_{k'} - \vy_{k'-1})_i| \\ & \leq \max_{k'\in \calK} \| \vy_{k'} - \vy_{k'-1} \|_p \quad \text{because of Eq.~\ref{eq:max_i_k}} \\
    & \leq \omega^{\text{tok}}_f(\delta\sqrt{d}) + \omega^{\text{cont}}_f(\delta\sqrt{Nd}) \quad \text{because of Eq.~\ref{eq:max_k}} \label{ineq:i_error},
\end{align}
for $i = 1, 2, \dotsc, d$ and $k=0,1,\dotsc,K-1$.

\paragraph{Concatenated into a Single Transformer Layer }
Define the input space for each step as:
\begin{equation}
\mX^{(0)} \in \R^{1 \times N}, \quad
\mX^{(1)} \in \R^{5d \times N}, \quad
\mX^{(2)} \in \R^{8 \times N}, \quad
\mX^{(3)} \in \R^{12d \times N},
\end{equation}
where $\mX^{(0)}$ act as counter.
Define $\attn: \R^{(17d+9) \times N} \to \R^{(17d+9) \times N}$ with two heads, head size $s = 1$, and width size $q = 5$ via
\begin{equation}
    \attn\left(
\left[
\begin{array}{c}
\mX^{(0)} \\
\mX^{(1)}  \\ 
\mX^{(2)}  \\ 
\mX^{(3)}  
\end{array}
\right]
\right)
= 
\left[
\begin{array}{c}
\bm{0}_{1 \times N} \\ 
\bm{0}_{5d \times N} \\ 
\attn^{(2)}(\mX^{(2)})  \\ 
\bm{0}_{12d \times N}
\end{array}
\right],
\end{equation}
where $\attn^{(2)}$ denote the self-attention layer of $\tf^{(2)}$ and \( \mathbf{0}_{m \times N} \) denote \( m \times N \) zero matrix. 
Let 
\[
x_0 \in \R, \quad
\vx_1 \in \R^{5d}, \quad
\vx_2 \in \R^{8}, \quad
\vx_3 \in \R^{12d}
\]
denote the token-wise input space.
Define $\ff: \R^{17d+9} \to \R^{17d+9}$, with the impulse function in~\Cref{lemma:impulse}, as:
\begin{align}
\ff\left(
\left[
\begin{array}{c}
x_0 \\
\vx_1  \\ 
\vx_2  \\ 
\vx_3  \\ 
\end{array}
\right]
\right)
= 
\left(
\left[
\begin{array}{c}
1 \\
\ff^{(1)}(\vx_1)  \\ 
\ff^{(2)}(\vx_2) + \bm{\mathrm{impulse}}_{(\delta^{-1}-1)}\big(\mathcal{L}'(\vx_1), x_0 \big)
\\ 
\ff^{(3)}(\vx_3) + \bm{\mathrm{impulse}}_{(\delta^{-1}+N)}\big(\mathcal{L}''( \vx_2 ), x_0 \big) \\ 
\end{array}
\right]
\right)
\end{align}
where $\ff^{(2)}$ denotes the feed-forward layer of $\tf^{(2)}$, two linear maps $\mathcal{L}': \R^{5d} \to \R^8$ and $\mathcal{L}'': \R^8 \to \R^{12d}$ are defined respectively as follows:
\begin{align}
    \mathcal{L}'(\vx) &\coloneq \mathcal{L}^{(2)}_{1} 
    \left( \left[
    \begin{array}{ccc}
        \mathcal{L}^{(1)}_{2}(\vx)  \\
        \vx_{5d}  \\
    \end{array}\right] \right)\\
    \mathcal{L}''(\vx) &\coloneq \mathcal{L}^{(3)}_{1} \Big( \mathcal{L}^{(3)}_{0} \big((\vx_2)_1, \mathcal{L}^{(2)}_{2}(\vx_2)\big) \Big)
\end{align}
and $\bm{\mathrm{impulse}}$ refers to the dimension-wise application of $\mathrm{impulse}$ function. Note that $x_0$ serves the role of a counter. 

Each step should always be zero or set to an appropriate initial value at the beginning of the corresponding loop iteration. If the bias term causes it to deviate from this value before the iteration starts, the offset can be subtracted in advance to compensate.

As shown in \Cref{lemma:impulse}, the impulse function requires $2$ ReLU functions per dimension and $2$ ReLU functions for the corresponding loop iteration. Since the total dimension of $\vx_2$ and $\vx_3$ is $12d+8$, this results in an additional width size of $24d + 16 + 4 = 24d + 20$. Since the width size is $7d$ for $\ff^{(1)}$, $5$ for $\tf^{(2)}$, and $18d$ for $\ff^{(3)}$, resulting in a total width size of $49d + 25$.

Define two affine linear maps $\mathcal{L}_1:\R^d\to\R^{17d+9}$ and $\mathcal{L}_2:\R^{17d+9}\to\R^d$ via
\begin{equation}
    \mathcal{L}_1(\vx) = {(0, \mathcal{L}^{(1)}_1(\vx), {\zero_{12d+8}}^\top)}^\top, \quad
     \mathcal{L}_2\big({(x_0, \vx_1 , \vx_2,\vx_3)}^\top\big) = \mathcal{L}^{(3)}_2(\vx_3).
\end{equation}
Thus, the network, consisting of three steps, is defined as:
\begin{equation}
\begin{split}
    \tilde{f}(\mX) &\coloneq \bm{\mathcal{L}}_2\circ \mathrm{TF}^{\circ r}\circ \bm{\mathcal{L}}_1(\mX)
\end{split}
\end{equation}
where $r = \delta^{-1} + N + 2\delta^{-(N+1)d}$ and $\mathrm{TF}: \R^{(17d+9)\times N} \to \R^{(17d+9)\times N}$ consists of $\attn$ and $\ff$.

\paragraph{Deriving Approximation Error}
Generally, the following inequality holds.
\begin{equation}
\sum_{i=1}^{n} x_i^p \leq \left( \sum_{i=1}^{n} x_i \right)^p \quad \text{for } x_i \geq 0 \text{ and } p \geq 1.
\end{equation}
Substituting $x_i = \| \bar{f}(\mX) - f(\mX) \|_p$ into the above inequality results in
\begin{equation}\label{ineq:norm_app_0}
    \| \bar{f} - f \|_{L^p} = \Big (\int \norm{\bar{f}(\mX) - f(\mX)}_p^p d\mX \Big )^{1/p} \leq  \int \| \bar{f}(\mX) - f(\mX) \|_p d\mX. %\leq \omega_f(\delta \sqrt{Nd}).
\end{equation}
Also, generally, the following inequality holds, for $\vx \in \mathbb{R}^m$, 
\begin{equation}\label{ineq:main_norm}
    \max_i |\vx_i| \leq \|\vx\|_p \leq m^{\frac{1}{p}} \max_i |\vx_i|.
\end{equation}
With the triangle inequality, we can bound the approximation error as
\begin{align}
     \| \tilde{f} - f \|_{L^p}
     &\leq \int \| \tilde{f}(\mX) - f(\mX) \|_p d\mX \quad \text{because of Eq.~\ref{ineq:norm_app_0}} \\ %+ \mathcal{O}(N^{2/p} \delta^{d/p}) \\
     %&\leq \| \tilde{f}(\mX) - f(\mX) \|_p  \\ 
     &\leq 
     \int\| \tilde{f}(\mX) - \bar{f}(\mX) \|_p d\mX + \int\| \bar{f}(\mX) - f(\mX) \|_p d\mX  \\
     &\leq {(Nd)}^{\frac{1}{p}} \max_{k'\in \calK} | \tilde{\vy}_{k'} - \vy_{k'} | + \int\| \bar{f}(\mX) - f(\mX) \|_p d\mX + \mathcal{O}(N^{2/p} \delta^{d/p}) \quad \text{because of Eq.~\ref{ineq:main_norm}} \\
     &\leq {(Nd)}^{\frac{1}{p}} \max_{k'\in \calK} | \tilde{\vy}_{k'} - \vy_{k'} | + \omega_f(\delta \sqrt{Nd}) + \mathcal{O}(N^{2/p} \delta^{d/p}) \quad \text{because of Eq.~\ref{eq:cube-err}} \\
     &\leq (Nd)^{\frac{1}{p}} \big( \omega^{\text{tok}}_f(\delta\sqrt{d}) + \omega^{\text{cont}}_f(\delta\sqrt{Nd}) \big) + \omega_f(\delta \sqrt{Nd}) + \mathcal{O}(N^{2/p} \delta^{d/p}) \quad \text{because of Eq.~\ref{ineq:i_error}}.
\end{align}
Then, $\delta$ is expressed in terms of the number of loops as:
\begin{align}
   %r = (\delta^{-1} - 1) + 1 + (N) + 1 + \big(2 \delta^{-(N+1)d} - 1\big) &\Leftrightarrow 
   r = \delta^{-1} + N + 2\delta^{-(N+1)d} &\Leftrightarrow 
   \delta^{-1} + 2\delta^{-(N+1)d} = r - N  \\
    &\Rightarrow \delta^{-1} \cdot 2\delta^{-(N+1)d} \ge r - N  \\
    &\Leftrightarrow \delta \leq {(\frac{r-N}{2})}^{-1/ \big((N+1)d+1\big)}.
\end{align}
Thus, we have completed the proof of \Cref{thm:rate}.
\end{proof}

\subsection{Proof of Theorem \ref{thm:universal}}
Extending the construction in \Cref{thm:rate}, we then account for the boundedness of the weights and bit complexity.

\begin{proof}
Due to the use of bounded weights to approximate discontinuous functions, there inevitably exist regions where quantization errors arise in Step 1 of our construction. We define these regions, for \( 0 < \epsilon < \delta \), as
\begin{equation}
\Omega([0,1]^{d \times N}, \delta^{-1}, \epsilon) \coloneqq \left\{ \mX \in [0,1]^{d \times N} : \exists\, i, n \quad \text{such that} \quad \mX_{i,n} \in \bigcup_{k=1}^{\delta^{-1}} \left( k\delta - \epsilon, k\delta \right) \right\}.
\end{equation}
That is, \(\Omega\) consists of all inputs for which at least one coordinate \(\mX_{i,n}\) lies within an \(\epsilon\)-neighborhood of a quantization discontinuity point \(k\delta\) for some \(k \in \{1, \dotsc, \delta^{-1}\}\).

Outside the region \(\Omega\), the quantization function is piecewise constant and can be precisely approximated using bounded weights. By the construction in Lemma~\ref{lemma:quantize}, the maximum magnitude of weights required is proportional to \( 1/\epsilon \).
We now estimate the Lebesgue measure of \(\Omega\). For each coordinate \((i,n)\), the set
\[
\bigcup_{k=1}^{\delta^{-1}} (k\delta - \epsilon, k\delta)
\]
is a union of \(\delta^{-1}\) intervals, each of length \(\epsilon\), so the total measure of this set is \(\delta^{-1} \epsilon\).
Since the input \(\mX\) has \(d \times N\) coordinates, and since we consider the event that at least one coordinate lies in a bad region, we apply the union bound for measures to obtain
\(
\mathrm{meas}(\Omega) \leq dN \times \delta^{-1} \epsilon.
\)
Substituting the bound on \(\mathrm{meas}(\Omega)\), and using the fact that the maximum magnitude of the weights \(M\) satisfies \(M = 1/\epsilon\), we have
\(\mathrm{meas}(\Omega) \leq dN \times \delta^{-1} M^{-1}.\)
Consequently, the \(L^p\)-norm of the quantization error is bounded by \(\mathcal{O}\left(\left( (M\delta)^{-1} dN \right)^{1/p}\right).\)

When replacing hardmax with softmax, it is required that the error in step 2 remains sufficiently small so that it does not affect step 3. Specifically, a step function is used, in \Cref{lemma:memorize} for step 3, to map the index, defined for $\epsilon>0$ as
\begin{equation}
    \mathrm{step}_\epsilon(x) = 
    \begin{cases}
        1 \quad & \text{if}\ x \geq 0,\\
        0 \quad & \text{if}\ x < 0-\epsilon,
    \end{cases}
\end{equation}
The construction of \Cref{lemma:memorize} use this function for indices $k\in \calK$ obtained from step 2 as $\mathrm{step}_\epsilon(k-i)$ for $i=0, 1 \cdots K-1$. Thus, the error in step 2 does not affect step 3 if the perturbed indices, denoted by $\tilde{k}$, satisfies
\begin{equation}\label{eq:soft_step}
\mathrm{step}_\epsilon(\tilde{k} + 1 -i) = \mathrm{step}_\epsilon(k-i) \quad \text{for all }  i=0, 1 \cdots K-1.
\end{equation}

To estimate the error introduced by replacing hardmax with softmax in step 2, we revisit the construction of \Cref{lemma:contextmap}. Specifically, we extract and reformulate the key components necessary for this estimation. In particular, we consider a simplified form of the attention computation in Eq.~\ref{eq:attn_2}, denoted by \( g_H: \mathbb{R}^N \times \mathbb{R}^N \to \mathbb{R} \), which is defined as  
\begin{equation}
    g_H(\vv, \va) \coloneq \vv_{\arg\max_{i} \va_i}
\end{equation}  
When hardmax in Eq.~\ref{eq:attn_2} is replaced with softmax, the function can be expressed as
\begin{equation}
    g_S(\vv, \va, \lambda) \coloneq \sum_{i=1}^{N} \frac{\exp(\lambda \va_i)}{\sum_{j=1}^{N} \exp(\lambda\va_j)} \vv_i,
\end{equation}
where $\lambda > 0$. Note that $ \lim_{\lambda\to \infty} g_S(\vv, \va, \lambda)= g_H(\vv, \va)$. According to the construction of \Cref{lemma:contextmap}, such as Eq.~\ref{eq:vxk}, the domains, denoted by $\vv \in \sL_{\delta}$ and $\va \in \sA_\delta$, are restricted on 
\begin{gather}
\sL_{\delta} \coloneq \{\vz \in \{0, 1, \dotsc, \delta^{-d} - 1\}^N \mid \vz_{i} \neq \vz_{j} \text{ for all } i \neq j\}, \\
\sA_\delta \coloneq \left\{ \va \in \mathbb{R}^N \;\middle|\;
\va_i \in \{0,1,\dots, \delta^{-d}-1\} \text{ or } \va_i < 0, \quad
\text{if } \va_i, \va_j \geq 0, \text{ then } \va_i \neq \va_j \text{ for all } i \neq j, \quad \exists i \text{ s.t. } a_i > 0
\right\}.
\end{gather}
We impose the following two additional assumptions on \(\vv \in \sL_{\delta}\) and
\(\va \in \sA_\delta\)
\begin{equation}
    n \;=\; \arg\max_{1 \le i \le N} \va_i,
\qquad
v_n \;=\; \max_{1 \le i \le N} v_i.
\end{equation}

Under these assumptions, for any finite \(\lambda>0\) we have
\begin{equation}
0 < g_S(\vv, \va, \lambda) < g_H(\vv, \va)
\end{equation}
and
\begin{equation}
    g_S(\vv, \va, \lambda) \ge \frac{\exp(\lambda \va_n)}{\sum_{j=1}^{N} \exp(\lambda\va_j)} \vv_n,
\end{equation}
%where $n \coloneq \argmax_i {\va_i}$. 
Thus we have
\begin{align}
     g_H(\vv, \va) - g_S(\vv, \va, \lambda) &= \vv_n - g_S(\vv, \va, \lambda) \\
     & \le \frac{\sum_{j=1}^{N} \exp(\lambda\va_j) - \exp(\lambda \va_n)}{\sum_{j=1}^{N} \exp(\lambda\va_j)} \vv_n  \\
     & = \frac{\sum_{j\ne n} \exp(\lambda\va_j)}{\sum_{j\ne n} \exp(\lambda\va_j) +\exp(\lambda \va_n) } \vv_n \\
     & \le \frac{\sum_{j\ne n} \exp(\lambda\va_j)}{\exp(\lambda \va_n) } \vv_n \\
     & = \vv_n \sum_{j\ne n} \exp\big(\lambda(\va_j - \va_n)\big) \\
     & \le \vv_n N\exp(-\lambda) \quad \text{because } \va_j - \va_n \le -1 \\
     & \le \delta^{-d} N \exp(-\lambda).
\end{align}
Thus, if we aim to bound the error within $\epsilon > 0$, $\lambda$ must satisfies
\begin{equation}
     \delta^{-d} N \exp(-\lambda) < \epsilon \Leftrightarrow \lambda > \log\Big(\frac{\delta^{-d} N}{\epsilon}\Big).
\end{equation}\label{eq:lambda}
From Eq.~\ref{eq:context_id}, the error of contextual ID $k - \tilde{k}$ can be bounded in terms of the error of $\epsilon = g_H(\vv, \va) - g_S(\vv, \va, \lambda)$ as:
\begin{equation}
    k - \tilde{k} \le (\vu_{(\delta^{-d})})^\top (\epsilon \bm{1}_N) \le \epsilon N\delta^{-(N-1)d}
\end{equation}
where $\vu_{(\delta^{-d})} \coloneqq (\delta^{-(N-1)d}, \dotsc, \delta^{-d}, 1)^\top$ and $\bm{1}_N \in \R^{N}$ denotes the all-ones vector.

Since Eq.~\ref{eq:soft_step} holds if $0 < k - \tilde{k} < 1$, it follows that
\begin{equation}
    \lambda > \log\Big(\big(\delta^{-d}N\big)\cdot(N\delta^{-(N-1)d})\Big) \Rightarrow \epsilon N\delta^{-(N-1)d} < 1,
\end{equation}
and Eq.~\ref{eq:soft_step} holds. Thus, the bit complexity of $\lambda$ can be bounded by 
\begin{equation}
    \mathcal{O}\big(\log\log(\delta^{-Nd} N^2)\big),
\end{equation}
while ensuring that no error occurs in step 3 when using the softmax function instead of the hardmax function.

The bit complexity at each step of the computation can be analyzed as follows.  
In Step~1 and Step~2, the bit complexity is bounded by \( \mathcal{O}(\log \delta^{-1}) \), reflecting the cost of maintaining precision within error \( \delta \).  
In contrast, Step~3 incurs a significantly higher cost, with a bit complexity bounded by \( \mathcal{O}(2\delta^{-(N+1)d}) \), due to the need to evaluate higher-order terms accurately.

As a result, the overall bit complexity of the Looped Transformer is dominated by Step~3 and can be bounded by
\begin{equation}
    \mathcal{O}(\max\{\log\log(\delta^{-Nd} N^2), 2\delta^{-(N+1)d}\}) = \mathcal{O}(\delta^{-(N+1)d}) \qquad (\delta \to 0).
\end{equation}
With \Cref{thm:rate}, the proof of \Cref{thm:universal} is completed.
\end{proof}

\subsection{Approximating Discontinuous Piecewise Functions}
We define three utility functions using ReLU activations. Since the target function is discontinuous, there are negligible `trifling regions' introduced by the bounded weights of the networks.
\begin{proposition}[Rectangular function]
\label{lemma:rect}
Given $t_1, t_2 \in \R$, there exist four $\mathrm{ReLU}$ functions that can approximate the following function, denote by $\mathrm{rect}_t:\R \to \R$, for $0 <\epsilon<t_2-t_1$, such that:
\begin{equation}
    \mathrm{rect}_{(t_1, t_2, \epsilon)}(x) = 
    \begin{cases}
        1 \quad & \text{if}\ x \in [t_1,t_2-\epsilon],\\
        0 \quad & \text{if}\ x \in (-\infty, t_1-\epsilon]\cup[t_2, \infty),
    \end{cases}
\end{equation}
which is represented by
\begin{equation}
    \mathrm{rect}_{(t_1, t_2, \epsilon)}(x) = \relu\big(\tfrac{x-t_1+\epsilon}{\epsilon}\big)
				- \relu\big(\tfrac{x-t_1}{\epsilon}\big)
				+\relu\big(\tfrac{-x+t_2}{\epsilon}\big)
				-\relu\big(\tfrac{-x+t_2-\epsilon}{\epsilon}\big)-1.
\end{equation}
\end{proposition}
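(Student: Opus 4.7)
The approach is a direct piecewise verification of the proposed four-ReLU expression. I would first group the four terms into two pairs,
\begin{equation*}
R_1(x) \coloneqq \relu\bigl(\tfrac{x-t_1+\epsilon}{\epsilon}\bigr) - \relu\bigl(\tfrac{x-t_1}{\epsilon}\bigr), \qquad R_2(x) \coloneqq \relu\bigl(\tfrac{-x+t_2}{\epsilon}\bigr) - \relu\bigl(\tfrac{-x+t_2-\epsilon}{\epsilon}\bigr),
\end{equation*}
so that the asserted identity reduces to $\mathrm{rect}_{(t_1,t_2,\epsilon)}(x) = R_1(x) + R_2(x) - 1$. The first pair is the standard ``soft step'' that rises from $0$ to $1$ across a transition of width $\epsilon$ at $t_1$, and the second pair is its reflection that falls from $1$ to $0$ across a transition of width $\epsilon$ at $t_2$.

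Next I would carry out a three-case analysis on each pair, partitioning $\R$ according to the signs of the ReLU arguments. For $R_1$ this yields value $0$ on $(-\infty, t_1-\epsilon]$, linear interpolation from $0$ to $1$ on $[t_1-\epsilon, t_1]$, and value $1$ on $[t_1,\infty)$; the analogous analysis of $R_2$ gives value $1$, a linear descent, and value $0$ on the corresponding intervals around $t_2$. Each case is immediate from the definition of ReLU as the positive part, so I would not belabor it.

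Finally, I would invoke the hypothesis $\epsilon < t_2 - t_1$, which yields $t_1 < t_2 - \epsilon$, so that the four breakpoints $t_1 - \epsilon < t_1 \leq t_2 - \epsilon < t_2$ are strictly ordered and partition $\R$ into five well-defined intervals. Adding $R_1 + R_2 - 1$ region by region then produces $0$ on $(-\infty, t_1-\epsilon]$, value $1$ on $[t_1, t_2-\epsilon]$, and $0$ on $[t_2,\infty)$, matching the stated specification; the two transition intervals $[t_1-\epsilon, t_1]$ and $[t_2-\epsilon, t_2]$, each of width $\epsilon$, are precisely the regions the statement leaves unconstrained and correspond to the ``trifling regions'' that bounded-weight ReLU networks cannot eliminate. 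There is no substantive obstacle in this proof; the only point requiring attention is the use of $\epsilon < t_2 - t_1$, which guarantees that the left ramp-up completes before the right ramp-down begins, so that the plateau value $1$ is attained on a nondegenerate interval rather than being immediately cancelled.
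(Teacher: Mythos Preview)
Your proposal is correct. The paper does not provide a separate proof of this proposition; the explicit four-ReLU formula is stated directly within the proposition and the verification is left implicit, so your piecewise analysis via the two soft-step pairs $R_1$ and $R_2$ is exactly the natural way to confirm the claim and matches the intended reasoning.
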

Note that maximum magnitude of the weights is $1/\epsilon$ and  `trifling regions' are $(t-\epsilon, t)\cup(t+1-\epsilon, t-1)$.
\begin{proposition}[Step function]
\label{lemma:step}
There exist four $\mathrm{ReLU}$ functions that can approximate the following function, denote by $\mathrm{step}:\R \to \R$, for $\epsilon>0$, such that:
\begin{equation}
    \mathrm{step}_\epsilon(x) = 
    \begin{cases}
        1 \quad & \text{if}\ x \geq 0,\\
        0 \quad & \text{if}\ x < 0-\epsilon,
    \end{cases}
\end{equation}
which is represented via
\begin{equation}
    \mathrm{step}_\epsilon(x) = \relu\big(\tfrac{x}{\epsilon}+1\big)-\relu\big(\tfrac{x}{\epsilon}\big).
\end{equation}
\end{proposition}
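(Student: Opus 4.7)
The plan is to verify the proposed closed-form identity by a direct case analysis on the value of $x$, exploiting the piecewise-linear structure of $\relu$. The combination $\relu(x/\epsilon+1) - \relu(x/\epsilon)$ has its only kinks where the arguments change sign, namely at $x/\epsilon = -1$ and $x/\epsilon = 0$, i.e.\ at $x = -\epsilon$ and $x = 0$. This partitions $\R$ into three pieces on each of which the candidate expression is affine, so the claim reduces to evaluating it on each piece and comparing with the two branches of $\mathrm{step}_\epsilon$.

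First I would evaluate at $x \ge 0$: both $x/\epsilon$ and $x/\epsilon + 1$ are nonnegative, so the two $\relu$ terms unclip and the difference telescopes to $(x/\epsilon + 1) - (x/\epsilon) = 1$, matching the first branch. Second, I would evaluate at $x < -\epsilon$: now $x/\epsilon + 1 < 0$, so both $\relu$ terms vanish and the difference is $0$, matching the second branch. In the intermediate band $-\epsilon \le x < 0$ only the first $\relu$ is active, producing a linear ramp $x/\epsilon + 1$ from $0$ to $1$; this is consistent with interpreting $\epsilon$ as the width of the soft transition and explains why the statement only constrains the output outside this band.

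There is no real obstacle; the argument is mechanical once the two kink locations are identified. The only subtlety is the one-sided convention at $x = 0$ (equality is included in the $\mathrm{step}_\epsilon = 1$ branch), which is automatically respected since $\relu(0) = 0$ and $\relu(1) = 1$ force the value at the boundary to agree with the $x \ge 0$ piece. By inspection the maximum magnitude of any weight in the construction is $1/\epsilon$, which matches the weight-bound accounting used in the bit-complexity analysis of \Cref{thm:universal} and in the quantization-error estimate via the set $\Omega$.
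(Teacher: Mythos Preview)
Your proposal is correct and is essentially the same as what the paper does: the paper provides no separate proof of this proposition beyond stating the explicit formula, so a direct case analysis on the three regions determined by the kinks of $\relu$ is exactly the intended verification. Your observation about the maximum weight magnitude $1/\epsilon$ is also accurate and aligns with how the paper uses this construction downstream.
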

\begin{proposition}[Impulse function]
\label{lemma:impulse}
Given $\theta \in \N$, there exist four $\mathrm{ReLU}$ functions that can approximate the following function, denote by $\mathrm{impulse}_\theta:\R \times \N \to \R$ for $x \in [-M,M]$ and $t \in \N$, such that:
\begin{equation}
    \mathrm{impulse}_\theta(x,t)=\begin{cases}
        x \quad & \tn{if}\  t = \theta,  \\
        0 \quad  & \tn{otherwise}.
    \end{cases}
\end{equation} 
which is represented via
\begin{equation}
\begin{split}\label{eq:def:g:uvt}
    \mathrm{impulse}_\theta(x,t) \coloneqq&\relu\big(x+2M(t-\theta+1/2)\big)-2M\relu(t-\theta+1/2)\\ &-\relu\big(x+2M(t-\theta-1/2)\big)+2M\relu(t-\theta-1/2).   
\end{split}
\end{equation}
\end{proposition}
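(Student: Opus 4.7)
The plan is to prove \Cref{lemma:impulse} by direct case analysis on the integer $s \coloneqq t - \theta \in \mathbb{Z}$, since each ReLU term in \eqref{eq:def:g:uvt} has a sign that is determined by the sign of $s \pm 1/2$ once $x$ is constrained to $[-M,M]$. The key algebraic observation to exploit is that the constants $\pm 2M$ multiplying $(t-\theta \pm 1/2)$ are exactly calibrated so that the argument of each ReLU involving $x$ is forced strictly positive, strictly negative, or equals $x$ on $[-M, M]$, depending on whether $s \geq 1$, $s \leq -1$, or $s = 0$.

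First I would handle the case $s = 0$, i.e.\ $t = \theta$. Here the arguments become $x + M$, $1/2$, $x - M$, and $-1/2$. Since $x \in [-M,M]$, I have $\relu(x+M) = x+M$, $\relu(x-M) = 0$, $\relu(1/2) = 1/2$, and $\relu(-1/2) = 0$. Substituting into \eqref{eq:def:g:uvt} gives $(x+M) - 2M\cdot(1/2) - 0 + 0 = x$, as required.

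Next I would handle $s \geq 1$. Then $s \pm 1/2 \geq 1/2 > 0$, so both scalar ReLUs are active and equal their arguments. Moreover $x + 2M(s \pm 1/2) \geq -M + 2M\cdot(1/2) = 0$, so both $x$-dependent ReLUs also equal their arguments. A direct cancellation of all four terms yields $0$. The symmetric case $s \leq -1$ is analogous: now $s \pm 1/2 \leq -1/2 < 0$, both scalar ReLUs vanish, and $x + 2M(s\pm 1/2) \leq M - 2M\cdot(1/2) = 0$ (strictly negative for $s-1/2$), so both $x$-dependent ReLUs vanish as well, giving $0$.

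I do not anticipate a significant obstacle; the proof is essentially a three-line verification once the case split on $s \in \{0\} \cup \{s \geq 1\} \cup \{s \leq -1\}$ is made. The only subtlety worth flagging is the boundary behavior of $\relu$ at the knots $x = \pm M$, which is harmless because $\relu(0) = 0$ and both conventions ($\relu(x-M)$ at $x=M$ and $\relu(x+M)$ at $x=-M$) give the correct limit; I would mention this briefly but not dwell on it. The construction then immediately yields the ``four ReLU functions'' count claimed in the proposition.
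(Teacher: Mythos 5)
Your case analysis on $s = t-\theta$ is correct, including the boundary checks at $\relu(0)$, and it is exactly the direct verification that the paper leaves implicit when it states the formula in \eqref{eq:def:g:uvt} without further argument. Nothing is missing, and the four-ReLU count follows immediately from the construction as you note.
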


\subsection{Step 1. Token-wise Quantization}
\label{appendix:quant}

\begin{figure}[ht]
\begin{center}
    \includegraphics[width=0.8\linewidth]{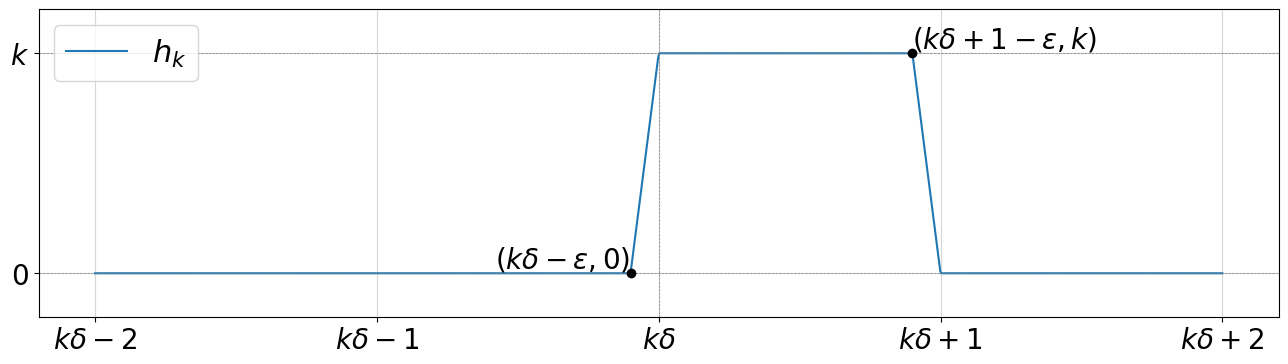}
\end{center}
\caption{An illustration of $h_k(x)$.}\label{fig:hk}
\end{figure}

We aim to construct quantization function $\mathbf{g}: [0,1]^d \to \{0, 1, \dotsc, \delta^{-1}\}^d$, for $\epsilon<\delta$, for each dimension as
\begin{equation}
    \mathbf{g}(\vx) = {(g(\vx_1), g(\vx_2), \dotsc, g(\vx_d))}^\top, \quad
    \text{where} \ g(x) = k \quad \text{if}\ x \in [k\delta,(k+1)\delta-\epsilon] \ \text{for}\ k = 0, \dotsc, \delta^{-1}-1.
\end{equation}
This function $g: \R \to \R$ can be expressed as
\begin{equation}
    g(x) = \sum_{i=0}^{n-1} i \cdot \mathrm{rect}_{(i\delta, (i+1)\delta), \epsilon)}(x)
     % = \begin{cases}
     %   k & \tn{if } x \in [k{\delta}, (k+1){\delta}),\\
     %   0 & \quad \tn{otherwise}.
    %\end{cases}
\end{equation}
for any $n\in\N$ and $x\in\R$. %where $\mathrm{rect}_k$ defined in~\Cref{lemma:rect}. 
The illustration of $h_k(x) \coloneq k\cdot\mathrm{rect}_{(k\delta, (k+1)\delta))}(x)$ is shown in Fig~\ref{fig:hk}.
The key idea is that $h_k(x)$ can be represented with a single function $h$ in the form of $h\big(kx,k^2,k\big)$.
\Cref{lemma:quantize} implement $h\big(kx,k^2,k\big)$ with a feed-forward layer and perform the summation through a skip connection.

\begin{lemma}\label{lemma:quantize}
Given any $\delta^{-1} \in \N$ and $\vx \in \mathbb{R}^d$, there exists a feed-forward layer $\ff : \mathbb{R}^{5d} \to \mathbb{R}^{5d}$ of width size $q = 7d$ with the maximum magnitude of the weights $1/\epsilon$, and two affine linear maps $\mathcal{L}_1 : \mathbb{R}^d \to \mathbb{R}^{5d}$ and $\mathcal{L}_2 : \mathbb{R}^{5d} \to \mathbb{R}^d$ \st
\begin{equation}
    %\left(\mathcal{L}_2\circ(\mathrm{id} + \mathrm{FF})^
    \left(\mathcal{L}_2\circ \big(\mathrm{id} + \ff\big)^{\circ (\delta^{-1} - 1)}\circ\mathcal{L}_1(\vx)\right)_{i} = k \quad \text{if}\ x \in [k\delta,(k+1)\delta-\epsilon], \quad \text{for}\ k = 0, \dotsc, \delta^{-1}-1
\end{equation}
for any $i = 1, 2, \dotsc, d$.
\end{lemma}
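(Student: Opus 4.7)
The plan is to maintain, for each input coordinate $x_i\in[0,1]$, a five-component state $(x_i,\,k,\,k^2,\,k x_i,\,g_i)\in\R^5$ in which $k$ is a counter initialized to $1$ and $g_i$ is an accumulator initialized to $0$. The initial affine map $\mathcal{L}_1$ will prepare this configuration from $\vx$, packing the five slots per coordinate into $\R^{5d}$, and the terminal map $\mathcal{L}_2$ will simply read off the $g_i$ channels. I will design one fixed $\ff$ whose residual update performs simultaneously, for every coordinate block, (i) $x_i\mapsto x_i$, (ii) the bookkeeping $k\mapsto k+1$, $k^2\mapsto(k+1)^2$, and $kx_i\mapsto(k+1)x_i$, and (iii) the accumulation $g_i\mapsto g_i+h_k(x_i)$, where $h_k(x):=k\cdot\mathrm{rect}_{(k\delta,(k+1)\delta,\epsilon)}(x)$. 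Starting from $k=1$ and iterating $\delta^{-1}-1$ times will then give $g_i=\sum_{k=1}^{\delta^{-1}-1}h_k(x_i)$; since the intervals $[k\delta,(k+1)\delta-\epsilon]$ are pairwise disjoint and $h_0\equiv 0$, this sum equals the target level $k$ whenever $x_i\in[k\delta,(k+1)\delta-\epsilon]$.

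The crux will be realizing $h_k(x_i)$ with a single weight-tied layer. Expanding $\mathrm{rect}$ from \Cref{lemma:rect} with $t_1=k\delta$ and $t_2=(k+1)\delta$, the four ReLU pre-activations have the form $\pm(x_i-k\delta)/\epsilon$ and $\pm(x_i-k\delta-\delta+\epsilon)/\epsilon$, and the outer factor of $k$ will be absorbed via the positive homogeneity $\relu(ky)=k\,\relu(y)$ for $k>0$. After absorption, each argument is an affine function of the two dynamic quantities $kx_i$ and $k^2$ with coefficients of magnitude at most $1/\epsilon$, plus a linear correction $-k$; this is exactly why both $kx_i$ and $k^2$ need to be carried in the state. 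The bookkeeping updates are then cheap: $k\mapsto k+1$ is handled by the output bias of $\ff$; $k^2\mapsto k^2+2k+1$ needs an additive $2k$, realized by a single ReLU because $k\ge 1$ gives $\relu(k)=k$; $kx_i\mapsto kx_i+x_i$ needs an additive $x_i$, realized by one ReLU because $x_i\ge 0$; and the $x_i$ slot receives a zero update. The per-coordinate hidden-unit count is $4+1+1=6$, comfortably within the stated $7d$ budget, and no weight exceeds $1/\epsilon$ in magnitude.

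Correctness will then follow from a short induction on $t\in\{0,\dots,\delta^{-1}-1\}$: after $t$ residual applications the per-coordinate state equals $(x_i,\,1+t,\,(1+t)^2,\,(1+t)x_i,\,\sum_{j=1}^{t}h_j(x_i))$, with the inductive step being the item-by-item verification just sketched. The main obstacle is not any single hard computation but the joint bookkeeping inside one weight-tied layer: I must check that the rectangular ReLUs and the affine state updates coexist inside every coordinate block without cross-interference, that the width bound $7d$ and the weight bound $1/\epsilon$ are respected throughout, and that $\mathcal{L}_1$ initializes $k=1$ (rather than $0$) so that the first iteration contributes $h_1$ instead of the useless $h_0\equiv 0$, which is what makes $\delta^{-1}-1$ iterations sufficient.
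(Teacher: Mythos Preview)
Your proposal is correct and is essentially the same construction as the paper's: both carry the five-slot state $(x,k,k^2,kx,\text{accumulator})$ per coordinate, both realize $h_k(x)=k\cdot\mathrm{rect}_{(k\delta,(k+1)\delta,\epsilon)}(x)$ as a fixed ReLU function of $(kx,k^2,k)$ via positive homogeneity, and both update the bookkeeping slots by residual increments $+x$, $+2k+1$, $+1$. The only cosmetic differences are that you initialize at $k=1$ (exploiting $h_0\equiv 0$) whereas the paper starts at $k=0$, and you share the single $\relu(k)$ hidden unit between the $+2k$ update and the $-k$ correction, landing on $6$ hidden units per coordinate instead of the paper's $7$; both fit in the stated width $7d$. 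One small imprecision: after absorbing the outer $k$, the ReLU arguments depend affinely on all three of $kx_i$, $k^2$, and $k$ (e.g.\ $\relu(kx/\epsilon - k^2\delta/\epsilon + k)$ and $\relu(-kx/\epsilon + k^2\delta/\epsilon + k/\epsilon)$), not just the first two as your wording suggests, but since $k$ is already in your state this changes nothing.
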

\begin{proof}
On the basis of \cref{lemma:rect}, define function $h_k(x)=k \cdot \mathrm{rect}_k(x)$ via
\begin{equation}
\begin{split}
    h_k(x) \coloneqq \relu\big(\tfrac{k}{\epsilon}(x-k\delta+\epsilon)\big)
				&- \relu\big(\tfrac{k}{\epsilon}(x-k\delta)\big)
				+\relu\big(\tfrac{k}{\epsilon}(-x+k\delta+1)\big)\\
				&-\relu\big(\tfrac{k}{\epsilon}(-x+k\delta+1-\epsilon)\big)-k,   
\end{split}
\end{equation}
which satisfies
\begin{equation}
    h_k(x)= 
    \begin{cases}
        k \quad & \text{if}\ x \in [k{\delta}, (k+1)\delta-\epsilon],\\
        0 \quad & \text{if}\ x \in (-\infty, k{\delta}-\epsilon]\cup[(k+1)\delta, \infty),
    \end{cases}
\end{equation}
For any $x\in [k{\delta}, (k+1)\delta-\epsilon]$ for $k=0,1, \dotsc, \delta^{-1} -1$, we have
\begin{equation}
	\sum_{i=0}^{\delta^{-1}-1}h_i(x)= h_k(x)=k.
\end{equation}
Define function $h:\R^3 \to \R$ to represent $h_k$ via
\begin{equation}
\begin{split}
h(kx,k^2,k)\coloneqq \relu\big(\tfrac{kx}{\epsilon}-\tfrac{k^2\delta}{\epsilon}+k\big)
&-\relu\big(\tfrac{kx}{\epsilon}-\tfrac{k^2\delta}{\epsilon}\big)
+\relu\big(-\tfrac{kx}{\epsilon}+\tfrac{k^2\delta}{\epsilon}+\tfrac{k}{\epsilon}\big) \\
&-\relu\big(-\tfrac{kx}{\epsilon}+\tfrac{k^2\delta}{\epsilon}+k\tfrac{1-\epsilon}{\epsilon})-\relu(k) =h_k(x).
\end{split}
\end{equation}
Define $\boldsymbol\xi_k$ as
\begin{equation}
  \boldsymbol\xi_k = {\Big(kx,\  k^2,\  k,\  x,\  \sum_{i=0}^{k-1}h_i(x)\Big)^\top}.
\end{equation}
Then, construct a feed-forward layer $\ff: \R^5 \to \R^5$ with a skip connection such that
\begin{align}
    \big(\mathrm{id} + \ff\big) (\boldsymbol\xi_k) 
    &= \big(\mathrm{id} + \ff\big) \bigg( (kx,\  k^2,\  k,\  x,\  \sum_{i=0}^{k-1}h_i(x))^\top \bigg) \\
    &= \Big( (k+1)x,\  (k+1)^2,\  k+1,\  x,\  \sum_{i=0}^{k}h_i(x) \Big)^\top \\
    &= \boldsymbol\xi_{k+1}.
\end{align}
via
\begin{align}
\big(\mathrm{id} + \ff\big)
\left(
\left[
\begin{array}{c}
kx  \\
k^2  \\
k  \\
x  \\
\sum_{i=0}^{k-1}h_i(x)  \\
\end{array}
\right]
\right)
&=
\left[
\begin{array}{c}
kx  \\
k^2  \\
k  \\
x  \\
\sum_{i=0}^{k-1}h_i(x)  \\
\end{array}
\right] + \notag \\
\begin{bmatrix}
1 & 0 & 0 & 0 & 0 & 0& 0\\
0 & 1 & 0 & 0 & 0 & 0& 0\\
0 & 0 & 0 & 0 & 0 & 0& 0\\
0 & 0 & 0 & 0 & 0 & 0& 0\\
0 & 0 & 1 & -1 & 1 & -1 & -1\\
\end{bmatrix}
&\relu \left(
\begin{bmatrix}
0 & 0 & 0 & 1 & 0 \\
0 & 0 & 2 & 0 & 0 \\
\tfrac{1}{\epsilon} & -\tfrac{\delta}{\epsilon} & 1 & 0 & 0 \\
\tfrac{1}{\epsilon} & -\tfrac{\delta}{\epsilon} & 0 & 0 & 0 \\
-\tfrac{1}{\epsilon} & \tfrac{\delta}{\epsilon} & \tfrac{1}{\epsilon} & 0 & 0 \\
-\tfrac{1}{\epsilon} & \tfrac{\delta}{\epsilon} & \tfrac{1-\epsilon}{\epsilon} & 0 & 0 \\
0 & 0 & 1 & 0 & 0 \\
\end{bmatrix}
\left[\begin{array}{c}
kx  \\
k^2  \\
k  \\
x  \\
\sum_{i=0}^{k-1}h_i(x)  \\
\end{array}\right]\right)
+
\left[\begin{array}{c}
0  \\
1  \\
1  \\
0  \\
0  \\
\end{array}
\right]
\\
& = 
\left[\begin{array}{c}
kx  \\
k^2  \\
k  \\
x  \\
\sum_{i=0}^{k-1}h_i(x)  \\
\end{array}
\right]
+ 
\left[
\begin{array}{c}
x  \\
2k + 1  \\
1  \\
0  \\
h_k(x) \\
\end{array}
\right]\\
&= 
\left[
\begin{array}{c}
kx + x  \\
k^2 + 2k + 1  \\
k + 1  \\
x  \\
\sum_{i=0}^{k-1}h_i(x) + h_k(x) \\
\end{array}
\right]\\
&=
\left[
\begin{array}{c}
(k+1)x  \\
(k+1)^2  \\
k+1  \\
x  \\
\sum_{i=0}^{k}h_i(x)  \\
\end{array}
\right].
\end{align}
Then, define two affine linear maps $\mathcal{L}_1 : \mathbb{R}^1 \to \mathbb{R}^{5}$ and $\mathcal{L}_2 : \mathbb{R}^{5} \to \mathbb{R}^1$ by
\begin{equation}
    \mathcal{L}_1(x) \coloneqq (0, 0, 0, x, 0), \quad
    \mathcal{L}_2(x_1, x_2, x_3, x_4, x_5) \coloneqq x_5.
\end{equation}
Thus, we have
\begin{align}
    \mathcal{L}_2\circ \big(\mathrm{id} + \ff\big)^{\circ (\delta^{-1} - 1)}\circ\mathcal{L}_1(x)
    &= \mathcal{L}_2\circ \big(\mathrm{id} + \ff\big)^{\circ (\delta^{-1} - 1)}(\boldsymbol \xi_{0})\\
    &= \mathcal{L}_2 (\boldsymbol \xi_{\delta^{-1}})\\
    &=\sum_{i=0}^{\delta^{-1}-1}h_i(x).
\end{align}
For $d$-dimensional inputs, we need $d$-times more parameters.
\end{proof}

\subsection{Step 2. Contextual Mapping}
\label{appendix:contextmap}
The network takes token IDs as inputs, denoted by $\vz \in \{0, 1, \dotsc, \delta^{-d} - 1\}^N$.
We only consider cases where all token IDs are distinct. The network maps token IDs into a sequence ID using the inner product with the vector $\vu \in \R^N$ defined as $\vu \coloneqq (\delta^{-(N-1)d}, \delta^{-(N-2)d}, \dotsc, \delta^{-d}, 1)^\top$ \ie
\begin{equation}
    \mathrm{s}(\vz) \coloneq \vu^\top \vz.
\end{equation}
Due to permutation equivariance, we can assume without loss of generality that elements of $\vz \in \sL_{\delta}$ are ordered, \ie, $\vz_1 > \vz_2 > \cdots > \vz_N$. Then the map $\mathrm{s}$ satisfies
\begin{equation}
    |\vu^\top \vz- \vu^\top \vz'| > 1,  \quad \text{if }  \vz \ne \vz'.
\end{equation}
In other words, $\mathrm{s}$ represent $\vz$ in $\delta^{-d}$-base system.
The network computes $\vu^\top \vz$ in the form of $\sum_{i=1}^{N} \delta^{-(N-i)d} \vz_i$. 
The network computes $\vs^{(k)} \coloneq \sum_{i=1}^{k} \delta^{-(k-i)d} \vz_i$ in each loop, and after $N$-loops, it outputs $\vs^{(N)} =\vu^\top \vz$.
To implement this, the self-attention layer selects $\vz_k$ in the $k$-th loop iteration. We design the key, query, and value weights to select the maximum token ID. The feed-forward layer post-processes the token ID in such a way that if it is selected, then it is replaced with a negative value to prevent selection in subsequent iterations, \ie, the post-processed token IDs for the $k$-th loop are 
\begin{align*}
    \vz^{(k)}_i &= z \quad \st \quad
    \begin{cases*}
        z < 0 \quad \tn{if} \quad i \leq k, \\
        z = \vz_i \quad \tn{otherwise}.
    \end{cases*}
\end{align*}

We focus on self-attention layers that employ the hardmax function.
\begin{lemma}\label{lemma:contextmap}
Consider the set of distinct indices corresponding to the \( d \)-dimensional \( \delta \)-discretized regions of \( N \) tokens, \ie
\begin{equation}
\sL_{\delta} \coloneq \{\vz \in \{0, 1, \dotsc, \delta^{-d} - 1\}^N \mid \vz_{i} \neq \vz_{j} \text{ for all } i \neq j\}.
\end{equation}
There exists a function \(\mathrm{s}: \mathbb{R}^{N} \to \mathbb{R}\) composed of Transformer block \(\mathrm{TF}: \mathbb{R}^{5 \times N} \to \mathbb{R}^{5 \times N}\) with the hardmax function, single head, head size $s=1$, and width size $q=3$, and two affine linear maps \(\mathcal{L}_1: \mathbb{R} \to \mathbb{R}^5\) and \(\mathcal{L}_2: \mathbb{R}^5 \to \mathbb{R}\), such that
\[
\bm{\mathcal{L}}_2 \circ {\tf}^{\circ N} \circ \bm{\mathcal{L}}_1(\vz^\top) = (\vu^\top\vz) \cdot \mathbf{1}_N^\top,
\]
for any $\vz \in \sL_{\delta}$, where $\vu \coloneqq (\delta^{-(N-1)d}, \delta^{-(N-2)d}, \dotsc, \delta^{-d}, 1)^\top$.
\end{lemma}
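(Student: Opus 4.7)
The plan is to implement the recursion $\vs^{(k+1)} = \delta^{-d}\vs^{(k)} + \vz_{k+1}$ in place, using the hardmax self-attention to extract the currently-largest dynamic token ID and the feed-forward layer both to rescale the accumulator and to ``mark'' the just-selected token so it is never selected again. I would allocate the five per-token channels as follows: (i) a dynamic-ID channel $\vz_i^{(k)}$ initialized to $\vz_i$ and pushed to a large negative value once position $i$ has been consumed; (ii) a running-sum channel initialized to $0$ that will eventually carry $\vu^\top\vz$ broadcast to every position; (iii) a constant-$1$ channel used to make attention scores query-independent; (iv) a scratch channel that receives $\vz_{k+1}$ from the attention output; and (v) a spare channel for intermediate bookkeeping. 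The lift $\bm{\mathcal{L}}_1:\R\to\R^5$ then just places $\vz_i$ into the dynamic channel and $1$ into the constant channel.

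For the attention, with $s=1$ I would set $\key$ to read the dynamic-ID channel and $\query$ to read the constant channel, so the $(j,i)$-entry of $(\key\mX)^\top\query\mX$ equals $\vz_j^{(k)}$ independently of the query index $i$; hardmax then selects $j^{*} = \argmax_j \vz_j^{(k)}$ at every query, which by the induction hypothesis below equals the position of $\vz_{k+1}$. With $\val$ also reading the dynamic channel and $\out\in\R^{5\times 1}$ chosen to deposit the selected scalar into both the sum channel and the scratch channel (and into no other channel), the post-attention state at every position carries $\vs^{(k)} + \vz_{k+1}$ in the sum slot and $\vz_{k+1}$ in the scratch slot, with the dynamic and constant channels unchanged.

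The feed-forward layer has only $q=3$ hidden ReLUs, and I would use them as follows. Two of the neurons compute $\mathrm{ReLU}(\vs^{(k)})$ and $\mathrm{ReLU}(-\vs^{(k)})$, where $\vs^{(k)}$ is extracted as the linear combination (sum channel) $-$ (scratch channel); their difference, scaled by $\delta^{-d}-1$ and routed by $\mW_2$ into the sum channel, produces via the FF residual the desired $\delta^{-d}\vs^{(k)} + \vz_{k+1}$. The third neuron computes $\mathrm{ReLU}(\vz_i^{(k)} - \vz_{k+1} + \tfrac12)$; by the integer separation of token IDs this equals $\tfrac12$ exactly at $i=j^{*}$ and $0$ elsewhere, so routing it with a sufficiently large negative coefficient into the dynamic-ID channel drives that position strictly below every remaining $\vz_j^{(k)}$ for all subsequent iterations.

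An induction on $k$ then establishes the invariant that after $k$ applications of $\tf$ the sum channel equals $\vs^{(k)}$ at every position and the dynamic IDs of already-selected positions lie strictly below every unselected one; the base case is immediate from $\bm{\mathcal{L}}_1$, and the inductive step is precisely what the attention/FF pair just described guarantees. After $N$ loops the sum channel holds $\vu^\top\vz$ everywhere, and $\bm{\mathcal{L}}_2:\R^5\to\R$ is the coordinate projection onto that channel. The main obstacle is the very tight width-$3$ FF budget: the accumulator rescaling and the conditional marking must share the three ReLU neurons without interference, which in turn dictates how $\out$ distributes the attention output across the five state channels and forces a careful accounting of the biases $\vb_1,\vb_2$ so that the invariant is preserved exactly from one iteration to the next; once this channel routing is nailed down the remaining work is routine linear-algebraic bookkeeping.
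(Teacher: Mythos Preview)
Your high-level plan matches the paper's: use hardmax attention with query reading the constant channel and key reading the dynamic channel to broadcast $\vz_{k+1}$ to every position, then use the feed-forward layer to rescale the accumulator and mark the selected position. The induction scheme is the right one.

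The gap is in your width-$3$ feed-forward allocation. You spend two neurons on $\relu(\vs^{(k)})$ and $\relu(-\vs^{(k)})$, but all token IDs are nonnegative, so $\vs^{(k)}=\sum_{i\le k}\delta^{-(k-i)d}\vz_i\ge 0$ and the second neuron is identically zero --- a pure waste. Worse, because you deposit the attention output into \emph{both} the sum and scratch channels and never clear scratch, the quantity you call ``(sum channel) $-$ (scratch channel)'' is \emph{not} $\vs^{(k)}$ after the first iteration. Writing $S_k$ and $C_k$ for the two channels after $k$ full blocks, your update gives $S_{k+1}-C_{k+1}=\delta^{-d}(S_k-C_k)$; since $S_0-C_0=0$ this difference is zero forever, the rescaling term vanishes, and the final sum channel holds $\sum_i \vz_i$ rather than $\vu^\top\vz$. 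No choice of $\vb_1,\vb_2$ or of the spare channel fixes this, because none of your three hidden activations carries the scratch value $\vz_{k+1}$ needed to reset it.

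The paper resolves this within the same $q=3$ budget by exploiting the nonnegativity you overlooked. It deposits the attention output into the scratch channel \emph{only}, then uses the three neurons as $\relu(x_4)=\vz_{k+1}$, $\relu(x_5)=\vs^{(k)}$, and $\relu(x_2-x_4+\epsilon)$ for marking. The first neuron is routed with coefficient $-1$ back to scratch (clearing it to $0$ for the next round) and with coefficient $+1$ to the sum channel; the second neuron is routed to sum with coefficient $\delta^{-d}-1$; the third is routed to the dynamic channel with coefficient $-M$. The paper also keeps an untouched copy of $\vz$ in a separate channel read by $\val$, though with hardmax this is not essential since the dynamic channel still holds the original value at the argmax position. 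Once you replace your $\relu(-\vs^{(k)})$ by $\relu(\text{scratch})$ and route attention to scratch alone, your argument goes through.
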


\begin{proof}
Due to permutation equivariance, we can assume without loss of generality that elements of $\vz \in \sL_{\delta}$ are ordered, \ie, $\vz_1 > \vz_2 > \cdots > \vz_N$.
Define $\vu \in \R^N$ as $\vu \coloneqq (\delta^{-(N-1)d}, \dotsc, \delta^{-d}, 1)^\top$, which satisfy
\begin{equation}
    |\vu^\top \vz- \vu^\top \vz'| > 1,  \quad \text{if }  \vz \ne \vz' \text{ for any } \vz, \vz' \in \sL_{\delta}.
\end{equation}
We construct Transformer block $\tf:\R^{5 \times N}\to \R^{5 \times N}$ with single head and head size $s=1$ such that, for any $\vz \in \sL_{\delta}$, 
\begin{equation}\label{eq:attn_2}
    \mathrm{TF}^{\circ N} \left( \left[
    \begin{array}{ccc}
    \vz^\top  \\
    \vz^\top  \\
    {\mathbf1}_N^\top \\
    {\mathbf0}_N^\top \\
    {\mathbf0}_N^\top \\
    \end{array}\right]\right) = 
    \left[
    \begin{array}{ccc}
    {\mathbf0}_N^\top \\
    {\mathbf0}_N^\top \\
    {\mathbf1}_N^\top \\
    {\mathbf0}_N^\top \\
    (\vu^\top \vz) \cdot \mathbf{1}_N^\top \\
    \end{array}\right].    
\end{equation}
where ${\mathbf0}_N, {\mathbf1}_N \in \R^N$ denote the all-zero and all-one vectors, respectively. For $\vz \in \sL_\delta$, define two series for $k=0, \cdots N$ as:
\begin{align}
    \vz^{(k)}_i &\coloneq z \quad \st \quad
    \begin{cases*}
        z < 0 \quad \tn{if} \quad i \leq k, \\
        z = \vz_i \quad \tn{otherwise},
    \end{cases*} \quad \text{for } i = 1, \dotsc, N, \quad \in \R^{N},
    \\
    \vs^{(k)} &\coloneq \sum_{i=0}^{k} \delta^{-(k-i)d} \vz_i \quad \in \R.
\end{align}
While \(\vz^{(k)}\) is not uniquely determined, any vector that satisfies the conditions is accepted as \(\vz^{(k)}\). The series $\vs^{(k)}$ satisfies
\begin{align}
    \vs^{(k+1)} &= \sum_{i=1}^{k+1} \delta^{-(k+1-i)d} \vz_i \\
    &= \big(\sum_{i=1}^{k+1-1} \delta^{-(k+1-i)d} \vz_i \big) + \vz_k \\
    &= \big(\sum_{i=0}^{k} \delta^{-d} \cdot \delta^{-(k-i)d} \vz_i \big) + \vz_{k+1}\\
    &= \delta^{-d} \cdot \vs^{k} + \vz_{k+1} \label{eq:vs},
\end{align}
for $k = 0, \ldots, N-1$. Note that $\vs^{(N)} = \vu^\top \vz$. Define a single-head self-attention $\attn:\R^{5 \times N}\to \R^{5 \times N}$ such that
\begin{equation}
    \attn \left( \left[
    \begin{array}{ccc}
        {\vv}^\top  \\
        {\va}^\top  \\
        {\mathbf1}_N^\top \\
        {*}^\top   \\
        {*}^\top   \\
    \end{array}\right] \right) 
    = \left[ \begin{array}{ccc}
        {\mathbf0}_N^\top \\
        {\mathbf0}_N^\top \\
        {\mathbf0}_N^\top \\
        (\vv_{\argmax_j {\va}_j}) \cdot \mathbf{1}_N^\top \\
        {\mathbf0}_N^\top \\
    \end{array} \right],
\end{equation}
where $\vv,\va \in \R^{N}$, and $*\in \R^{N}$ denotes arbitrary vectors, via the weight parameters 
\begin{equation}
    \mW_O = \left[ \begin{array}{ccc}
        0 \\
        0 \\
        0 \\
        1 \\
        0 \\
        \end{array}\right], \quad
    \mW_V = \left[ \begin{array}{ccccc}
        1 & 0 & 0 & 0 & 0
    \end{array} \right], \quad
    \mW_K = \left[ \begin{array}{ccccc}
        0 & 1 & 0 & 0 & 0
    \end{array} \right], \quad
    \mW_Q = \left[ \begin{array}{ccccc}
        0 & 0 & 1 & 0 & 0
    \end{array} \right].
\end{equation}
Define $\ff:\R^5\to \R^5$ of width size $q=3$ via:
\begin{align}
    \ff \left( \left[ 
    \begin{array}{ccc}
        x_1 \\
        x_2 \\
        x_3 \\
        x_4 \\
        x_5 \\
    \end{array} \right] \right) 
    & =
    \begin{bmatrix}
    0 & 0 & 0\\
    -M & 0 & 0\\
    0 & 0 & 0\\
    0 & -1 & 0\\
    0 & 1 & \delta^{-d} - 1 \\
    \end{bmatrix} \relu \left(
    \begin{bmatrix}
    0 & 1 &0 &-1 & 0\\
    0 & 0 &0 &1 & 0\\
    0 & 0 &0 &0 & 1\\
    \end{bmatrix}
   \left[ 
    \begin{array}{ccc}
        x_1 \\
        x_2 \\
        x_3 \\
        x_4 \\
        x_5 \\
    \end{array} \right]
    + \left[ 
    \begin{array}{ccc}
        0 \\
        \epsilon \\
        0 \\
        0 \\
        0 \\
    \end{array} \right]\right) 
    \\
    & = \left[\begin{array}{ccc}
        0 \\
        - M \sigma_R(x_2-x_4+\epsilon)  \\
        0 \\
        - \sigma_R(x_4)  \\
        (\delta^{-d} - 1) \sigma_R(x_5) + \sigma_R(x_4) \\
    \end{array}\right],
\end{align}
where $0 < \epsilon < 1$ and $M > \frac{\delta^{-d}-1}{\epsilon}$. 

For $x_2 \in \{ 0, 1, \dots, \delta^{-d} - 1 \} \cup \{ x \mid x \leq 0 \}$ and $x_4\in \{ 0, 1, \dots, \delta^{-d} - 1 \}$ with $ x_4 \ge x_2$, we have
\begin{equation}
    x_2 - M \sigma_R(x_2-x_4+\epsilon) = z, \quad \st \quad
    \begin{cases*}
    z = x_1 \quad \tn{if} \quad x_4 > x_2, \\
    z < 0 \quad \tn{if} \quad x_4 = x_2.
    \end{cases*}
\end{equation}
This post-processes the token ID in such a way that if it is selected, then it is replaced with a negative value \ie
\begin{equation}\label{eq:vxk}
    \begin{split}
        \vz^{(k)}_i - M \sigma_R(\vz^{(k)}_i-\vz_k+\epsilon)
        & = z \quad \st \quad
        \begin{cases*}
            z < 0 \quad \tn{if} \quad i \leq k + 1, \\
            z = \vz_i \quad \tn{otherwise},
        \end{cases*} \\
        &= \vz^{(k+1)}_i \quad \text{for } i = 1, \dotsc, N.
    \end{split} 
\end{equation}
We confirm that the Transformer block $\tf:\R^{5\times N}\to \R^{5\times N}$, composed of $\attn$ and $\ff$, satisfies, for $k = 0, \ldots, N-1$,
\begin{align}
    \tf \left( \left[\begin{array}{ccc}
    \vz^\top \\
    (\vz^{(k)})^\top  \\
    {\mathbf1}_n^\top \\
    {\mathbf0}_n^\top \\
    (\vs^{(k)})^\top \\
    \end{array}\right]\right)
    &= (\mathrm{id} + \bm\ff) \circ (\mathrm{id} + \attn) \left( \left[\begin{array}{ccc}
    \vz^\top \\
    (\vz^{(k)})^\top  \\
    {\mathbf1}_n^\top \\
    {\mathbf0}_n^\top \\
    (\vs^{(k)})^\top \\
    \end{array}\right]\right) \\
    &= (\mathrm{id} + \bm\ff) \left(\left[\begin{array}{ccc}
    \vz^\top \\
    (\vz^{(k)})^\top  \\
    {\mathbf1}_n^\top \\
    \vz_{k+1} \cdot \mathbf{1}_N^\top \\
    (\vs^{(k)})^\top \\
    \end{array}\right]\right) \\
    &= \left[\begin{array}{ccc}
    \vz^\top \\
    (\vz^{(k)})^\top  \\
    {\mathbf1}_n^\top \\
    \vz_{k+1} \cdot \mathbf{1}_N^\top \\
    (\vs^{(k)})\mathbf{1}_N^\top  \\
    \end{array}\right] +
    \left[\begin{array}{ccc}
    {\mathbf0}_n^\top \\
    - M \bm{\sigma}_R\big((\vz^{(k)})^\top -\vz_k \cdot \mathbf{1}_N^\top+\epsilon \mathbf{1}_N^\top \big)  \\
    {\mathbf0}_n^\top \\
    - \vz_{k+1} \cdot \mathbf{1}_N^\top \\
    (\delta^{-d} - 1) (\vs^{(k)})\mathbf{1}_N^\top  + \bm{\sigma}_R(\vz_{k+1}\mathbf{1}_N^\top) \\
    \end{array}\right] \\
    &= \left[\begin{array}{ccc}
    \vz^\top \\
    (\vz^{(k+1)})^\top \\
    {\mathbf1}_n^\top \\
    {\mathbf0}_n^\top \\
    \delta^{-d}(\vs^{(k)})\mathbf{1}_N^\top + \vz_{k+1} \mathbf{1}_N^\top \\
    \end{array}\right] \quad \text{because of Eq.~\ref{eq:vxk}}\\
    &= \left[\begin{array}{ccc}
    \vz^\top \\
    (\vz^{(k+1)})^\top  \\
    {\mathbf1}_n^\top \\
    {\mathbf0}_n^\top \\
    (\vs^{(k+1)})\mathbf{1}_N^\top \\
    \end{array}\right] \quad \text{because of Eq.~\ref{eq:vs}} .
\end{align}
Define two affine linear maps $\mathcal{L}_1 : \mathbb{R} \to \mathbb{R}^{5}$ and $\mathcal{L}_2 : \mathbb{R}^{5} \to \mathbb{R}$ via $\mathcal{L}_1(x) \coloneqq (x, x, 1, 0, 0)$ and $\mathcal{L}_2(x_1, x_2, x_3, x_4, x_5) \coloneqq x_5$ respectively. Thus, we have
\begin{align*}
    \bm{\mathcal{L}}_2 \circ \tf^{\circ N} \circ \bm{\mathcal{L}}_1 (\vz^\top)
    = (\vu^\top \vz) \cdot {\mathbf1}_N^\top.  
\end{align*}
%Recall that $\vz \to \vu^\top \vz$ is a contextual mapping.
\end{proof}

Combining the sequence ID with token ID, the network computes contextual mapping.
\begin{corollary}\label{corollary:contextmap}
There exists a Transformer block \(\mathrm{TF_2}: \mathbb{R}^{8 \times N} \to \mathbb{R}^{8 \times N}\) with the hardmax function, two heads, head size $s=1$, and width size $q=5$, and two affine linear maps \(\mathcal{L}_1: \mathbb{R}^2 \to \mathbb{R}^8\) and \(\mathcal{L}_2: \mathbb{R}^8 \to \mathbb{R}\) \st
\[
\bm{\mathcal{L}}_2 \circ {\tf^{(2)}}^{\circ N} \circ \bm{\mathcal{L}}_1 \left( \left[
    \begin{array}{ccc}
        {\vz}^\top  \\
        {\mZ_{d,:}}  \\
    \end{array}\right] \right)  = \big(2\vu^\top \vz - \mZ_{d,N}\big) \cdot \mathbf{1}_N^\top \quad \tn{for any $\vz\in\sL_{\delta}$.}
\]
\end{corollary}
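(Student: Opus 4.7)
The plan is to extend the construction of \Cref{lemma:contextmap} to an $8$-row embedding with two attention heads. Rows $1$--$5$ together with the first head reproduce the original construction so that after $N$ loops row $5$ holds $(\vu^\top \vz)\cdot\mathbf{1}_N^\top$; the new row $6$, the second head, and an accumulator in row $7$ together extract $\mZ_{d,N}$ and make it available to $\mathcal{L}_2$. I take $\mathcal{L}_1$ to lift each two-dimensional input column $(\vz_i,\mZ_{d,i})^\top$ into the $8$-dimensional embedding by placing $\vz_i$ in rows $1$ and $2$, the constants $1,0,0$ in rows $3,4,5$, $\mZ_{d,i}$ in row $6$, and $0$ in rows $7,8$, matching the initial layout required by the \Cref{lemma:contextmap} recursion while exposing $\mZ_{d,:}$ to the second head.

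For the second head I set $\mW_K^2$ to read $-\vz^\top$ from row $1$, $\mW_Q^2$ to read $\mathbf{1}_N^\top$ from row $3$, and $\mW_V^2$ to read $\mZ_{d,:}$ from row $6$. Because row $1$ is held constant across iterations and, without loss of generality, $\vz_1>\cdots>\vz_N\ge 0$, the hardmax attention weights equal $1$ at position $N$ in every column in every iteration, so head $2$ outputs $\mZ_{d,N}\cdot\mathbf{1}_N^\top$. Choosing $\mW_O^2$ to write this into row $7$ with a scaling factor of $1/N$, the residual connection causes row $7$ to be incremented by $\mZ_{d,N}/N$ per loop, so after $N$ loops row $7$ equals $\mZ_{d,N}\cdot\mathbf{1}_N^\top$. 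The feed-forward layer is the same three-ReLU block from \Cref{lemma:contextmap} acting on rows $1$--$5$, extended by zero rows and columns to leave rows $6,7,8$ untouched; the declared width $q=5$ is loose enough to absorb any extra ReLU needed to cancel stray biases.

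Finally, I define $\mathcal{L}_2(x_1,\dots,x_8)=2x_5-x_7$, so the per-column output is $2(\vu^\top\vz)-\mZ_{d,N}$, yielding the claimed right-hand side $(2\vu^\top\vz-\mZ_{d,N})\cdot\mathbf{1}_N^\top$. The main obstacle is ensuring that the second head and the $1/N$ accumulation do not interfere with the row-$1$-through-$5$ invariants preserved by the \Cref{lemma:contextmap} induction; the key observation that dissolves this difficulty is that $\vz$ stays fixed in row $1$ throughout, so head $2$'s target index is time-invariant and the $N$-fold skip-connection accumulation, combined with the $1/N$ rescaling, produces exactly $\mZ_{d,N}$. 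Verifying the per-iteration matrix identity is then a direct extension of the inductive calculation at the end of the proof of \Cref{lemma:contextmap}.
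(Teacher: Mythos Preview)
Your construction is correct and takes a genuinely different route from the paper's in how $\mZ_{d,N}$ is stored across iterations. The paper has head~2 write the full value into row~7 at every iteration and then spends two additional ReLU units in the feed-forward block to reset row~7 to zero and copy its content into row~8; this is what accounts for the jump in width from $q=3$ to $q=5$, and $\mathcal{L}_2$ is intended to read row~8 (the printed $2x_5-x_6$ appears to be a typo for $2x_5-x_8$). You instead scale head~2's output by $1/N$ in $\mW_O^2$ and let the residual stream accumulate $\mZ_{d,N}/N$ in row~7 over the $N$ loops, leaving the feed-forward block of \Cref{lemma:contextmap} unchanged. Your approach is more elementary---no extra feed-forward width is actually needed---at the price of an $N$-dependent constant in $\mW_O^2$; the paper's approach keeps the block weights independent of $N$ but pays for the two ReLU units. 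Incidentally, your key choice $\mW_K^2$ reading $-\vz^\top$ from row~1 is the right one: the paper's stated $\mW_K^2$ reads row~6, which would select $\argmax_i \mZ_{d,i}$ rather than position~$N$, so that line of the printed construction seems to contain another slip.
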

\begin{proof}
Define a self-attention $\attn:\R^{8 \times N}\to \R^{8 \times N}$ such that
\begin{equation}
    \attn \left( \left[
    \begin{array}{ccc}
        {\vv}^\top  \\
        {\va}^\top  \\
        {\mathbf1}_N^\top \\
        {*}^\top  \\
        {*}^\top  \\
        {\mZ_{d,:}} \\
        {*}^\top  \\
        {*}^\top  \\
    \end{array}\right] \right) 
    = \left[ \begin{array}{ccc}
        {\mathbf0}_N^\top \\
        {\mathbf0}_N^\top \\
        {\mathbf0}_N^\top \\
        (\vv_{\argmax_j {\va}_j}) \cdot \mathbf{1}_N^\top \\
        {\mathbf0}_N^\top \\
        {\mathbf0}_N^\top \\
        \mZ_{d,N} \cdot \mathbf{1}_N^\top \\
        {\mathbf0}_N^\top \\
    \end{array} \right],
\end{equation}
where $\vv,\va \in \R^{N}$, and $*\in \R^{N}$ denotes arbitrary vectors, via the weight parameters
\begin{equation}
    \mW^1_O = \left[ \begin{array}{ccc}
        0 \\
        0 \\
        0 \\
        1 \\
        0 \\
        0 \\
        0 \\
        0 \\
        \end{array}\right], \quad
    \mW^1_V = \left[ \begin{array}{cccc}
        1 & 0 & \dots & 0 
    \end{array} \right], \quad
    \mW^1_K = \left[ \begin{array}{ccccc}
        0 & 1 & 0 & \dots & 0
    \end{array} \right], \quad
    \mW^1_Q = \left[ \begin{array}{cccccc}
        0 & 0 & 1 & 0 & \dots & 0 
    \end{array} \right]
\end{equation}
and
\begin{equation}
    \mW^2_O = \left[ \begin{array}{ccc}
        0 \\
        0 \\
        0 \\
        0 \\
        0 \\
        0 \\
        1 \\
        0 \\
        \end{array}\right], \quad
    \mW^2_V, \mW^2_K = \left[ \begin{array}{cccccc}
        0 & \dots & 0 & 1 & 0 & 0
    \end{array} \right], \quad
    \mW^2_Q = \left[ \begin{array}{cccccc}
        0 & 0 & 1 & 0 & \dots & 0
    \end{array} \right],
\end{equation}
where \(\dots\) denotes a sequence of zeros.

Define $\ff:\R^8\to \R^8$ of width size $q=5$ via:
\begin{align}
    \ff \left( \left[ 
    \begin{array}{ccc}
        x_1 \\
        x_2 \\
        x_3 \\
        x_4 \\
        x_5 \\
        x_6 \\
        x_7 \\
        x_8 \\
    \end{array} \right] \right) 
    & =
    \begin{bmatrix}
    0 & 0 & 0& 0& 0\\
    -M & 0 & 0& 0& 0\\
    0 & 0 & 0& 0& 0\\
    0 & -1 & 0 & 0& 0\\
    0 & 1 & \delta^{-d} - 1 & -1& 0\\
    0 & 0 & 0& 0& 0\\
    0 & 0 & 0& -1& 0\\
    0 & 0 & 0& 1& -1\\
    \end{bmatrix} \relu \left(
    \begin{bmatrix}
    0 &1 &0& -1 & 0 & 0 & 0& 0\\
    0 &0 &0& 1 & 0 & 0 & 0& 0\\
    0 &0 &0& 0 & 1 & 0 & 0& 0\\
    0 &0 &0& 0 & 0 & 0 & 1& 0\\
    0 &0 &0& 0 & 0 & 0 & 0& 1\\
    \end{bmatrix}
   \left[ 
    \begin{array}{ccc}
        x_1 \\
        x_2 \\
        x_3 \\
        x_4 \\
        x_5 \\
        x_6 \\
        x_7 \\
        x_8 \\
    \end{array} \right]
    + \left[ 
    \begin{array}{ccc}
        0 \\
        \epsilon \\
        0 \\
        0 \\
        0 \\
        0 \\
        0 \\
        0 \\
        0 \\
    \end{array} \right]\right) 
    \\
    & = \left[\begin{array}{ccc}
        0 \\
        - M \sigma_R(x_2-x_4+\epsilon)  \\
        0 \\
        - \sigma_R(x_4)  \\
        (\delta^{-d} - 1) \sigma_R(x_5) + \sigma_R(x_4) \\
        0 \\
        0 \\
        -\sigma_R(x_7) \\
        \sigma_R(x_7) - \sigma_R(x_8)  \\
    \end{array}\right],
\end{align}
where $0 < \epsilon < 1$ and $M > \frac{\delta^{-d}-1}{\epsilon}$. Then, we define two affine linear maps $\mathcal{L}_1 : \mathbb{R}^2 \to \mathbb{R}^{8}$ and $\mathcal{L}_2 : \mathbb{R}^{8} \to \mathbb{R}$ respectively:
\begin{equation}
    \mathcal{L}_1(x_1, x_2) \coloneqq (x_1, x_1, 1, 0, 0, x_2,0,0),\quad \mathcal{L}_2(x_1, x_2, x_3, x_4, x_5, x_6, x_7, x_8) \coloneqq 2 x_5 - x_6
\end{equation}
From \Cref{lemma:contextmap}, the corollary holds for this construction.
\end{proof}

\subsection{Step 3. Function Value Mapping with Bit Extraction}
\label{appendix:memorize}
We employ a bit extraction technique~\citep{bartlett1998almost} , as used \cite{zhang23} for weight-tied ReLU networks, to approximately memorize the label set. Given \( K \in \mathbb{N} \) input indices \( k = 1,2, \dotsc, K \) with associated values \( y_1, y_2, \dotsc, y_{K} \in \mathbb{R} \), the network approximately memorizes the differences \( y_i - y_{i-1} \) using their base-2 representation.  
Since the binary representation is limited to \( \{0,1\} \), the differences \( y_i - y_{i-1} \) must be rescaled by a factor $\epsilon \coloneq \max_i|y_{i} - y_{i-1}|$ as
\begin{equation}
    a_{i}= \big\lfloor \frac{y_{i}}{\epsilon} \big\rfloor,
\end{equation}
where $\lfloor x\rfloor \coloneq \max \{n: n\le x,\ n\in \Z\}$. Then, the difference \( b_i = a_i - a_{i-1} \) satisfies \( b_i \in \{-1, 0, 1\} \) and can be represented using two binary values \( c_i, d_i \in \{0,1\} \) as follows:
\begin{equation}
    b_{i}= c_{i} - d_{i},
\end{equation}
and we have 
\begin{equation}
    a_{k}= a_{0} + \sum_{i=0}^{k} b_{i} = a_{0} + \sum_{i=0}^{k} c_{i} - \sum_{i=0}^{k} d_{i} \quad \text{for} \quad k = 0, 1, \dotsc, n-1.
\end{equation}
\Cref{lemma:bitextraction} and \Cref{lemma:memorize} show that $\sum_{i=0}^{k} c_{i}$ and $\sum_{i=0}^{k} d_{i}$ can be realized by composition of single feed-forward layer. Thus, the network can approximate \( y_i \) using \( \epsilon a_i \), denoted as \( \tilde{y}_i \), with the following accuracy:
\begin{equation}
    |\tilde{y}_{i} - y_{i}| = |\epsilon \big\lfloor \frac{y_{i}}{\epsilon} \big\rfloor - \epsilon \frac{y_{i}}{\epsilon}| =  \epsilon | \big\lfloor \frac{y_{i}}{\epsilon} \big\rfloor - \frac{y_{i}}{\epsilon}| \leq \epsilon.
\end{equation}
For a $d$-dimensional input-output pair, we construct the networks for each dimension \ie
\begin{equation}
    \bm{\tilde{\vy}} = (\tilde{y}^1, \tilde{y}^2, \dotsc, \tilde{y}^d)
\end{equation}

The basic strategy of our lemma and proof follows Lemma D.1 from~\citet{zhang23}, as shown below and Proposition 3.2. However, their result cannot be directly applied here, as it requires depth-2 networks.

\begin{proposition}[Lemma D.1 in~\citet{zhang23}]
Given any \( r \in \mathbb{N}^+ \), there exists a function \( \ff: \mathbb{R}^{3d} \to \mathbb{R}^{3d} \) with width \( 8 \) and depth \( 2 \), utilizing two affine linear maps \( \mathcal{L}_1: \mathbb{R}^2 \to \mathbb{R}^5 \) and \( \mathcal{L}_2: \mathbb{R}^5 \to \mathbb{R} \), such that for any \( \theta_1, \theta_2, \dotsc, \theta_r \in \{0,1\} \), the following holds:

\begin{equation}
    \calL_2\circ\ff^{\circ r}\circ \calL_1\big(k,\ \bin 0.\theta_1\theta_2\cdots\theta_{r}\big)=\sum_{\ell=1}^{k}\theta_\ell \quad \tn{for $k=1,2,\dotsc,r$,}
\end{equation}
where $\bin 0.\theta_1\theta_2\cdots \theta_r$ denote the binary representation of $\theta= \sum_{l=1}^{r}\theta_l 2^{-l}$.
\end{proposition}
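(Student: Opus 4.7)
The plan is to realize the classical bit-extraction dynamics inside a looped ReLU block. I would maintain, after $\ell$ iterations, a state of the form $(t,k,\phi,s,\xi)$, where $t=\ell$ is the loop counter, $k$ is the (unchanged) target, $\phi^{(\ell)} = \bin 0.\theta_{\ell+1}\theta_{\ell+2}\cdots\theta_r$ is the remaining tail of the binary expansion, $s^{(\ell)} = \sum_{i=1}^{\min(\ell,k)} \theta_i$ is the gated partial sum, and $\xi$ is a scratch coordinate used to pipeline a Boolean product. The lift $\calL_1:\R^2\to\R^5$ stores $k$, sets $\phi^{(0)} = \bin 0.\theta_1\cdots\theta_r$, and initializes $t=s=\xi=0$; the readout $\calL_2:\R^5\to\R$ simply projects onto the $s$ coordinate after $r$ applications of $\ff$.

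The per-loop update $\ff$ must perform four operations: (i)~bit extraction $b_\ell = \lfloor 2\phi^{(\ell-1)}\rfloor$, (ii)~shift $\phi^{(\ell)} \leftarrow 2\phi^{(\ell-1)} - b_\ell$, (iii)~counter increment $t \leftarrow t+1$, and (iv)~gated accumulation $s \leftarrow s + b_\ell\cdot [t\le k]$. Because every admissible $\phi$ lies in the discrete grid $\{j\cdot 2^{-r}: 0\le j<2^r\}$, the threshold at $\phi=1/2$ has a guaranteed margin of $2^{-r}$, so the bit can be extracted \emph{exactly} by a steep ReLU ramp such as $b_\ell = \relu(2^{r}(2\phi-1)+1) - \relu(2^{r}(2\phi-1))$; and since $t,k\in\N$, the gate $[t\le k]$ is likewise exact via a ReLU ramp of slope $1$. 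The shift in~(ii) is then just the linear expression $2\phi - b_\ell$, and the counter increment in~(iii) is the linear map $t\mapsto t+1$ carried through the residual skip.

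The main obstacle I anticipate is the gated accumulation in~(iv), which is nominally a bilinear term $b_\ell\cdot[t\le k]$ and therefore cannot be produced by a single linear combination of ReLUs. The key observation is that both factors are Boolean, so $b_\ell\cdot[t\le k] = \relu(b_\ell + [t\le k] - 1)$; applied to the outputs of the first hidden layer this collapses the product into one additional ReLU in a second hidden layer, which is why depth $2$ rather than $1$ is needed. Budgeting two units for each of the four step functions fills the first hidden layer to width~$8$, while the second layer requires only one extra ReLU for the Boolean product together with identity carries for the other coordinates propagated through the residual skip. Iterating the block $r$ times and then reading off the $s$ coordinate yields $s^{(r)} = \sum_{i=1}^{k}\theta_i$, establishing the identity for each $k=1,\dotsc,r$.
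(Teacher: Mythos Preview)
Your plan is correct and follows the same bit-extraction strategy that \citet{zhang23} use and that the present paper recapitulates when proving its depth-$1$ variant (Lemma~\ref{lemma:bitextraction}): keep the binary tail $\phi$, peel off the leading bit $b$ via a ReLU ramp exploiting the $2^{-r}$ grid margin, shift $\phi\leftarrow 2\phi-b$, and realize the gated increment through the Boolean identity $xy=\relu(x+y-1)$, which is precisely what forces the second hidden layer. One small correction: the statement composes $\ff^{\circ r}$ without an additive identity, so there is no residual skip available to transport $t,k,\phi,s$; you must instead route them through the first hidden layer as identity units $\relu(x)=x$ (all four are nonnegative), and it is these four carries together with the four ramp units for the two threshold functions that fill the width-$8$ budget in layer one, not ``four step functions.''
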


We found that the \textit{loop unrolling} technique allows us to reduce the number of layers from $2$ to $1$ by replacing $x^{k+1} = \ReLU(\ReLU(x'^k))$ with $(x^{k+1}, x'^k) = \ReLU(x'^k, x^k)$.
Although our method makes the weights dependent on $\theta_1, \theta_2, \dots, \theta_{r}$, this does not present an issue for our construction in function approximation. Specifically, ${\theta_1, \theta_2, \dots, \theta_{r}}$ is fixed for each target function, and the role of the network is to learn the weights tailored to that single function.

\begin{lemma}\label{lemma:bitextraction}
Given $\theta_1,\theta_2,\dotsc,\theta_{r}\in \{0,1\}$ for some $r\in \N^+$, there exists a feed-forward layer $\ff: \R^6 \to \R^6$ of width size $9$ and two affine linear maps $\mathcal{L}_1:\R\to\R^6$ and $\mathcal{L}_2:\R^6\to \R$ \st
\begin{equation}
\mathcal{L}_2\circ {(\mathrm{id} + \ff)}^{\circ r}\circ \mathcal{L}_1\big(k) = \sum_{l=1}^{k}\theta_l \quad \text{for } k=1,2,\dotsc,r,
\end{equation}
where the bit complexity is bounded by $\mathcal{O}(r)$. 
\end{lemma}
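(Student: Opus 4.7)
The plan is to hardcode the bit sequence $\theta_1,\dots,\theta_r$ into the bias of $\mathcal{L}_1$ as the single real number $\theta = \sum_{\ell=1}^{r} \theta_\ell 2^{-\ell} \in [0,1)$, and to use the looped feed-forward layer to repeat a ``shift-and-add-if-in-range'' operation: extract the leading bit of the current shifted stream, gate it by the indicator $\mathbf{1}[t \leq k]$ (driven by the input $k$), and add the gated bit to a running accumulator. After $r$ iterations the accumulator equals $\sum_{\ell=1}^{k}\theta_\ell$, which $\mathcal{L}_2$ simply reads off.

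Concretely, I design the state vector $x^{(t)} \in \R^6$ to carry: the shifted bit stream $\phi^{(t)}$ (with $\phi^{(1)} = \theta$ and $\phi^{(t+1)} = 2\phi^{(t)} - \theta_t$), the pre-extracted current bit $\beta^{(t)} = \theta_t$, the gate $g^{(t)} = \mathbf{1}[t \leq k]$, a countdown $\tau^{(t)} = k - t + 1$, the running sum $s^{(t)}$, and a constant $1$ channel used for biasing. The feed-forward block $\ff$ then performs, in a single layer, (a) the step-function extraction $\beta^{(t+1)} = \mathrm{step}_\epsilon\bigl(2\phi^{(t)} - \beta^{(t)} - 1/2\bigr)$ via two ReLUs (\Cref{lemma:step}), (b) the gate update $g^{(t+1)} = \mathrm{step}_\epsilon(\tau^{(t)} - 3/2)$ via two more ReLUs, (c) the AND $\beta^{(t)} g^{(t)} = \relu(\beta^{(t)} + g^{(t)} - 1)$ as one ReLU that is added into $s$, and (d) the linear updates on $\phi$ and $\tau$, each realized by a single ``shifted identity'' ReLU $\relu(u+M)-M = u$ on the relevant range. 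The affine lift is $\mathcal{L}_1(k) = (\theta,\theta_1,1,k,0,1)^\top$ and $\mathcal{L}_2$ picks out the $s$-coordinate.

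The main obstacle is collapsing a naturally depth-two computation into a single feed-forward layer — this is the ``loop unrolling'' trick advertised in the surrounding text. A naive implementation would use one ReLU layer to freshly produce $\theta_t$ and $\mathbf{1}[t \leq k]$ from $\phi$ and $\tau$, and a second layer to combine them as a product before adding to $s$. I avoid the second layer by promoting $\beta^{(t)}$ and $g^{(t)}$ to first-class state variables: at iteration $t$, the AND of two already-stored quantities is a single ReLU of a linear form, while the same feed-forward layer concurrently prepares $\beta^{(t+1)}$ and $g^{(t+1)}$ via their own step functions for use at iteration $t+1$. Verifying that the total number of ReLUs fits within $9$ and that the ``linear'' updates on $\phi$, $\tau$, and $s$ can all be realized via the residual connection together with a few shifted-identity ReLUs is the main piece of bookkeeping.

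For the bit-complexity claim, representing $\theta$ in $\mathcal{L}_1$ requires $r$ bits, and the step-function margin $\epsilon$ must be strictly smaller than the minimum gap $2^{-r}$ between distinct dyadic values of $2\phi^{(t)} - 1/2$, forcing entries of magnitude $\Theta(2^r)$ in $W_1$; both contributions are $\mathcal{O}(r)$ bits. The remaining quantities ($k$, $\tau^{(t)}$, $s^{(t)}$) are bounded integers in $\{0,\dots,r\}$, so their bit cost is only $\mathcal{O}(\log r)$ and does not dominate.
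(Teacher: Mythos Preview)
Your proposal is correct and follows essentially the same approach as the paper: encode the bits as the dyadic number $\theta=\sum_\ell \theta_\ell 2^{-\ell}$, extract the leading bit via the two-ReLU step function, gate by $\mathbf{1}[t\le k]$ via another step function on a countdown, combine them with the Boolean AND $\relu(\beta+g-1)$, and use the loop-unrolling trick of storing the already-extracted bit and gate so that the AND and the next extractions live in one ReLU layer. The only cosmetic difference is the state layout---the paper stores both $\beta_l$ and $\beta_{l+1}$ explicitly, whereas you carry a constant-$1$ channel (redundant given the biases $b_1,b_2$) and reconstruct $\phi^{(t+1)}=2\phi^{(t)}-\beta^{(t)}$ on the fly; either choice fits in six coordinates and nine hidden units, and your bit-complexity accounting matches the paper's.
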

\begin{proof}
From \cref{lemma:step}, we have a function $\calT_\epsilon(x)$, for $\epsilon > 0$, defined by
\begin{equation}
    \calT_\epsilon(x) = \relu\big(\tfrac{x}{\epsilon}+1\big)-\relu\big(\tfrac{x}{\epsilon}\big),
\end{equation}
as shown in \cref{fig:step}, and it satisfies
\begin{equation}
    \calT_\epsilon(x) = 
    \begin{cases}
        1 & \text{if } x \geq 0,\\
        0 & \text{if } x \leq 0 - \epsilon.
    \end{cases}
\end{equation}
\begin{figure}[ht]
\begin{center}
    \includegraphics[width=0.8\linewidth]{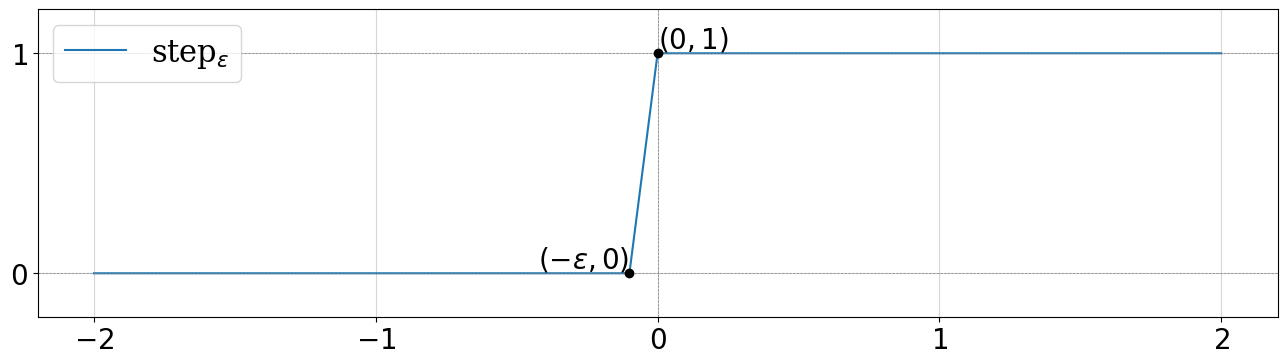}
\end{center}
\caption{An illustration of $\calT_\epsilon(x)$.}\label{fig:step}
\end{figure}

Define $\beta_l$ for $l=0,1,\dotsc,r$ as
\begin{equation}
	\beta_l=\bin 0.\theta_{l}\cdots \theta_r,
\end{equation}
where $\bin 0.\theta_l\cdots \theta_r$ denote the binary representation of $\theta= \sum_{i=l}^{r}\theta_i 2^{-i}$ and $\beta_0 \coloneq 0$. If we set $\epsilon < 2^{-r}$, it follows that
\begin{align}
\theta_l & = \calT_\epsilon(\bin 0.\theta_{l}\cdots \theta_r-\tfrac{1}{2}) \\
 & = \calT_\epsilon(\beta_l-\tfrac{1}{2}),
\end{align}
implying, for $l=0,1,\dotsc,r-1$,
\begin{align}
\beta_{l+1} & = 2\beta_l - \theta_l \\ 
&= 2\beta_l- \calT_\epsilon\big(\beta_l-\tfrac{1}{2}\big).
\end{align}
For all \( l = 1, \dotsc, r \), since the product \( xy \) satisfies  
\begin{align}
xy &= \max \{ 0, x + y - 1 \}\\ &= \relu(x + y - 1),
\end{align}
for \( x, y \in \{0,1\} \), it follows that
\begin{align}
    \sum_{l=1}^k \theta_l &= 
    \sum_{l=1}^k \theta_l + \sum_{l=k+1}^r 0 \\
    &= \sum_{l=1}^{r}\theta_l\cdot\calT_\epsilon(k-l) \\
    &= \sum_{l=1}^{r}\relu\Big(\theta_l+\calT_\epsilon(k-l)-1\Big)\\
    &= \sum_{l=1}^{r}\relu\Big(\calT_\epsilon(\beta_l-\tfrac{1}{2})+\calT_\epsilon(k-l)-1\Big).
\end{align}
To compute the right-hand side, we require two nested $\ReLU$ functions. By employing loop unrolling, we precompute $\mathcal{T}(\beta_l - \frac{1}{2})$ and $\mathcal{T}(k-l)$ in the previous iterations, reducing the requirement to a single layer.  

Define $\boldsymbol{\xi}_l$ for $l = 0, 1, \dotsc, r-1$ as  
\begin{equation}
\begin{split}
  \boldsymbol\xi_l &= {\Big(k - l,\  \beta_{l},\  \beta_{l+1},\  \calT_\epsilon(\beta_l - \frac{1}{2}),\  \calT_\epsilon(k-l),\  \mathrm{sum}(l)\Big)}^\top, \\
  &\text{where} \quad \mathrm{sum}(l) \coloneq \sum_{i=1}^{l}\relu\Big(\calT_\epsilon(\beta_i-\tfrac{1}{2})+\calT_\epsilon(k-i)-1\Big).
\end{split}
\end{equation}
Note that we have $\beta_{l+1}$ in the $l$-th loop to precompute $\calT_\epsilon(\beta_{l+1} - \frac{1}{2})$ and $\calT_\epsilon \big((k-(l+1)\big)$ for the $(l+1)$-th loop.

Define $\ff:\R^6\to\R^6$ with a width size of $9$ such that
\begin{align}
&\big(\mathrm{id} + \ff\big) (\boldsymbol\xi_l)
= \big(\mathrm{id} + \ff\big)
\left(
\left[
\begin{array}{c}
k - l  \\
\beta_{l} \\
\beta_{l+1}  \\
\calT_\epsilon(\beta_{l} - \frac{1}{2})  \\
\calT_\epsilon (k-l)  \\
\mathrm{sum}(l) \\
\end{array}
\right]
\right)\\
&=
\left[
\begin{array}{c}
k - l  \\
\beta_{l} \\
\beta_{l+1}  \\
\calT_\epsilon(\beta_{l} - \frac{1}{2})  \\
\calT_\epsilon (k-l)  \\
\mathrm{sum}(l) \\
\end{array}
\right] + 
\begin{bmatrix}
0 & 0 & 0 & 0 & 0 & 0& 0 & 0 & 0 \\
1 & -1 & 0 & 0 & 0 & 0& 0 & 0 & 0 \\
0 & 0 & 1 & -1 & 1 & 0& 0 & 0 & 0 \\
0 & -1 & 0 & 1 & -1 & -1& 0 & 0 & 0 \\
0 & 0 & 0 & 0 & 0 & -1 & -1 & 1 & 0 \\
0 & 0 & 0 & 0 & 0 & 0& 0 & 0 & 1 \\
\end{bmatrix} \notag \\
&\relu \left(
\begin{bmatrix}
0 & 1 & 0 & 0 & 0 & 0 \\
0 & 0 & 0 & 1 & 0 & 0 \\
0 & 0 & 1 & 0 & 0 & 0 \\
0 & 0 & 1/\epsilon & 0 & 0 & 0 \\
0 & 0 & 1/\epsilon & 0 & 0 & 0 \\
0 & 0 & 0 & 0 & 1 & 0 \\
1/\epsilon & 0 & 0 & 0 & 0 & 0 \\
1/\epsilon & 0 & 0 & 0 & 0 & 0 \\
0 & 0 & 0 & 1 & 1 & 0 \\
\end{bmatrix}
\left[\begin{array}{c}
k - l  \\
\beta_{l} \\
\beta_{l+1}  \\
\calT_\epsilon(\beta_{l} - \frac{1}{2})  \\
\calT_\epsilon (k-l)  \\
\mathrm{sum}(l) \\
\end{array}\right]
+\left[\begin{array}{c}
0  \\
0  \\
0  \\
-1/(2\epsilon) + 1 \\
-1/(2\epsilon)  \\
0 \\
-1/\epsilon + 1\\
-1/\epsilon  \\
-1  \\
\end{array}\right]
\right)
+
\left[\begin{array}{c}
-1  \\
0  \\
0  \\
0  \\
0  \\
0  \\
\end{array}
\right]
\\
&=
\left[\begin{array}{c}
k - l  \\
\beta_{l} \\
\beta_{l+1}  \\
\calT_\epsilon(\beta_{l} - \frac{1}{2})  \\
\calT_\epsilon (k-l)  \\
\mathrm{sum}(l) \\
\end{array}\right]
+
\left[
\begin{array}{ccc}
-1 \\
\relu(\beta_{l}) - \relu\big(\calT_\epsilon(\beta_{l} - \frac{1}{2})\big) \\
\relu(\beta_{l+1}) - \big(\relu(\tfrac{\beta_{l+1}-1/2}{\epsilon}+1) -\relu(\tfrac{\beta_{l+1}-1/2}{\epsilon})\big) \\
-\relu\big(\calT_\epsilon(\beta_{l} - \frac{1}{2})\big) + \relu(\tfrac{\beta_{l+1}-1/2}{\epsilon}+1) -\relu(\tfrac{\beta_{l+1}-1/2}{\epsilon}) \\
-\relu\big(\calT_\epsilon(k-l)\big) + \relu(\tfrac{k-(l+1)}{\epsilon}+1) -\relu(\tfrac{k-(l+1)}{\epsilon}) \\
\relu\Big( \calT_\epsilon(k-l) + \calT_\epsilon(\beta_l - \frac{1}{2}) - 1\Big)\\
\end{array}
\right]\\
&=
\left[
\begin{array}{ccc}
k - (l+1)  \\
2\beta_{l} - \calT_\epsilon(\beta_{l} - \frac{1}{2}) \\
2\beta_{l+1} - \calT_\epsilon(\beta_{l+1} - \frac{1}{2})  \\
\calT_\epsilon(\beta_{l+1} - \frac{1}{2})  \\
\calT_\epsilon \big((k-(l+1)\big)  \\
\mathrm{sum}(l+1) \\
\end{array}
\right] =
\left[
\begin{array}{ccc}
k - (l+1)  \\
\beta_{l+1} \\
\beta_{l+2}  \\
\calT_\epsilon(\beta_{l+1} - \frac{1}{2})  \\
\calT_\epsilon \big((k-(l+1)\big)  \\
\mathrm{sum}(l+1) \\
\end{array}
\right] = \boldsymbol\xi_{l+1},
\end{align}
Define $\mathcal{L}_1:\R \to\R^6$ and $\calL_2:\R^6\to\R$ via 
\begin{equation}
    \mathcal{L}_1(k)\coloneqq{(k, \beta_{0},\beta_{1},0,0,0)}^\top=\boldsymbol\xi_0, \quad \mathcal{L}_2(x_1,x_2, x_3, x_4, x_5, x_6)\coloneqq x_6,
\end{equation}
respectively. The lemma holds for this construction.
%Note $\beta_1$ is defined by given $\theta_1,\theta_2,\dotsc,\theta_{r}$
\end{proof}
Then, we prove \Cref{lemma:memorize} with \Cref{lemma:bitextraction}.

\memorize*

\begin{proof}%[Proof of Lemma~\ref{lemma:memorize}]

We prove this for the case where $d=1$, considering $y_k \in \R$ for $k=0,1, \dotsc, K-1$. Define
\begin{equation}
    a_{k}= \big\lfloor \tfrac{y_{k}}{\varepsilon} \big\rfloor 
    \quad \tn{for $k=0,1,\dotsc,K-1$},
\end{equation}
where $\lfloor x\rfloor=\max \{n: n\le x,\ n\in \Z\}$
and set
\begin{equation}
    b_{k}=a_{k}-a_{k-1}\quad \tn{for $k=1,2,\dotsc,K-1$.}
\end{equation}
Since $b_{k}\in \{-1,0,1\}$, there exist $c_{k}\in\{0,1\}$ and $d_{k}\in\{0,1\}$ such that 
\begin{equation}
    b_{k}=c_{k}-d_{k}\quad \tn{ for $k=1,2,\dotsc,K-1$.}
\end{equation}
Thus, we have
\begin{equation}
    \begin{split}
        a_{k}=a_0+\sum_{i=1}^{ k }c_{i}
        -\sum_{i=1}^{ k }d_{i}\quad \text{for any } k \in \{1,2,\dotsc,K-1\}
    \end{split}
\end{equation}

From \Cref{lemma:bitextraction}, there exist $\ff^{(c)}, \ff^{(d)}: \R^6 \to \R^6$ of width
size $9$ and affine linear maps $\mathcal{L}'_2: \R^6 \to \R$ and $\mathcal{L}^{(c)}_1, \mathcal{L}^{(d)}_1: \R \to \R^6$ \st
\begin{align}
    \mathcal{L}'_2\circ {(\mathrm{id} + \ff^{(c)})}^{\circ (m-1)}\circ \mathcal{L}^{(c)}_1\big(k) = \sum_{i=1}^{k}c_i, \quad
    \mathcal{L}'_2\circ {(\mathrm{id} + \ff^{(d)})}^{\circ (m-1)}\circ \mathcal{L}^{(d)}_1\big(k) = \sum_{i=1}^{k}d_i,
\end{align}
for $k=0,1,\dotsc,K-1$. Then, define $\ff: \R^{12}\to\R^{12}$ of width size $18$, for $\vx, \vy \in \R^6$,
\begin{equation}
    \ff\left( \left[
    \begin{array}{ccc}
        \vx  \\
        \vy  \\
    \end{array}\right] \right)\coloneqq 
    \left[
    \begin{array}{ccc}
        \ff^{(c)}(\vx)  \\
        \ff^{(d)}(\vy)  \\
    \end{array}\right].
\end{equation}
Define $\mathcal{L}_1:\R\to\R^{12}$ and $\mathcal{L}_2:\R^{12}\to\R$ as
\begin{equation}
\mathcal{L}_1(x)\coloneqq
 \left[
    \begin{array}{ccc}
        \mathcal{L}^{(c)}_1(x)  \\
        \mathcal{L}^{(d)}_1(x)  \\
    \end{array}\right]
, \quad \mathcal{L}_2\left( \left[
    \begin{array}{ccc}
        \vx  \\
        \vy  \\
    \end{array}\right] \right)
    \coloneqq \eps \big(a_0+\mathcal{L}'_2(\vx)-\mathcal{L}'_2(\vy)\big).
\end{equation}
We can confirm that 
\begin{align}
    &\calL_2\circ (\mathrm{id} + \mathrm{FF})^{\circ(K-1)} \circ \calL_1(k)\\
    &= \calL_2\circ (\mathrm{id} + \mathrm{FF})^{\circ(K-1)} \left( \left[
    \begin{array}{ccc}
        \mathcal{L}^{(c)}_1(k)  \\
        \mathcal{L}^{(d)}_1(k)  \\
    \end{array}\right] \right)\\
    &= \calL_2 \left( \left[
    \begin{array}{ccc}
        (\mathrm{id} + \mathrm{FF}^{(c)})^{\circ(K-1)} \circ \mathcal{L}^{(c)}_1(k)  \\
        (\mathrm{id} + \mathrm{FF}^{(d)})^{\circ(K-1)} \circ\mathcal{L}^{(d)}_1(k)  \\
    \end{array}\right] \right) \\
    &= \eps \big(a_0 +\sum_{i=1}^{k}c_i - \sum_{i=1}^{k}d_i\big)= \eps a_k.
\end{align}
Thus we have
\begin{equation}
    |\calL_2\circ (\mathrm{id} + \mathrm{FF})^{\circ(K-1)} \circ \calL_1(k) - y_k| = |\eps a_{k} - y_{k}| \leq \varepsilon.
\end{equation}
For $d$-dimensional inputs, we need $d$-times more parameters.
\end{proof}

\newpage

\section{Role of Time-dependent Scaling Parameters}
\label{appendix:time_dependent}

We show that time-dependent scaling parameters overcome the limitations inherent to the looped architecture and eliminate the dependence of the modulus of continuity. We use the architecture defined in \Cref{sec:lim} as:
\begin{equation}
    \ff(\vx) \to \boldsymbol\eta(t) \odot \ff(\vx) \quad \text{for the $t$-th loops},
\end{equation}
The following lemma demonstrates that time-dependent scaling parameters can exactly map indices to output vectors.

\timedependentmemorize*

\begin{proof}
We consider the case when $d=1$, where $y_k \in \R$ for $k=0, 1, \dotsc, K-1$. We update $y_k$ as follows:
\begin{equation}
    y_k \to y_k + \epsilon,
\end{equation}
where $\epsilon$ is chosen such that none of the $y_l$ values are zero.

Next, we define $\eta(l) \in \R^4$ as:
\begin{equation}
    \eta(l) \coloneq (0, 1, \frac{y_{l}}{y_{l-1}} - 1, \frac{y_{l}}{y_{l-1}})^\top \quad \text{for } l = 1, 2, \dotsc, K-1.
\end{equation}

By \Cref{lemma:impulse}, we have, $x \in [-M,M]$ and $t \in \N$,
\begin{align}
    \mathrm{impulse}_0(x,t) &= \relu\big(x+2M(t+1/2)\big)-2M\relu(t+1/2) \notag \\ & \qquad -\relu\big(x+2M(t-1/2)\big)+2M\relu(t-1/2)\\
    &=\begin{cases}
        x \quad & \tn{if}\  t = 0,  \\
        0 \quad  & \tn{otherwise},
    \end{cases}
\end{align}
where $M>\max_{k \in \{0,1,\dotsc,K-1\}} y_k$.

Let \(k \in \{0,1,\dotsc,K-1\}\) be the input index that specifies which \(y_k\) to extract.
Define 
\begin{equation}
     s(l) \coloneq \sum_{i=0}^{l}\mathrm{impulse}_0\big(y_{i}, k - i\big),
\end{equation}
for $l=0,1,\dotsc,K-1$, which satisfies
\begin{equation}
    s(K-1) = y_k.
\end{equation}

Define $\boldsymbol\xi_l \in \R^4$ via
\begin{equation}
  \boldsymbol\xi_l \coloneq \Big(k,\  k-l-1,\  y_l,\  s(l))\Big)^\top.
\end{equation}
for $l=0,1,\dotsc,K-1$.

Then, define $\ff:\R^4\to \R^4$ of width size $q=6$ via:
\begin{align}
    & (\mathrm{id} + \eta(l)\odot\mathrm{FF})(\boldsymbol\xi_{l-1})
    =
    \boldsymbol\xi_{l-1}+ \notag \\
    &\eta(l)\odot\left(\begin{bmatrix}
    0 & 0 & 0 & 0 & 0 & 0 \\
    0 & 0 & 0 & 0 & 0 & 0 \\
    1 & -1 & 0 & 0 & 0 & 0 \\
    0 & 0 & 1 & -1 & -2M & 2M \\
    \end{bmatrix} \relu \left(
    \begin{bmatrix}
    0 & 0 & 1 & 0 \\
    0 & 0 & -1 & 0 \\
    0 & 2M & 1 & 0 \\
    0 & 2M & 1 & 0 \\
    0 & 1 & 0 & 0 \\
    0 & 1 & 0 & 0 \\
    \end{bmatrix}
   \boldsymbol\xi_{l-1}
    + \left[ 
    \begin{array}{ccc}
        0 \\
        -1 \\
        M \\
        -M \\
        1/2 \\
        -1/2 \\
    \end{array} \right]    
    \right)
    + \left[ 
    \begin{array}{ccc}
        0 \\
        -1 \\
        0 \\
        0 \\
    \end{array} \right] \right)
    \\
    & = \boldsymbol\xi_{l-1}
    + \left[\begin{array}{ccc}
        0 \\
        1 \\
        \tfrac{y_{l}}{y_{l-1}} - 1 \\
        \tfrac{y_{l}}{y_{l-1}} \\ 
    \end{array}\right]
    \odot\left[\begin{array}{ccc}
    0 \\
    -1 \\
    \relu(y_{l-1}) - \relu(-y_{l-1}) \\
    \Bigg(\relu\big(y_{l-1}+2M((k-l)+1/2)\big)\\
    -2M\relu((k-l)+1/2) \\ 
    \relax % Break line due to length
    -\relu\big(y_{l-1}+2M(k-l-1/2)\big) + 2M\relu(k-l-1/2)\Bigg) \\
\end{array}\right] \\
    & = \left[\begin{array}{ccc}
        k \\
        k-l \\
        y_{l-1} \\
        s(l-1) \\
    \end{array}\right]
    + \left[\begin{array}{ccc}
        0 \\
        1 \\
        \tfrac{y_{l}}{y_{l-1}} - 1 \\
        \tfrac{y_{l}}{y_{l-1}} \\ 
    \end{array}\right]
    \odot\left[\begin{array}{ccc}
        0 \\
        -1 \\
        y_{l-1} \\
        \mathrm{impulse}_0\big(y_{(l-1)}, k-l)\big) \\ 
    \end{array}\right] \\
    & = \left[\begin{array}{ccc}
        k \\
        k-l-1 \\
        y_{l} \\
        s(l) \\
    \end{array}\right] = \boldsymbol\xi_l.
\end{align}
for $l=1,2,\dotsc,K-1$. Thus we have
\begin{equation}
    (\mathrm{id} + \eta(K-1)\odot\mathrm{FF})\circ \cdots \circ (\mathrm{id} + \eta(1)\odot\mathrm{FF}) (\boldsymbol\xi_0) = \boldsymbol\xi_{K-1}
\end{equation}

Then, define two affine linear maps $\mathcal{L}_1 : \mathbb{R} \to \mathbb{R}^{4}$ and $\mathcal{L}_2 : \mathbb{R}^{4} \to \mathbb{R}$ by
\begin{equation}
    \mathcal{L}_1(x) \coloneqq (k, k, y_0, 0), \quad
    \mathcal{L}_2(x_1, x_2, x_3, x_4) \coloneqq x_4-\epsilon.
\end{equation}

We can extend this to $d$-dimensional input by using $d$ times more parameters, by applying the above to each dimension
\end{proof}

\section{Details of Experiments}
\label{app:details_of_exp}
This appendix section provides additional details on the experiments for each task.

\subsection{Reasoning Tasks}

\subsubsection{Problem Settings}\label{app:setting}

\textbf{Longest Common Subsequence (LCS)} is the longest common to a given set of sequences. We use problems with input lengths of $60$ and $100$. Two sequences are sampled uniformly from the alphabet.
%For example, a sample in the dataset of length $60$ looks like
%\begin{center}
%\texttt{g q p b b g q b p h b v i b q m r w c v c v b v w b v g r v q h g m b r w c r c h i h c c q p m w r w b p g h p w g p w | i i a p i i i p r i p x i c r b f p b x p i x c c p f r x y i a p c v b p r c r v v i c y p x f a c f p p b i i a r a c <sep> 18}.
%\end{center}

\textbf{Edit Distance (ED) problem}, also known as Levenshtein distance, is to find the minimum cost required to change one sequence into the other.
We adopted the problem setting and data generation approach from~\citet{feng2023towards}, but applied larger input lengths. The costs for insertion, deletion, and replacement were set to $2$, $2$, and $3$, respectively. 
They generate instances of the edit distance problem as shown in Algorithm \ref{alg:ED}. The first string is randomly selected, while the second is generated in two ways: (1) a random string yielding a large edit distance, and (2) a corrupted copy of the first string, resulting in a small edit distance. %For example, a sample in the dataset of length $60$ looks like
%\begin{center}
%\texttt{k y s i s x x x y s s l o o o k o s k o o s l y x k x s y s y x s o s l y k o o l s k x x y l y i y o s y o y x i i k s | k l l l o y l y s l k l x i i o k k y o i x s y o s i k x l l x i y k o l y o o y x o l s x x l i y l i o s i i i l l y <sep> 105}.
%\end{center}
%
\begin{algorithm}[ht]
\caption{ED Data Generation from \citet{feng2023towards}}
\label{alg:ED}
\begin{algorithmic}[1]
%\SetKwInOut{Input}{Input}\SetKwInOut{Output}{Output}
%\SetKwRepeat{Do}{do}{while}
\STATE {\bfseries Input:} Length of the First String $n$
\STATE {\bfseries Input:} Alphabet $V=\{a,b...z \}$
\STATE {\bfseries Output:} Sequence $s_1$, $s_2$
Sample $t$ uniformly from $\{3,4...10\}$ \;
$T$ $\leftarrow$ Sample $t$ letters from $V$ \;
$s_1$ $\leftarrow$ Sample $n$ letters uniformly from $T$ \;
Sample $p$ uniformly from $[0,1]$ \; 
\IF{$p < 0.4$}
\STATE Sample $l$ uniformly from $\{n-3,n-2,...,n+2\}$\
\STATE $s_2$ $\leftarrow$ Sample $l$ letters uniformly from $T$
\ELSE
\WHILE{$len(s_2)$ not in $[n-3, n+2]$}
\STATE $s_2 \leftarrow s_1$ \; 
\FOR{$i\leftarrow 1$ to $n$}
%\Do{$len(s_2)$ not in $[n-3, n+2]$}{
%\FOR{$i\leftarrow 1$ \KwTo $n$}{
\STATE Sample $p$ uniformly from $\{0,1...len(s_2)-1\}$\;
\STATE Sample $l$ uniformly from $T$\;
\STATE Randomly conduct one of the followings: pop $s_2[p]$, substitute $s_2[p]$ with $l$, insert $l$ into $s_2[p]$\;
\ENDFOR
\ENDWHILE
\ENDIF
\end{algorithmic}
\end{algorithm}

\paragraph{Sudoku}
We use the dataset from \citet{radcliffe2020sudoku}, which contains over 3 million Sudoku puzzles. The puzzles are flattened, with $0$ representing blank grids. The input sequence is formatted as:
\begin{center}
\texttt{100503700603008090000009800010000000876100000000006000000000007080907604700060312}.
\end{center}

\paragraph{Countdown}
To generate the dataset, we randomly sampled the target and input numbers for each instance. Pairs that have no solution were excluded. For tokenization, we assigned unique tokens to each number and symbol. The target sequence is represented as:
\begin{center}
\texttt{58 84 48 62 96 62 - 58 = 4 48 / 4 = 12 84 + 12 = 96 }.
\end{center}
The model learns to predict the target sequence from the input:
\begin{center}
\texttt{58 84 48 62 96 0 0 0 0 0 0 0 0 0 0 0 0 0 0 0 }.
\end{center}

\subsubsection{Training and Test Accuracy Correlation for Edit Distance}\label{app:ED}
Given that our study focuses on function approximation capabilities, one might question whether it is appropriate to rely on test evaluations, which are influenced by generalization. Here, we confirm that there is a strong correlation between training and test results, validating this approach. \Cref{fig:edit} demonstrates a strong positive correlation between training and test accuracy, enabling the evaluation of approximation power through test accuracy.

\begin{figure}[ht]
\begin{center}
    \begin{minipage}{0.46\linewidth}
        \centering
        \includegraphics[width=\linewidth]{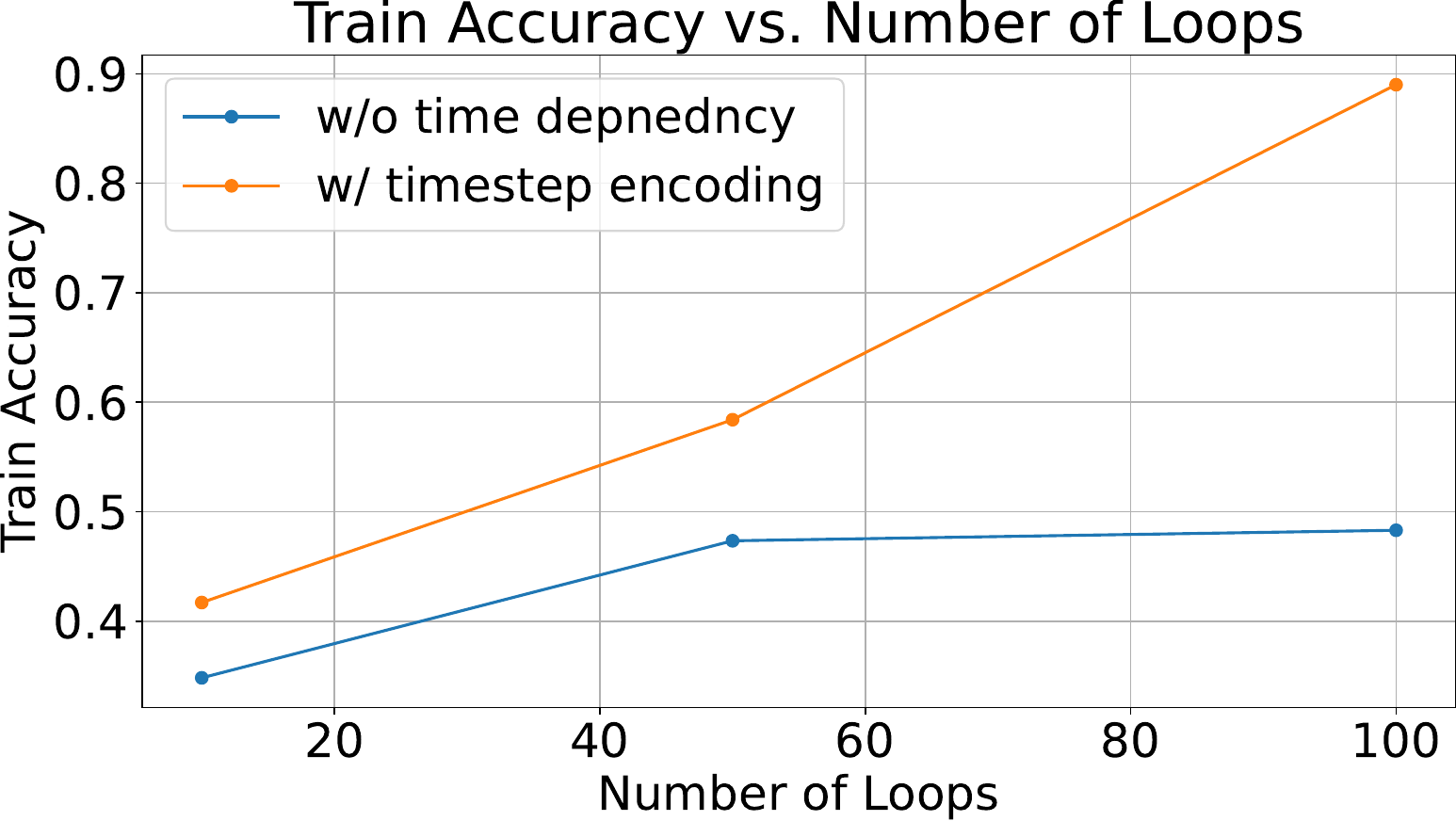}
    \end{minipage}
    \hspace{0.05\linewidth}
    \begin{minipage}{0.46\linewidth}
        \centering
        \includegraphics[width=\linewidth]{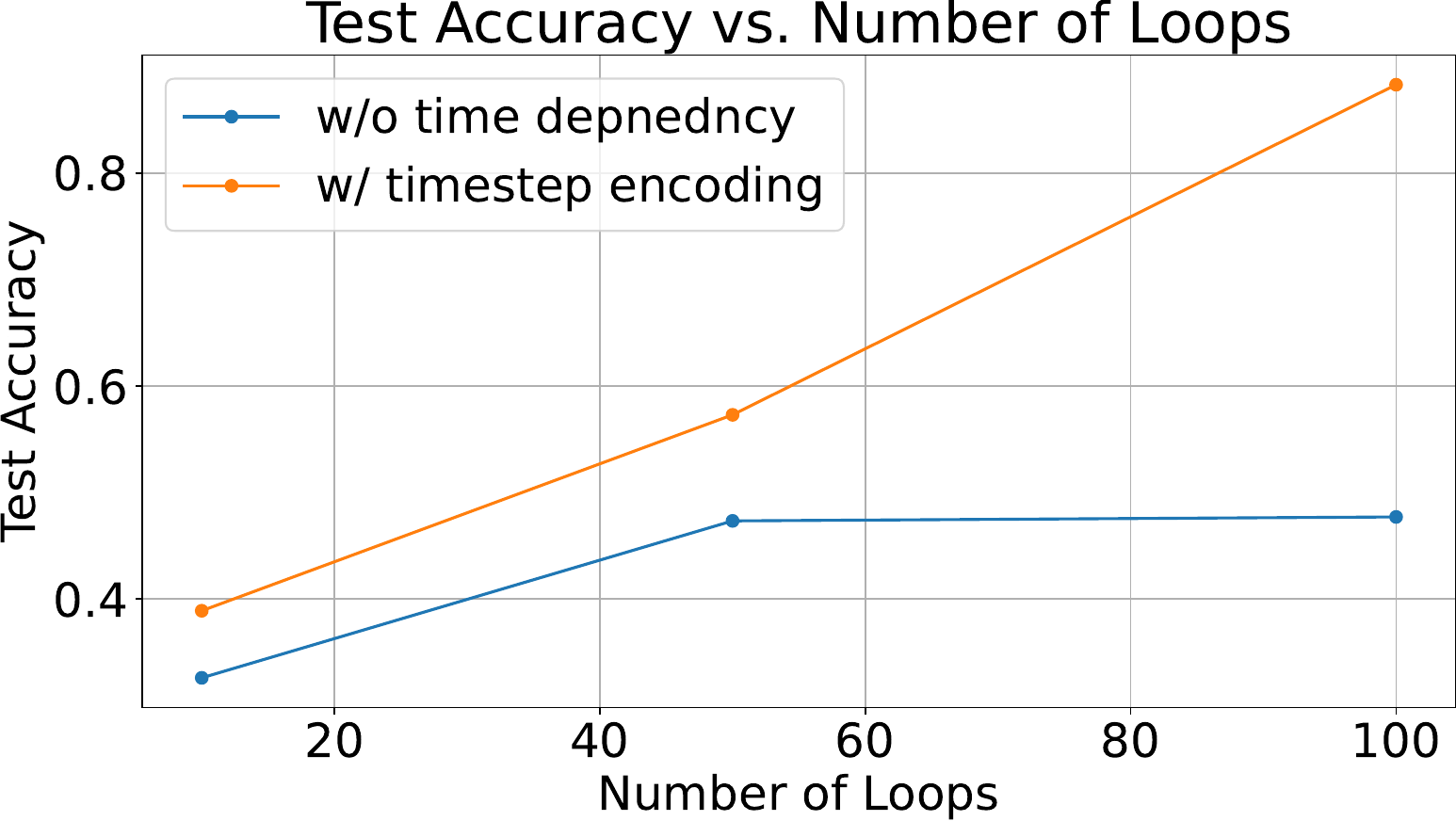}
    \end{minipage}
    \caption{Training and test accuracy for the edit distance task with a sequence length of $60$.}
    \label{fig:edit}
\end{center}
\end{figure}

\subsubsection{Model and Training Configuration}\label{app:conf}
We used Looped Transformers with $4$ attention heads and a $256$-dimensional embedding. The AdamW optimizer \citep{loshchilov2018fixing} was used with $\beta_1 = 0.9$, $\beta_2 = 0.999$, a weight decay of $0.01$, and a linear learning rate decay scheduler starting at $\text{lr} = 10^{-4}$ and ending at $0$, with $5$ warm-up steps. Training consisted of $50$ epochs for reasoning, $200$K steps for in-context learning, and $100$K iterations for language modeling, using a batch size of $64$. For time-dependent models, $\boldsymbol{\gamma}(t)$ and $\boldsymbol{\alpha}(t)$ were initialized as zero and one vectors, respectively, following \citet{peebles2023scalable}. The input embeddings are added at each loop iteration. Furthermore, for Sudoku and in-context learning tasks, the output of each intermediate loop is incorporated into the loss as~\cite{yang2023learning}.

\subsection{In-Context Learning}
\label{app:incontext}
We followed the setting of~\citet{yang2024looped}. The problem is to learn the function class from a given sequence composed of the pairs of input $\vx_i$ and output values $f(\vx_i)$. The input for model is \((\vx_1, f(\vx_1), \dots, \vx_k, f(\vx_k), \vx_{\text{test}})\), and model learns to predict \(f(\vx_{\text{test}})\).
The model is trained on $f(\vx_k)$ and its performance is evaluated on \(f(\vx_{\text{test}})\) using the squared error.

%\paragraph{Decision Tree.} 
We use depth-4 decision trees with 20-dimensional inputs. Each function in this class is represented by a full binary tree with 16 leaf nodes. Non-leaf nodes are associated with specific input coordinates, while leaf nodes are assigned target values. To evaluate f($\vx$), the tree is traversed from the root, moving to the right if the coordinate value is positive and to the left otherwise. Inputs and leaf node values are sampled from N(0,$\mI$), and the coordinates for non-leaf nodes are drawn uniformly at random.
%\paragraph{Training Configuration.} 
Our training setup follows the approach of~\citet{yang2024looped}. Following the curriculum training approach of~\citet{garg2022what,yang2024looped}, we progressively increase the task dimensionality from $5$ to $20$ in steps of $1$ every $5000$ steps, while the sequence length increases from $26$ to $101$ in increments of $5$ over the same interval.

\subsection{Language Modeling}
\label{app:langugae}
Tokenization is performed using byte-pair encoding, following GPT-2~\cite{Radford2019}. The Transformer model is based on the GPT-2 decoder architecture~\citep{Radford2019}. The baseline standard Transformer model consists of $6$ layers, $8$ attention heads, and an embedding size of $512$. The Looped Transformer has a $1$ layer, $12$ attention heads, and a hidden dimension of $768$, which were chosen to match the parameter size of the baseline. 
We initialize $\boldsymbol\gamma(t)$ as zero vectors and $\boldsymbol\alpha(t)$ as one vector for time-dependent models. The AdamW optimizer~\citep{loshchilov2018fixing} is used with $\beta_1 = 0.9$, $\beta_2 = 0.95$, a weight decay of $1 \times 10^{-4}$, and a learning rate schedule with $2000$ warmup steps. The maximum learning rate is set to $2 \times 10^{-4}$ and decays to $6 \times 10^{-5}$ using a cosine schedule.
Training is conducted for $100\text{k}$ iterations with a batch size of $48$ and a block size of $1024$.

\section{Disccusion}

\paragraph{Multiple Layers}

A natural question is whether our analysis, which focuses on single-layer Looped Transformers, can be extended to multi-layer architectures.
We restricted our analysis to a single layer in order to highlight a key strength of Looped Transformers—namely, their universality as function approximators even with just one layer.
A more specific question is whether multi-layer Looped Transformers can overcome potential limitations inherent to the single-layer design. While it is conceivable that deeper architectures but with fixed-depth feedforward layers may achieve better approximation accuracy, this remains an open question. The difficulty lies in the fact that such improvements are not captured in terms of asymptotic order but rather in constants, which are harder to analyze theoretically.
For instance, if we allow a logarithmic number of layers depending on the desired approximation precision, then even our current construction may overcome the limitations of the looped architecture. However, this deviates from our main objective, which is to characterize the approximation rate solely in terms of the number of loops.

\paragraph{Additional Experiments}
To assess the model's sensitivity to input continuity, we designed a perturbed version of the WikiText-103 dataset, where $10\%$ of the tokens were randomly replaced. We trained models with and without timestep encoding and evaluated both their memorization performance and continuity behavior. 
Continuity was measured by applying small perturbations to the input and quantifying the change in output embeddings. The model with timestep encoding showed improved memorization (cross-entropy loss reduced from $4.32$ to $4.18$) and a significant reduction in continuity coefficients (from $130.6$ to $21.5$). 
These results suggest that timestep encoding not only enhances stability under perturbations but also enables more faithful input-output mappings, thereby improving both robustness and learning efficiency.

% 計算方法。データセットごとに
% またその精度との関係

%compute Continuity Coefficients for dynamic programming dataset

%and compared to Longest Increasing Subsequence (LIS)

% 3種類を計算して長さが増えるほど

% 簡単に解けた動的計画法のタスクをもってきて実は連続計数がちっさかった的な？
%For wikitext-103, we compute Continuity Coefficients of trained models 
% あとはあれか、time stepありとなしの出力を比較するのも良いか。time step encがあった方がmodularが高い的な
% how to compute? 
% omputing methods

%%%%%%%%%%%%%%%%%%%%%%%%%%%%%%%%%%%%%%%%%%%%%%%%%%%%%%%%%%%%%%%%%%%%%%%%%%%%%%%
%%%%%%%%%%%%%%%%%%%%%%%%%%%%%%%%%%%%%%%%%%%%%%%%%%%%%%%%%%%%%%%%%%%%%%%%%%%%%%%

\end{document}